\documentclass{article}


\usepackage[preprint,nonatbib]{neurips_2023}
\usepackage[backref=true, useprefix,uniquename=false,uniquelist=false, maxcitenames=2,natbib=true,bibstyle=trad-abbrv, citestyle=authoryear]{biblatex}
\renewbibmacro*{pageref}{%
  \iflistundef{pageref}
    {}
    {\addperiod\addspace\printtext{%
       \ifnumgreater{\value{pageref}}{1}
         {\bibstring{backrefpages}\ppspace}
         {\bibstring{backrefpage}\ppspace}%
       \printlist[pageref][-\value{listtotal}]{pageref}}}}


\renewcommand{\multicitedelim}{\addsemicolon\space}
\DeclareCiteCommand{\cite}[\mkbibparens]{\usebibmacro{prenote}}{\usebibmacro{citeindex}\usebibmacro{cite}}{\multicitedelim}{\usebibmacro{postnote}}
\defbibenvironment{bibliography}
  {\list
     {}
     {\setlength{\leftmargin}{\bibhang}%
      \setlength{\itemindent}{-\leftmargin}%
      \setlength{\itemsep}{\bibitemsep}%
      \setlength{\parsep}{\bibparsep}}}
  {\endlist}
  {\item}
 \setlength{\bibparsep}{0.5\baselineskip}




\usepackage[utf8]{inputenc} 
\usepackage[T1]{fontenc}    
\usepackage{hyperref}       
\usepackage{url}            
\usepackage{booktabs}       
\usepackage{amsfonts}       
\usepackage{amssymb,mathrsfs}
\usepackage{nccmath}
\usepackage{upgreek}
\usepackage{xspace}
\usepackage{nicefrac}       
\usepackage{microtype}      
\usepackage{printlen}
\usepackage{layouts}
\usepackage[noend]{algpseudocode}
\usepackage{wrapfig}

\usepackage[table]{xcolor}
\usepackage{verbatim}
\usepackage{tikz}
\usepackage{layouts}
\usepackage{pgfplots}

\usepackage[compact]{titlesec}

\usepackage[commands]{MJH}
\usepackage{floatrow}
\floatsetup[table]{capposition=above}

\usetikzlibrary{arrows.meta}
\usetikzlibrary{calc}
\usepackage{graphicx}
\usepackage{color}
\usepackage{algorithm, algpseudocode}

\usepackage{stmaryrd}
\usepackage[inline]{enumitem}
\usepackage{url}

\usepackage{pgfplots}
\usepackage{bbm}
\usepackage{ifthen}
\usepackage{xargs}
\usepackage[textwidth=1.8cm]{todonotes}

\usepackage{amsthm}
\usepackage[capitalise]{cleveref}
\usepackage{scalefnt}
\newtheorem{theorem}{Theorem}[section]
\newtheorem{proposition}[theorem]{Proposition}
\newtheorem{lemma}[theorem]{Lemma}

\newtheorem{assumption}{Assumption}

\crefname{figure}{Fig.}{Figs.}
\Crefname{figure}{Fig.}{Figs.}
\crefname{appendix}{App.}{Apps.}
\Crefname{appendix}{App.}{Apps.}
\crefname{algorithm}{Alg.}{Algs.}
\Crefname{algorithm}{Alg.}{Algs.}
\crefname{section}{Sec.}{Secs.}
\Crefname{section}{Sec.}{Secs.}
\crefname{proposition}{Prop.}{Props.}
\Crefname{proposition}{Prop.}{Props.}

\usepackage{bm}
\usepackage{wrapfig}

\def\piinv{p_{\textup{ref}}}



\newcommand{\dive}{\mathrm{div}}

\newcommand{\X}{\boldsymbol{X}}
\newcommand{\y}{\boldsymbol{y}}
\def\X{\mathcal{X}}
\def\Y{\mathcal{Y}}
\def\x{{x}}

\def\y{{y}}






\def\bfY{\mathbf{Y}}

\def\bfX{\mathbf{X}}

\def\bfZ{\mathbf{Z}}

\def\bfZ{\mathbf{Z}}

\def\bfB{\mathbf{B}}





\def\rset{\mathbb{R}}

\def\nset{\mathbb{N}}



\def\rmd{\mathrm{d}}
\def\rmT{\mathrm{T}}

\def\rme{\mathrm{e}}

\def\rmc{\mathrm{C}}

\newcommand{\M}{\mathcal M}
\newcommand{\PE}{\mathbb{E}}




\newcommand{\tvnorm}[1]{\| #1 \|_{\mathrm{TV}}}

\newcommandx{\Vnorm}[2][1=V]{\| #2 \|_{#1}}
\newcommandx{\VnormEq}[2][1=V]{\left\| #2 \right\|_{#1}}
\newcommandx{\normLigne}[2][1=]{\ifthenelse{\equal{#1}{}}{\Vert #2 \Vert}{\Vert #2\Vert^{#1}}}




\newcommand{\expeLigne}[1]{\PE [ #1 ]}












\def\eqsp{\;}
\newcommand{\coint}[1]{\left[#1\right)}

\newcommand{\ooint}[1]{\left(#1\right)}
\newcommand{\ccint}[1]{\left[#1\right]}







\definecolor{C0}{HTML}{1f77b4}
\definecolor{C1}{HTML}{ff7f0e}
\definecolor{C2}{HTML}{2ca02c}
\definecolor{C3}{HTML}{d62728}
\colorlet{solidred}{red!60}
\colorlet{solidblue}{blue!60}
\colorlet{solidgreen}{green!60}
\colorlet{solidyellow}{yellow!60}
\colorlet{solidorange}{orange!60}
\colorlet{linkcolor}{blue!70!black}
\definecolor{pearDark}{HTML}{2980B9}

\definecolor{algcol}{HTML}{7383ab}
\newcommand{\red}[1]{\textcolor{algcol}{ #1}}



\newcommand{\method}{GeomNDP }
\newcommand{\fwd}{\bfY}
\newcommand{\bwd}{\bar{\bfY}}

\def\SE{\mathrm{SE}}

\def\SE3{\mathrm{SE}(3)}

\def\SO{\mathrm{SO}}
\def\SO3{\mathrm{SO}(3)}

\def\E{\mathrm{E}}
\def\Etwo{\mathrm{E}(2)}

\def\GL{\mathrm{GL}}
\def\GL3{\mathrm{GL}(3)}











\def\Id{\operatorname{Id}}

\usepackage{comment}
\usepackage{cancel}


\newcommand{\appendixhead}{
  \centerline{\textbf{\LARGE Supplementary to: }\vspace{0.15in}}
  \centerline{\textbf{\LARGE Geometric Neural Diffusion Processes}\vspace{0.25in}}
}
\let\origappendix\appendix 

\renewcommand\appendix{\pagenumbering{arabic}\origappendix}
\AtBeginEnvironment{appendices}{\crefalias{section}{appendix}}

\hypersetup{
  colorlinks,
  linkcolor={red!50!black},
  citecolor={blue!50!black},
  urlcolor={blue!80!black}
}
\usepackage[font={small}]{caption, subcaption}
\graphicspath{{../}}

\definecolor{commentcolor}{rgb}{0., 0.5, 0.}

\usepackage{multirow}
\usepackage{pifont}

\usepackage[bottom]{footmisc}

\title{Geometric Neural Diffusion Processes}

%

\author{%
Emile Mathieu\thanks{Equal contribution}\\
University Of Cambridge
\And
Vincent Dutordoir\footnotemark[1]\\
University Of Cambridge\\
Secondmind Labs
\And
Michael J. Hutchinson\footnotemark[1]\\
University Of Oxford
\AND
Valentin De Bortoli\\
Center for Science of Data, ENS Ulm
\And
Yee Whye Teh \\
University Of Oxford
\And
Richard E. Turner\\
University Of Cambridge,\\
Microsoft Research
}

\setlength{\textfloatsep}{10pt plus 1.0pt minus 2.0pt}

\bibliography{bibliography}
\graphicspath{{figs/},{../figs/}}

\begin{document}

\maketitle

\begin{abstract}
Denoising diffusion models have proven to be a flexible and effective paradigm for generative modelling. 
Their recent extension to infinite dimensional Euclidean spaces has allowed for the modelling of stochastic processes. 
However, many problems in the natural sciences incorporate symmetries and involve data living in non-Euclidean spaces.
In this work, we extend the framework of diffusion models 
to incorporate a series of geometric priors in infinite-dimension modelling.
We do so by a) constructing a noising process which admits, as limiting distribution, a geometric Gaussian process that transforms under the symmetry group of interest, and b) approximating the score with a neural network that is equivariant w.r.t.\ this group.
We show that with these conditions, the generative functional model admits the same symmetry.
We demonstrate scalability and capacity of the model, using a novel Langevin-based conditional sampler, to fit complex scalar and vector fields, with Euclidean and spherical codomain, on synthetic and real-world weather data.
%
\end{abstract}

\section{Introduction}
\label{sec:intro}

Traditional denoising diffusion models are defined on 
finite-dimension Euclidean spaces \citep{song2019generative,song2020score,ho2020denoising,nichol2021beatgans}.
Extensions have recently been developed for more exotic distributions, such as those supported on Riemannian manifolds  \citep{de2022riemannian,huang2022riemannian,lou2023reflected,fishman2023diffusion}, and on function spaces of the form $f:\R^n\to\R^d$~\citep{dutordoir2022Neural,kerrigan2022Diffusion,lim2023scorefunction,pidstrigach2023InfiniteDimensional,franzese2023continuous,bond2023infty} (i.e. stochastic processes).
In this work, we extend diffusion models to further deal with distributions over functions that incorporate non-Euclidean geometry in two different ways.
This investigation of geometry also naturally leads to the consideration of symmetries in these distributions, and as such we also present methods for incorporating these into diffusion models.

Firstly, we look at \textit{tensor fields}.
Tensor fields are geometric objects that assign to all points on some manifold a value that lives in some vector space $V$.
Roughly speaking, these are functions of the form $f: \c{M}\to V$.
These objects are central to the study of physics as they form a generic mathematical framework for modelling natural phenomena.
Common examples include the pressure of a fluid in motion as $f: \R^3 \to \R$, representing wind over the Earth's surface  as $f: \c{S}^2 \to \mathrm{T}\c{S}^2$, where $\mathrm{T}\c{S}^2$ is the \textit{tangent-space} of the sphere, or modelling the stress in a deformed object as $f: \text{Object} \to \mathrm{T}\R^3 \otimes \mathrm{T}\R^3$, where $\otimes$ is the \textit{tensor-product} of the tangent spaces.
Given the inherent symmetry in the laws of nature, these tensor fields can transform in a way that preserves these symmetries.
Any modelling of these laws may benefit from respecting these symmetries. 

Secondly, we look at fields with manifold codomain,
and in particular, at functions of the form $f: \R \to \c{M}$.
The challenge in dealing with manifold-valued output, arises from the lack of vector-space structure.
In applications, these functions typically appear when modelling processes indexed by time that take values on a manifold.
Examples include tracking the eye of cyclones moving on the surface of the Earth, or modelling the joint angles of a robot as it performs tasks.

The lack of data or noisy measurements in the physical process of interest motivates a \emph{probabilistic} treatment of such phenomena, in addition to its functional nature.
Arguably the most important framework for modelling stochastic processes are Gaussian Processes (GPs) ~\citep{rasmussen2003gaussian}, as they allow for exact or approximate posterior prediction \citep{titsias2009Variational,rahimi2007random,wilson2020efficiently}.
In particular, when choosing equivariant mean and kernel functions, GPs are invariant (i.e.~stationary) \citep{holderrieth2021equivariant, azangulov2022Stationary, azangulov2023Stationary}.
%
Their limited modelling capacity and the difficulty in designing complex, problem-specific kernels motivates the development of neural processes (NPs)~\cite{garnelo2018neural}, which learn to approximately model a conditional stochastic process directly from data.
NPs have been extended to model translation invariant (scalar) processes \citep{gordon2019Convolutional} and more generic $\mathrm{E}(n)$-invariant processes \citep{holderrieth2021equivariant}.
%
Yet, the Gaussian conditional assumption of standard NPs still limits their flexibility and prevents such models from fitting complex processes. Diffusion models provide a compelling alternative for significantly greater modelling flexibility. 
In this work, we develop geometric diffusion neural processes which incorporate geometrical prior knowledge into functional diffusion models.
Our contributions are three-fold:
\begin{enumerate*}[label=(\alph*)]
    \item We extend diffusion models to more generic function spaces (i.e.\ tensor fields, and functions $f: \X \to \c{M}$) by defining a suitable noising process.
    \item We incorporate group invariance of the distribution of the generative model by enforcing the covariance kernel and the score network to be group equivariant.
    \item We propose a novel Langevin dynamics scheme for efficient conditional sampling.
\end{enumerate*}




\section{Background}
\label{sec:background}







\textbf{Denoising diffusion models.}
We briefly recall here the key concepts behind diffusion models on $\rset^d$ and refer the readers to \cite{song2020score} for a more detailed introduction.
We consider a forward \emph{noising} process $(\fwd_t)_{t \geq 0}$ defined by the following Stochastic
Differential Equation (SDE)
\begin{equation}\label{eq:forward_SDE}
  \rmd \fwd_t = -\tfrac{1}{2} \fwd_t \rmd t + \rmd \bfB_t,\quad \fwd_0 \sim p_0 ,
\end{equation}
where $(\bfB_t)_{t \geq 0}$ is a $d$-dimensional Brownian motion and $p_0$ is the data distribution.
The  process $(\fwd_t)_{t \geq 0}$ is simply an Ornstein--Ulhenbeck (OU) process which converges with geometric rate to $\mathrm{N}(0,\Id)$. Under mild conditions on $p_0$, the time-reversed process
$(\bwd_t)_{t \geq 0} = (\fwd_{T-t})_{t \in \ccint{0,T}}$ also satisfies an SDE
\citep{cattiaux2021time,haussmann1986time} given by
\begin{equation} 
  \rmd \bwd_t = \{ \tfrac{1}{2} \bwd_t + \nabla \log p_{T-t}(\bwd_t)\} \rmd t + \rmd \bfB_t,\quad \bwd_0 \sim p_T, \label{eq:backward_SDE}
\end{equation}
where $p_t$ denotes the density of $\fwd_t$.
Unfortunately we cannot sample exactly from (\ref{eq:backward_SDE}) as $p_T$ and the scores $(\nabla \log p_t)_{t\in[0,T]}$ are unavailable.
First, $p_T$ is substituted with the limiting distribution $\mathrm{N}(0,\Id)$ as it converges towards it (with geometric rate).
Second, one can easily show that the score $\nabla \log p_t$ is the minimiser of
$\ell_t(\mathbf{s}) = \expeLigne{\normLigne{\mathbf{s}(\fwd_t) - \nabla_{y_t}
    \log p_{t|0}(\fwd_t|\fwd_0)}^2}$ over functions $\mathbf{s}: \ccint{0,T} \times \rset^d \to \rset^d$ where the
expectation is over the joint distribution of $\fwd_0,\fwd_t$,
and as such can readily be approximated by a neural network $\mathbf{s}_\theta(t, y_t)$ by minimising this functional.
Finally, a 
discretisation of (\ref{eq:backward_SDE}) is performed (e.g.\ Euler--Maruyama) to obtain approximate samples of $p_0$.
%

\begin{wrapfigure}{r}{0.45\textwidth}
\vspace{-2.em}
\begin{center}
   \includegraphics[width=1.\textwidth]{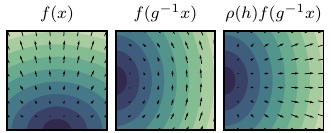}
  \end{center}
  \caption{Illustration of a vector field $f: \R^2 \rightarrow \R^2$ being steered by a group element $g = uh \in \E(2) = \R^2 \rtimes \mathrm{O}(2)$.}
  \label{fig:steerable_demo}
  \vspace{-1.em}
\end{wrapfigure}
\textbf{Steerable fields.}
In the following we focus on $G$ being the Euclidean group $\E(d)$, that is the set of rigid transformations in Euclidean space. Its elements $g \in \E(d)$ admits a unique decomposition $g=uh$ where $h \in \mathrm{O}(d)$ is a $d\times d$ orthogonal matrix and $u \in \mathrm{T}(d)$ is a translation which can be identified as an element of $\R^d$; for a vector $x \in \R^d,$\ $g\cdot x = h x + u$ denotes the action of $g$ on $x$, with $h$ acting from the left on $x$ by matrix multiplication. This special case simplifies the presentation, but can be extended to the general case is discussed in \cref{sec:manifold_valued_input}.

We are interested in learning a probabilistic model over functions of the form  $f : \X \rightarrow \R^d$ such that a group $G$ acts on $\X$ and $\rset^d$.
We call a feature field a tuple $(f, \rho)$ with $f : \X \rightarrow \rset^d$ a mapping between input $\x \in \X$ to some feature $f(\x)$ with associated representation $\rho: G \rightarrow \mathrm{GL}(\R^d)$~\citep{scott1996linear}.
This feature field is said to be $G$-steerable if it is transformed for all $\x \in \X, g \in G$ as
$
\label{eq:steerable_feature_field}
    g \cdot f(\x) = \rho(g) f(g^{-1} \cdot \x).
$
%
%
In this setting, the action of $\E(d) = \mathrm{T}(d) \rtimes \mathrm{O}(d)$ on the feature field $f$  given by \eqref{eq:steerable_feature_field} yields $g \cdot f(\x) = (uh) \cdot f(\x) \triangleq \rho(h) f\left(h^{-1} (\x -p)\right)$.
Typical examples of feature fields include scalar fields with $\rho_{\mathrm{triv}}(g) \triangleq 1$ transforming as  $g \cdot f(\x) = f\left(g^{-1} \x\right)$ such as temperature fields, and vectors or potential fields with $\rho_{\mathrm{Id}}(g) \triangleq h$ 
transforming as $g \cdot f(\x) = h f\left(g^{-1} \x\right)$ as illustrated in \cref{fig:steerable_demo}, such as wind or force fields.

Typically the laws of nature have symmetry, and so we are a priori as likely to see a steerable field in one transformation under the group as another. We therefore wish sometimes to build models that place the same density on a field $f$ as the transformed field $g \cdot f$. Leveraging this symmetry can drastically reduce the amount of data required to learn from and reduce training time.



\section{Geometric neural diffusion processes} \label{sec:diffusion_sp}


\subsection{Continuous diffusion on function spaces}
\label{sec:functional_diffusion}
We construct a  
diffusion model on functions $f: \X \rightarrow \Y$ , with $\mathcal{Y}=\rset^d$, 
    by defining a diffusion model for every finite set of marginals. Most prior works on infinite-dimensional diffusions consider a noising process on the space of functions \citep{kerrigan2022Diffusion,pidstrigach2023InfiniteDimensional,lim2023score}. In theory, this allows the model to define a consistent distribution over all the finite marginals of the process being modelled. In practice, however, only finite marginals can be modelled on a computer and the score function needs to be approximated, and at this step lose consistency over the marginals. The only work to stay fully consistent in implementation is \citet{phillips2022Spectral}, at the cost of limiting functions that can be modelled to a finite-dimensional subspace. With this in mind, we eschew the technically laborious process of defining diffusions over the infinite-dimension space and work solely on the finite marginals following \citet{dutordoir2022Neural}. We find that in practice  consistency can be well learned from data see \Cref{sec:experiments}, and this allows for more flexible choices of score network architecture and easier training. 


\textbf{Noising process.}
We assume we are given a data process $(\fwd_0(x))_{x \in \X}$.
Given any $\x=(x^1, \dots, x^n) \in \X^n$, we consider the following forward \emph{noising} process $(\fwd_t(\x))_{t \geq 0} \triangleq (\bfY_t(x^1),\dots,\bfY_t(x^n))_{t \geq 0} = (\bfY_t^1,\dots,\bfY_t^n)_{t \geq 0} \in \Y^n$ defined by the following multivariate SDE
\begin{equation}\label{eq:forward_SDE_sp}
  \textstyle \rmd \fwd_t(\x) = \tfrac{1}{2} \left\{m(\x) -\fwd_t(x) \right\} \beta_t \rmd t + \beta_t^{1/2} \mathrm{K}(\x,\x)^{1/2}  \rmd \bfB_t,
\end{equation}
where $\mathrm{K}(\x,\x)_{i,j} = k(x^i, x^j)$ 
with $k:\X \times \X \rightarrow \R$ a kernel 
and $m: \X \rightarrow \Y$. 
The process $(\fwd_t(x))_{t \geq 0}$ is a multivariate Ornstein--Uhlenbeck process---with drift $b(t, \x, \fwd_t(\x)) = m(\x) -\fwd_t(\x)$ and diffusion coefficient $\sigma(t, \x, \fwd_t(\x)) = \mathrm{K}(\x,\x)$---which converges with geometric rate to $\mathrm{N}(m(\x), \mathrm{K}(\x,\x))$. Using \citet{phillips2022Spectral}, it can be shown that this convergence extends to the \emph{process} $(\fwd_t)_{t \geq 0}$ which converges to the Gaussian Process with mean $m$ and kernel $k$, denoted $\fwd_\infty $. 

In the specific instance where $k(x,x') = \updelta_{x}(x')$, then the limiting process $\fwd_\infty$ is simply \emph{white noise}, whilst other choices such as the squared-exponential or Mat\'ern kernel would lead to the associated Gaussian limiting process $\fwd_\infty$. Note that the \emph{white noise} setting is not covered by the existing theory of functional diffusion models, as a Hilbert space and a square integral kernel are required, see ~\citet{kerrigan2022Diffusion} for instance.

\textbf{Denoising process.}
Under mild conditions on the distribution of $\fwd_0(x)$, the time-reversal process $(\bwd_t(x))_{t \geq 0}$ also satisfies an SDE~\citep{cattiaux2021time,haussmann1986time} given by
\begin{equation}\label{eq:backward_SDE}
  \textstyle 
  \rmd \bwd_t(\x) = \{-\tfrac{1}{2} (m(\x) - \bwd_t(\x)) + \mathrm{K}(\x,\x) \nabla \log p_{T-t}(\bwd_t(\x)) \} \beta_{T-t} \rmd t + \beta_{T-t}^{1/2} \mathrm{K}(\x,\x)^{1/2} \rmd \bfB_t, 
\end{equation}
with $\bwd_0 \sim \mathrm{GP} (m, k)$ and $p_t$ the density of $\mathbf{Y}_t(x)$ w.r.t.~the Lebesgue measure.
%
In practice, the Stein score $\log p_{T-t}$ is not tractable and must be approximated by a neural network.
We then consider the generative stochastic process model defined by first sampling $\bwd_0 \sim \mathrm{GP} (m, k)$ and then simulating the reverse diffusion \eqref{eq:backward_SDE} (e.g.\ via Euler-Maruyama discretisation).
%

\textbf{Manifold valued outputs.} So far we have defined our generative model with $\mathcal{Y} = \rset^d$, we can readily extend the methodology to manifold-valued functional models using \emph{Riemannian} diffusion models such as \cite{de2022riemannian,huang2022riemannian}, see \cref{sec:manifold_valued}. One of the main notable difference is that in the case where $\mathcal{Y}$ is a \emph{compact} manifold, we replace the Ornstein-Uhlenbeck process by a Brownian motion which targets the uniform distribution.


\textbf{Training.}
As the reverse SDE \eqref{eq:backward_SDE} involves the preconditioned score $\mathrm{K}(x, x) \nabla \log p_t$,
we directly approximate it with a neural network $(\mathrm{Ks})_\theta: [0, T] \times \X^n \times \Y^n \rightarrow \rmT \Y^n$, where $\rmT \Y$ is the tangent bundle of $\Y$,see \cref{sec:manifold_valued}.
%
The conditional score of the noising process \eqref{eq:forward_SDE_sp} is given by
 \begin{equation}
 \textstyle
     \nabla_{{\fwd}_t} \log p_{t}({\fwd}_t(x)| \fwd_0(x))
     = - \Sigma_{t|0}^{-1} (\fwd_t(x) - m_{t|0})
    = - \sigma_{t|0}^{-1} \mathrm{K}(\x,\x)^{-1/2} \varepsilon , 
 \end{equation}
since $\fwd_t = m_{t|0} + \Sigma_{t|0}^{1/2} \varepsilon$ with $\varepsilon \sim \mathrm{N}(0, \Id)$, 
and 
$\Sigma_{t|0} = \sigma_{t|0}^2 \mathrm{K}$ with 
$\sigma_{t|0} = (1 - \mathrm{e}^{-\int_{0}^t \beta(s) \mathrm{d} s})^{1/2}$, 
see \cref{sec:multivariate_ou}.
%
%
We learn the preconditioned score $(\mathrm{Ks})_\theta$ by minimising the following denoising score matching (DSM) loss~\citep{Vincent2010} weighted by $\Lambda(t) = \sigma_{t|0}^2~\mathrm{K}^\top \mathrm{K}$ 
%
\begin{align}
    \label{eq:loss}
    \textstyle
     \mathcal{L}(\theta; \Lambda(t))
     = \mathbb{E} [\| \mathrm{s}_\theta(t, \fwd_t) - \nabla \log p_{t}({\fwd}_t| \fwd_0)\|_{\Lambda(t)}^2  ]
     = \mathbb{E} [ \| \sigma_{t|0}\cdot (\mathrm{K}s)_\theta(t,\fwd_t) + \mathrm{K}^{1/2} \varepsilon \|_2^2 ] , 
\end{align}
where $\|x\|^2_\Lambda = x^\top \Lambda x$. 
Note that when targeting a unit-variance white noise, then $\mathrm{K} = \Id$ and the loss \eqref{eq:loss} reverts to the DSM loss with weighting $\lambda(t)=1/\sigma_{t|0}^2$~\citep{song2020score}.
In \cref{sec:multivariate_ou_score}, we explore several preconditioning terms and associated weighting $\Lambda(t)$.
Overall, we found the preconditioned score $\mathrm{K} \nabla \log p_t$ parameterisation, in combination with the $\ell_2$ loss, to perform best, as shown by the ablation study in \cref{fig:app:score-parametrization-ablation}.
%

\subsection{Invariant neural diffusion processes}
\label{sec:equiv_ndp}

In this section, we show how we can incorporate geometrical constraints into the functional diffusion model introduced in the previous \cref{sec:functional_diffusion}.
In particular, given a group $G$, we aim to build a generative model over steerable tensor fields as defined in \cref{sec:background}. 

\textbf{Invariant process.}
A stochastic process $f \sim \mu$ is said to be $G-$invariant if $\mu(g \cdot \mathsf{A}) = \mu(\mathsf{A})$ for any $g \in G$, with $\mu \in \mathcal{P}(\rmc(\X, \Y))$, where $\mathcal{P}$ is the space of probability measure on the space of continuous functions and $\mathsf{A} \subset \rmc(\X, \Y)$ measurable.
From a sample perspective, this means that with input-output pairs $\mathcal{C} = \{(x^i, y^i)\}_{i=1}^n$, and denoting the action of $G$ on this set as $g \cdot \mathcal{C} \triangleq \{(g \cdot x^i, \rho(g) y^i)\}_{i=1}^n$, $f \sim \mu$ is $G-$invariant if and only if $g \cdot  \mathcal{C}$ has the same distribution as $\mathcal{C}$.
In what follows, we aim to derive sufficient conditions on the model introduced in \cref{sec:diffusion_sp} so that it satisfies this $G$-invariance property.
First, we recall such a necessary and sufficient condition for Gaussian processes.
\begin{proposition}{Invariant (stationary) Gaussian process \citep{holderrieth2021equivariant}.}{}
    \label{prop:inv_gp}
    We have that a Gaussian process $\mathrm{GP}(m, k)$ is $G$-invariant if and only if its mean $m$ and covariance $k$ are suitably $G$-equivariant---that is, for all $\x,\x' \in \X, g \in G$
\begin{align}
m(g \cdot \x) = \rho(g) m(\x) \ \ \text{and} \ \ k(g \cdot \x, g \cdot \x') = \rho(g) k(\x, \x') \rho(g)^\top.
\end{align}
\end{proposition}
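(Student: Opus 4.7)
The plan is to reduce the claim to the elementary fact that a multivariate Gaussian is characterised entirely by its mean vector and covariance matrix. Since a Gaussian process is determined by its finite-dimensional marginals, $G$-invariance of $\mathrm{GP}(m,k)$ is equivalent to equality in law of the original process $f$ and the transformed process $\tilde{f}(x) \triangleq \rho(g)\, f(g^{-1} \cdot x)$ for every $g \in G$ (the set-level statement ``$g \cdot \mathcal{C}$ has the same distribution as $\mathcal{C}$'' becomes $\tilde{f} \stackrel{d}{=} f$ after relabelling each input point $g \cdot x^i$ as a free variable $z^i$). It is then enough to compare the mean and covariance functions of $f$ and $\tilde{f}$.

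For the $(\Leftarrow)$ direction, I first observe that $\tilde{f}$ is itself a Gaussian process: any finite-dimensional marginal of $\tilde{f}$ is obtained from the corresponding marginal of $f$ by applying the block-diagonal linear map $\mathrm{diag}(\rho(g), \dots, \rho(g))$, and Gaussianity is preserved under linear maps. A direct computation gives $\mathbb{E}[\tilde{f}(x)] = \rho(g)\, m(g^{-1} \cdot x)$ and $\mathrm{Cov}(\tilde{f}(x), \tilde{f}(x')) = \rho(g)\, k(g^{-1} \cdot x, g^{-1} \cdot x')\, \rho(g)^\top$. Applying the equivariance hypotheses with $g^{-1} \cdot x$ and $g^{-1} \cdot x'$ in the roles of the free variables turns these expressions into $m(x)$ and $k(x, x')$ respectively, so $\tilde{f}$ has the same mean and covariance as $f$ and hence the same law.

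The $(\Rightarrow)$ direction runs in reverse: if $f \stackrel{d}{=} \tilde{f}$, then in particular their mean and covariance functions agree pointwise, yielding $m(x) = \rho(g)\, m(g^{-1} \cdot x)$ and $k(x,x') = \rho(g)\, k(g^{-1} \cdot x, g^{-1} \cdot x')\, \rho(g)^\top$ for all $g$, $x$, $x'$. Reparametrising by $x \mapsto g \cdot x$ and $x' \mapsto g \cdot x'$ gives precisely the stated equivariance conditions on $m$ and $k$.

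The main obstacle is essentially bookkeeping: carefully translating the set-level invariance into the process-level identity $\tilde{f} \stackrel{d}{=} f$, and tracking consistently when $g$ versus $g^{-1}$ appears on the input and when $\rho(g)$ versus $\rho(g)^\top$ appears on the output. Once this is set up, both directions are immediate consequences of two standard facts, namely that linear images of Gaussian vectors are Gaussian, and that a Gaussian measure is uniquely determined by its first two moments.
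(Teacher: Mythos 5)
Your proof is correct. Note that the paper does not actually supply its own proof of this proposition --- it is stated with a citation to \cite{holderrieth2021equivariant} and the appendix only proves the two subsequent propositions --- so there is no in-paper argument to compare against. Your route is the canonical one and the one the cited reference uses: reduce $G$-invariance to equality in law of $f$ and $g \cdot f$ via the finite-dimensional marginals, observe that $g \cdot f$ is again Gaussian because its marginals are block-diagonal linear images of Gaussian vectors, and then invoke the fact that a Gaussian measure is determined by its first two moments, with the substitution $x \mapsto g^{-1}\cdot x$ converting between the two equivalent forms of the equivariance conditions. The bookkeeping of $g$ versus $g^{-1}$ and $\rho(g)$ versus $\rho(g)^\top$ is handled consistently, and the covariance transformation $\rho(g)\,k\,\rho(g)^\top$ is correct without assuming $\rho$ orthogonal, so I see no gap.
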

%
Trivial examples of $\mathrm{E}(n)$-equivariant kernels include diagonal kernels $k = k_0 \Id$ with $k_0$ invariant~\citep{holderrieth2021equivariant}, but see \cref{sec:app_vec_gp} for non trivial instances introduced by \citet{macedo2010learning}.
%
%
Building on \cref{prop:inv_gp}, we then state that our introduced neural diffusion process 
is also invariant if we additionally assume the score network to be $G$-equivariant. 

\begin{proposition}{Invariant neural diffusion process~\citep{yim2023se}.}{}
    \label{prop:inv_prior}
We denote by $(\bar{\fwd}_t(x))_{x \in \X, t \in [0, T]}$ the process induced by the time-reversal SDE \eqref{eq:backward_SDE} where the score is approximated by a score network $\mathbf{s}_\theta: [0, T] \times \X^n \times \Y^n \rightarrow \rmT \Y^n$, and the limiting process is given by $\mathcal{L}(\bar{\fwd}_0) = \mathrm{GP}(m, k)$.
Assuming $m$ and $k$ are respectively $G$-equivariant per \cref{prop:inv_gp}, if we additionally have that the score network is $G$-equivariant vector field, i.e.\ $\mathbf{s}_\theta(t, g\cdot \x, \rho(g) \y) = \rho(g) \mathbf{s}_\theta(t, \x, \y)$ for all $\x \in \X, g \in G$, then for any ${t \in [0, T]}$ the process $(\bar{\fwd}_t(x))_{x \in \X}$ is $G$-invariant.
\end{proposition}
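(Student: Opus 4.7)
The strategy is to compare, for any fixed $g\in G$ and any finite collection $\x=(x^1,\dots,x^n)\in\X^n$, the two $\Y^n$-valued processes
\[
  Z_t \;\triangleq\; \rho(g)\,\bar{\fwd}_t(\x) \qquad \text{and} \qquad \bar{\fwd}_t(g\cdot\x),
\]
where $\rho(g)$ acts block-diagonally on $\Y^n$. If these have the same law for every $t\in[0,T]$ and every such $\x$, then all finite-dimensional marginals of $\bar{\fwd}_t$ are $G$-invariant in the sense stated above \cref{prop:inv_gp}, which is exactly the desired conclusion.

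The first step is to write the reverse SDE \eqref{eq:backward_SDE} (with $\mathbf{s}_\theta$ in place of the true score) for $\bar{\fwd}_t(\x)$, multiply through by the constant linear map $\rho(g)$, and rewrite each term in terms of $Z_t$. The equivariance of the mean and kernel from \cref{prop:inv_gp} give $\rho(g)\,m(\x)=m(g\cdot\x)$ and, using that $\rho(g)$ is orthogonal for $G=\E(d)$, the identities $\rho(g)\,\mathrm{K}(\x,\x)=\mathrm{K}(g\cdot\x,g\cdot\x)\,\rho(g)$ and $\rho(g)\,\mathrm{K}(\x,\x)^{1/2}=\mathrm{K}(g\cdot\x,g\cdot\x)^{1/2}\,\rho(g)$, the latter since the symmetric square root commutes with orthogonal conjugation. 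Combined with the hypothesis $\rho(g)\,\mathbf{s}_\theta(t,\x,\y)=\mathbf{s}_\theta(t,g\cdot\x,\rho(g)\,\y)$, the drift of $Z_t$ collapses to the drift of the reverse SDE evaluated at $g\cdot\x$ and $Z_t$.

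For the diffusion term, the identity $\rho(g)\,\mathrm{K}(\x,\x)^{1/2}\,\rmd\bfB_t = \mathrm{K}(g\cdot\x,g\cdot\x)^{1/2}\,(\rho(g)\,\rmd\bfB_t)$ together with the fact that $\rho(g)\bfB_t$ is again a Brownian motion on $\Y^n$ show that $Z_t$ is a weak solution of the same SDE driving $\bar{\fwd}_t(g\cdot\x)$. The initial laws agree, since $\mathcal{L}(Z_0)=\rho(g)_*\mathrm{GP}(m,k)=\mathrm{GP}(m,k)=\mathcal{L}(\bar{\fwd}_0(g\cdot\x))$ by \cref{prop:inv_gp}. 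Weak uniqueness for the SDE (under standard Lipschitz regularity on $\mathbf{s}_\theta$) then yields $\mathcal{L}(Z_t)=\mathcal{L}(\bar{\fwd}_t(g\cdot\x))$ for all $t$, which is the $G$-invariance of every finite marginal.

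The main technical care lies in the diffusion term: one must verify that $\rho(g)$ intertwines the square-root diffusion coefficients, which is where orthogonality of $\rho$ genuinely enters, and then appeal to weak uniqueness in law rather than a strong pathwise identity. Everything else is routine book-keeping using the three equivariance hypotheses.
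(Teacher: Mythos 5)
Your proposal is correct and follows essentially the same route as the paper's SDE-based proof: both check that the equivariance of $m$, $k$ and $\mathbf{s}_\theta$ (together with orthogonality of $\rho(g)$) makes the reverse drift and the diffusion coefficient $G$-equivariant, and then conclude invariance of the law; the paper delegates this last step to \citet[Proposition 3.6]{yim2023se} (and also offers an alternative argument via the probability flow ODE and pushforward of invariant measures), whereas you inline it as a weak-uniqueness argument for $Z_t=\rho(g)\bar{\fwd}_t(\x)$ versus $\bar{\fwd}_t(g\cdot\x)$. Your explicit handling of the square root, $\rho(g)\,\mathrm{K}(\x,\x)^{1/2}=\mathrm{K}(g\cdot\x,g\cdot\x)^{1/2}\rho(g)$, and of $\rho(g)\bfB_t$ being again a Brownian motion is exactly the content hidden in the cited result, and the regularity you assume for weak uniqueness parallels the boundedness hypotheses the paper imposes when invoking it.
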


This result can be proved in two ways, from the probability flow ODE perspective or directly in terms of SDE via Fokker-Planck, see \cref{sec:proof_inv_prior}.
In particular, when modelling an invariant scalar data process $({\fwd}_0(x))_{x \in \X}$ such as a temperature field, we need the score network to admits the invariance constraint $\mathbf{s}_\theta(t, g\cdot \x, \y) = \mathbf{s}_\theta(t, \x, \y)$.

\textbf{Equivariant conditional process.}
%
\begin{figure}[t]
    \vspace{-1em}
    \centering
     \begin{subfigure}[b]{0.49\textwidth}
     \centering
     \vspace{0.em}
      \includegraphics[width=1.\linewidth,trim={0 0 0 0},clip]{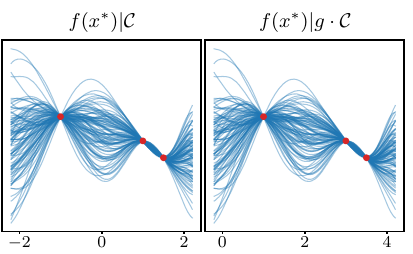}
     \end{subfigure}
     \begin{subfigure}[b]{0.49\textwidth}
     \centering
     \vspace{0.em}
    \includegraphics[width=1.\linewidth,trim={0 0 0 0},clip]{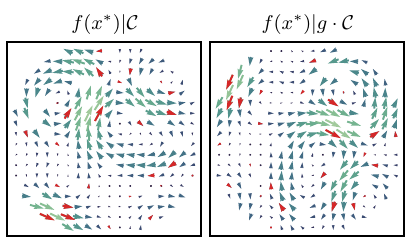}
    \vspace{-0.5em}
    \end{subfigure}
    \vspace{-0.5em}
    \caption{
    Samples from equivariant neural diffusion processes conditioned on context set $\mathcal{C}$ (\textcolor{C3}{in red}) for scalar (\emph{Left}) and 2D vector (\emph{Right}) fields.
    Same model is then conditioned on transformed context $g \cdot \mathcal{C}$, with group element $g$ being a translation of length $2$ (\emph{Left}) or a $90^{\circ}$ rotation (\emph{Right}).
    }
    \label{fig:posterior_model}
\end{figure}
%
Often precedence is given to modelling the predictive process given a set of observations $\mathcal{C} = \{(x^c, y^c)\}_{c \in C}$.
In this context, the conditional process inherits the symmetry of the prior process in the following sense.
A stochastic process with distribution $\mu$ given a context $\mathcal{C}$ is said to be conditionally $G-$equivariant if the conditional satisfies $\mu(\mathsf{A} | g \cdot \mathcal{C}) = \mu(g \cdot \mathsf{A} |\mathcal{C})$ for any $g \in G$ and $\mathsf{A} \in \mathrm{C}(\mathcal{X},\mathcal{Y})$ measurable, 
as illustrated in \cref{fig:posterior_model}.
\begin{proposition}{Equivariant conditional process.}{}
    \label{prop:equiv_posterior}
    Assume a stochastic process $f \sim \mu$ is  $G-$invariant. Then the conditional process $f|\mathcal{C}$ given a set of observations $\mathcal{C}$ is $G$-equivariant. 
\end{proposition}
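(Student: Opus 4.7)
The plan is to establish the equivariance of the posterior by performing a $\mu$-preserving change of variable on the underlying function $f$, and to track how the context event and the target event each transform. I work in the finite-marginal framework used throughout the paper: $\mu(\mathsf{A} \mid \mathcal{C})$ denotes the regular conditional distribution of the target marginal of $f \sim \mu$ given that $(f(x^c))_{c \in C} = (y^c)_{c \in C}$, where $\mathcal{C} = \{(x^c, y^c)\}_{c \in C}$.

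First I would fix a measurable set $\mathsf{A}$ and a group element $g \in G$, and write out $\mu(\mathsf{A} \mid g \cdot \mathcal{C})$ by disintegration of $\mu$ along the context evaluation map. Next, I would apply the change of variable $f = g \cdot \tilde f$; by the $G$-invariance hypothesis this substitution leaves the law $\mu$ unchanged, so $\tilde f \sim \mu$ as well. Using the steerable transformation rule $(g \cdot \tilde f)(x) = \rho(g)\, \tilde f(g^{-1} \cdot x)$, the context event $\{f \,:\, f(g \cdot x^c) = \rho(g) y^c \text{ for all } c \in C\}$ becomes $\{\tilde f \,:\, \rho(g)\tilde f(x^c) = \rho(g) y^c\}$, and since $\rho(g) \in \mathrm{GL}(\R^d)$ is invertible this reduces to the original context event $\{\tilde f \,:\, \tilde f(x^c) = y^c\}$. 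Simultaneously, the target event $\{f \in \mathsf{A}\}$ pulls back to $\{\tilde f \in g^{-1} \cdot \mathsf{A}\}$. Chaining these two identities through the disintegration yields
\begin{equation*}
\mu(\mathsf{A} \mid g \cdot \mathcal{C}) \;=\; \mu(g^{-1} \cdot \mathsf{A} \mid \mathcal{C}),
\end{equation*}
which is exactly the claimed $G$-equivariance of the conditional, modulo the convention that identifies the action on measurable sets with the pushforward by $g$ (equivalently, relabel $g \leftarrow g^{-1}$ and use that $G$ is closed under inversion).

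The main obstacle is purely technical rather than conceptual: for a continuous-trajectory process the context event has $\mu$-measure zero, so the conditional is only defined up to a null set and must be interpreted through a regular conditional probability / disintegration. In the paper's finite-marginal setting this difficulty is sidestepped, since the joint density of context and target marginals is available and Bayes' rule applies directly; the change of variable then acts by the invertible map $(x, y) \mapsto (g \cdot x, \rho(g) y)$, whose constant Jacobian $|\det \rho(g)|$ cancels between the joint and the marginal densities, making the equivariance a one-line computation. On the full path space, the same argument goes through using the uniqueness (up to null sets) of the disintegration and the $G$-invariance of $\mu$ applied to the section kernels.
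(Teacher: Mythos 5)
Your argument is correct and is essentially the paper's own proof: the appendix establishes the result by exactly your change-of-variables-plus-disintegration strategy, implemented with test functions $A,B$ to evaluate the joint expectation $\mathbb{E}\bigl[B(f(x'))\,A((\phi \triangleright f)(x))\bigr]$ in two ways (once directly, once after the $\mu$-preserving substitution $f \mapsto \phi^{-1}\triangleright f$) and then invoking almost-everywhere uniqueness of the disintegration to identify the conditional kernels --- precisely the resolution you describe for the measure-zero conditioning issue. The only differences are cosmetic: the paper works with sections of associated bundles on a manifold rather than Euclidean steerable fields, and your $g^{-1}\cdot\mathsf{A}$ versus the statement's $g\cdot\mathsf{A}$ is the set-action convention you already flag.
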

Originally stated in \cite{holderrieth2021equivariant} in the case where the process is over functions of the form $f: \R^n\to \R^d$ and marginals with density w.r.t.\ the Lebesgue measure, we prove \cref{prop:equiv_posterior} for SPs over generic fields on manifolds in terms only of the measure of the process (\cref{sec:equiv_posterior}).

\subsection{Conditional sampling}
\label{sec:conditional_sampling}
%

\begin{wrapfigure}[12]{r}{0.4\linewidth}
    \vspace{-4.5em}
    \centering
    \begin{center}
    \resizebox{1\linewidth}{!}{
    \begin{tikzpicture}
    \Large
      \draw [stealth-, line width = .05cm] (0,0) .. controls  (2,-4) and (3.,2) .. (6,0);
      \draw [-stealth, line width = .05cm, C0, dashed] (0,0) -- (1.3,-2.3);
      \draw [-stealth, line width = .05cm, C3, dashed] (1.3,-2.3) -- (1.3,-1.9);
      \draw [-stealth, line width = .05cm, C3, dashed] (1.3,-1.9) -- (1.3,-1.7);
      \draw [-stealth, line width = .05cm, C3, dashed] (1.3,-1.7) -- (1.3,-1.4);
      \draw [-stealth, line width = .05cm, C0, dashed] (1.3,-1.45) -- (2.8,-1.35);
      \draw [-stealth, line width = .05cm, C3, dashed] (2.8,-1.35) -- (2.4, -0.9);
      \node [below=0.2cm] at (6,0) {$p_0$};
      \node [left=0.0cm] at (0,0) {$\mathcal{L}(\bwd_0) = p_T$};
      \node [left=0.2cm] at (1.3,-2.1) {$\mathcal{L}(\bwd_\gamma^0)$};
      \node [right=0.1cm] at (1.3,-2.) {$\mathcal{L}(\bwd_\gamma^1)$};
      \node [right=0.1cm] at (1.3,-1.65) {$\cdots$};
      \node [above=0.05cm] at (1.3,-1.35) {$p_{\gamma}$};
      \node [below right=0.1cm] at (2.8,-0.8) {$\mathcal{L}(\bwd_{2\gamma}^0)$};
      \node [right=0.1cm] at (2.6, -0.7) {$\mathcal{L}(\bwd_{2\gamma}^1)$};
      \node [above left=0.05cm] at (2.6, -0.9) {$p_{2\gamma}$};
    \end{tikzpicture}
    }
    \end{center}
    \vspace{-3.5em}
    \caption{
    Illustration of Langevin corrected conditional sampling.
    The black line represents the noising process dynamics $(p_t)_{t \in \ccint{0,T}}$.
    The \textcolor{C0}{time reversal (i.e.\ predictor)} step, is combined with a \textcolor{C3}{Langevin corrector} step projecting back onto the dynamics.
    }
    \label{fig:predictor_corrector}
\end{wrapfigure}
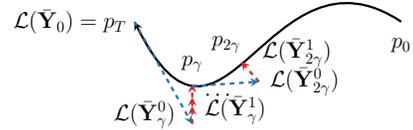
There exist several methods to perform conditional sampling in diffusion models such as: replacement sampling, amortisation and conditional guidance. Replacement sampling is not exact and do not recover the correct conditional distributions. SMC based methods have been proposed to correct this procedure but they suffer from the limitations of SMC and scale poorly with the dimension \citep{trippe2022Diffusion}. On the other hand, amortisation, require the retraining of the score network for each different conditional task which is impractical in our setting where the context length might change. Conditional guidance is a popular method in state-of-the-art image diffusion models but does not recover the true posterior distribution.

Here we propose a new method for sampling from exact conditional distributions of NDPs using only the score network for the joint distribution. Using the fact that the conditional score can be written as
\(\nabla_x \log p(x|y) = \nabla_x \log p(x, y) - \nabla_x \log p(y) = \nabla_x \log p(x, y)\)
we can therefore, for any point in the diffusion time, conditionally sample using Langevin dynamics, following the SDE $\rmd \bfY_s = \tfrac{1}{2} \mathrm{K} \nabla \log p_{T-t}(\bfY_s) \rmd s +  \sqrt{\mathrm{K}} \rmd \bfB_s$, by only applying the diffusion to the variables of interest and holding the others fixed. While we could sample directly at the end time this proves difficult in practice. Similar to the motivation of \citet{song2019generative}, we sample along the reverse diffusion, taking a number of conditional Langevin steps at each time. In addition, we apply the forward noising SDE to the conditioning points at each step, as this puts the combined context and sampling set in a region that the score function will be well learned in training. Our procedure is illustrated in \Cref{alg:conditioning_langevin}. In \cref{app:conditional_sampling} we draw links between \textsc{RePaint} of \citet{lugmayr2022RePaint} and our scheme.

\subsection{Likelihood evaluation}

%
Similarly to \citet{song2020score},
we can derive a deterministic ODE which has the same marginal density as the SDE \eqref{eq:forward_SDE_sp}, which is given by 
the `probability flow' Ordinary Differential Equation (ODE), see \Cref{sec:continuous_process}.
Once the score network is learnt, we can thus use it in conjunction with an ODE solver to compute the likelihood of the model.
A perhaps more interesting task is to evaluate the predictive posterior likelihood
$p(y^*|x^*,\{x^i,y^i\}_{i \in C})$ given a context set $\{x^i,y^i\}_{i \in C}$.
A simple approach is to simply rely on the conditional probability rule evaluate
$p(y^*|x^*,\{x^i,y^i\}_{i \in C}) = p(y^*,\{y^i\}_{i \in C}|x^*,\{x^i\}_{i \in C}) / p(\{y^i\}_{i \in C}|\{x^i\}_{i \in C})$. 
This can be done by solving two probability flow ODEs on the joint evaluation and context set, and only on the context set.




\section{Related work}
\label{sec:related_work}

\textbf{Gaussian processes and the neural processes family.}
One important and powerful framework to construct distributions over functional spaces are Gaussian processes \citep{rasmussen2003gaussian}.
Yet, they are restricted in their modelling capacity and when using exact inference they scale poorly with the number of datapoints.
These problems can be partially alleviated by using neural processes~\citep{kim2019Attentive,garnelo2018neural,garnelo2018Conditional,jha2022neural,louizos2019functional,singh2019sequential}, although they also assume a Gaussian likelihood. 
%
Recently introduced autoregressive NPs \citep{bruinsma2023Autoregressive} alleviate this limitation, but they are disadvantaged by the fact that variables early in the auto-regressive generation only have simple distributions (typically Gaussian).
Finally, \cite{dupont2022data} model weights of implicit neural representation using diffusion models.

\paragraph{Stationary stochastic processes.}
The most popular Gaussian process kernels (e.g.\ squared exponential, Mat\'ern) are stationary, that is, they are translation invariant.
These lead to invariant Gaussian processes, whose samples when translated have the same distribution as the original ones. 
This idea can be extended to the entire isometry group of Euclidean spaces \citep{holderrieth2021equivariant}, allowing for modelling higher order tensor fields, such as wind fields or incompressible fluid velocity \citep{macedo2010learning}.
Later, \citet{azangulov2022Stationary, azangulov2023Stationary} extended stationary kernels and Gaussian processes to a large class of non-Euclidean spaces, in particular all compact spaces, and symmetric non compact spaces.
In the context of neural processes, \cite{gordon2019Convolutional} introduced \textsc{ConvCNP} so as to encode translation equivariance into the predictive process.
They do so by embedding the context into a translation equivariant functional representation which is then decoded with a convolutional neural network.
\citet{holderrieth2021equivariant} later extended this idea to construct neural processes that are additionally equivariant w.r.t.\ rotations or subgroup thereof.


\paragraph{Spatial structure in diffusion models.}
A variety of approaches have also been proposed to incorporate spatial correlation in the noising process of finite-dimensional diffusion models leveraging the multiscale structure of data \citep{jing2022subspace,guth2022wavelet,ho2022cascaded,saharia2021image,hoogeboom2022blurring,rissanen2022Generative}.
  Our methodology can also be seen as a principled way to modify the forward dynamics in classical denoising diffusion models.
Hence, our contribution can be understood in the light of recent advances in generative modelling on soft and cold denoising diffusion models \citep{daras2022soft,bansal2022cold,hoogeboom2022blurring}.
%
Several recent work explicitly introduced a covariance matrix in the Gaussian noise, either on a choice of kernel \citep{bilos2022Modeling}, based on Discrete Fourier Transform of images \citep{voleti2022Scorebased}, or via empirical second order statistics (squared pairwise distances and the squared radius of gyration) for protein modelling \citep{ingrahamIlluminating}.
Alternatively, \citep{guth2022wavelet} introduced correlation on images leveraging a wavelet basis.

\paragraph{Functional diffusion models.} Infinite dimensional diffusion models
have been investigated in the Euclidean setting in
\cite{kerrigan2022Diffusion,pidstrigach2023InfiniteDimensional,lim2023score,bond2023infty,hagemann2023multilevel,franzese2023continuous,dutordoir2022Neural,phillips2022Spectral}. Most
of these works are based on an extension of the diffusion models techniques
\cite{song2020score,ho2020denoising} to the infinite-dimensional space,
leveraging tools from the Cameron-Martin theory such as the Feldman-H\'{a}jek
theorem \cite{kerrigan2022Diffusion,pidstrigach2023InfiniteDimensional} to
define infinite-dimensional Gaussian measures and how they interact. We
refer to \cite{da2014stochastic} for a thorough introduction to Stochastic
Differential Equations in infinite dimension. \cite{phillips2022Spectral}
consider another approach by defining countable diffusion processes in a
basis. All these approaches amount to learn a diffusion model with spatial structure. Note
that this induced correlation is necessary for the theory of infinite dimensional SDE
\citep{da2014stochastic} to be applied but is not necessary to implement
diffusion models \citep{dutordoir2022Neural}. Several approaches have been
considered for conditional
sampling. \cite{pidstrigach2023InfiniteDimensional,bond2023infty} modify the reverse diffusion to introduce a guidance term, while
\cite{dutordoir2022Neural,kerrigan2022Diffusion} use the replacement
method. Finally \cite{phillips2022Spectral} amortise the score function
w.r.t.\ the conditioning context.


\section{Experimental results} \label{sec:experiments}
Our implementation is built on $\texttt{jax}$~\cite{jax2018github}, and is publicly available at  \url{https://github.com/cambridge-mlg/neural_diffusion_processes}.
\begin{figure}[t]
    \centering
    \includegraphics{../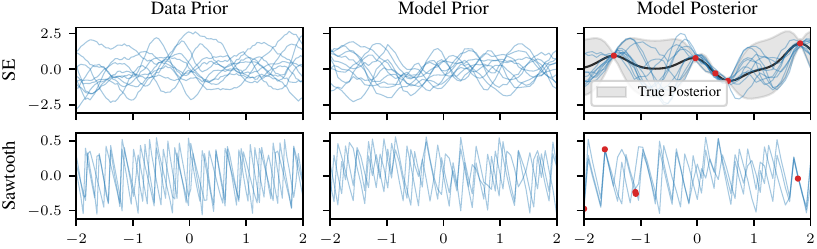}
    \caption{
    Prior and posterior samples (\textcolor{C0}{in blue}) from the data process and the \method model, with context points \textcolor{C3}{in red} and posterior mean in {black}.
    }
    \label{fig:1d_regression}
\end{figure}

\subsection{1D regression over stationary scalar fields}
\label{sec:1d_regression}
 
We evaluate {\scshape GeomNDPs} on several synthetic 1D regression datasets.
We follow the same experimental setup as \citet{bruinsma2020Gaussian} which we detail in \cref{app:experimental_details_1d_regression}.
In short, it contains Gaussian (Squared Exponential ({\scshape SE}), {\scshape Mat\'ern$(\tfrac52)$}, {\scshape Weakly Periodic}) and non-Gaussian ({\scshape Sawtooth} and {\scshape Mixture}) sample paths, where {\scshape Mixture} is a combination of the other four datasets with equal weight. \Cref{fig:app:regression} shows samples for each of these dataset. The Gaussian datasets are corrupted with observation noise with variance $\sigma^2=0.05^2$.
%
\Cref{tab:1d_regression} reports the average log-likelihood $p(y^* | x^*, \c{C})$ across $4096$ test samples, where the context set size is uniformly sampled between $1$ and $10$ and the target has fixed size of $50$.
All inputs $x^c, x^*$ are chosen uniformly within their input domain which is $[\text{-}2, 2]$ for the training data and {`interpolation'} evaluation and $[2, 6]$ for the {`generalisation'} evaluation.

We compare the performance of \method to a GP with the true hyperparameters (when available), a (convolutional) Gaussian NP \citep{bruinsma2020Gaussian}, a convolutional NP \citep{gordon2019Convolutional} and a vanilla attention-based NDP \citep{dutordoir2022Neural} which we reformulated in the continuous diffusion process framework to allow for log-likelihood evaluations and thus a fair comparison---denoted NDP\textsuperscript{*}.
We enforce translation invariance in the score network for \method by subtracting the centre of mass from the input $x$, inducing stationary scalar fields.

On the GP datasets, {\scshape GNP}, {\scshape ConvNPs} and \method methods are able to fit the conditionals perfectly---matching the log-likelihood of the GP model. {\scshape GNP}'s performance degrades on the non-Gaussian datasets as it is restricted by its conditional Gaussian assumption, whilst {\scshape NDPs} methods still performs well as illustrated on \cref{fig:1d_regression}.
In the bottom rows of \cref{tab:1d_regression}, we assess the models ability to generalise outside of the training input range  $x \in [\text{-}2, 2]$, and evaluate them on a translated grid where context and target points are sampled from $[2,6]$. Only convolutional NPs ({\scshape GNP} and {\scshape ConvNP}) and {\scshape $\mathrm{T}(1)-$\method} are able to model stationary processes and therefore to perform as well as in the interpolation task. The {\scshape NDP\textsuperscript{*}}, on the contrary, drastically fails at this task.


%
%
\begin{table}[b]
    \small
    \centering
    \caption{
        Mean test log-likelihood (TLL) ($\uparrow$) $\pm$ 1 standard error estimated over 4096 test samples are reported. Statistically significant best non-GP model is in \textbf{bold}. `*' stands for a TLL below $-10$. NP baselines from \citet{bruinsma2020Gaussian}.
        \label{tab:1d_regression}
        }
\setlength{\tabcolsep}{3.5pt}
\begin{tabular}{llrrrrr}
\toprule
 & & \multicolumn{1}{c}{\scshape SE} & \multicolumn{1}{c}{\scshape Mat\'ern$(\tfrac52)$} & \multicolumn{1}{c}{\scshape Weakly Per.} & \multicolumn{1}{c}{\scshape Sawtooth} & \multicolumn{1}{c}{\scshape Mixture}\\

\midrule
 \parbox[t]{2mm}{\multirow{5}{*}{\rotatebox[origin=c]{90}{\textsc{\scalefont{.85} Interpolat}.}}}
 & \scshape GP (optimum)& $\hphantom{-}0.70 { \pm \scriptstyle 0.00 }$& $\hphantom{-}0.31 { \pm \scriptstyle 0.00 }$& $-0.32 { \pm \scriptstyle 0.00 }$& - & - \\
& \cellcolor{pearDark!20}\scshape $\mathrm{T}(1)-$\method & \cellcolor{pearDark!20} $\hphantom{-}\mathbf{0.72} { \pm \scriptstyle 0.03 }$& \cellcolor{pearDark!20} $\hphantom{-}\mathbf{0.32} { \pm \scriptstyle 0.03 }$& \cellcolor{pearDark!20} $\mathbf{-0.38} { \pm \scriptstyle 0.03 }$& \cellcolor{pearDark!20} $\hphantom{-}\mathbf{3.39} { \pm \scriptstyle 0.04 }$& \cellcolor{pearDark!20} $\hphantom{-}\mathbf{0.64} { \pm \scriptstyle 0.08 }$\\
& \scshape NDP\textsuperscript{*} &  $\hphantom{-}\mathbf{0.71} { \pm \scriptstyle 0.03 }$&  $\hphantom{-}\mathbf{0.30} { \pm \scriptstyle 0.03 }$&  $\mathbf{-0.37} { \pm \scriptstyle 0.03 }$&  $\hphantom{-}\mathbf{3.39} { \pm \scriptstyle 0.04 }$&  $\hphantom{-}\mathbf{0.64} { \pm \scriptstyle 0.08 }$\\
& \scshape GNP& $\hphantom{-}\mathbf{0.70} { \pm \scriptstyle 0.01 }$& $\hphantom{-}\mathbf{0.30} { \pm \scriptstyle 0.01 }$& $-0.47 { \pm \scriptstyle 0.01 }$& $\hphantom{-}0.42 { \pm \scriptstyle 0.01 }$& $\hphantom{-}0.10 { \pm \scriptstyle 0.02 }$\\
& \scshape ConvNP& $-0.46 { \pm \scriptstyle 0.01 }$& $-0.67 { \pm \scriptstyle 0.01 }$& $-1.02 { \pm \scriptstyle 0.01 }$& $\hphantom{-}1.20 { \pm \scriptstyle 0.01 }$& $-0.50 { \pm \scriptstyle 0.02 }$\\

\midrule
\parbox[t]{2mm}{\multirow{5}{*}{\rotatebox[origin=c]{90}{\textsc  {\scalefont{.85} Generalisat}.}}}
& \scshape GP (optimum)& $\hphantom{-}0.70 { \pm \scriptstyle 0.00 }$& $\hphantom{-}0.31 { \pm \scriptstyle 0.00 }$& $-0.32 { \pm \scriptstyle 0.00 }$& -& -\\
& \cellcolor{pearDark!20}\scshape$\mathrm{T}(1)-$\method & \cellcolor{pearDark!20} $\hphantom{-}\mathbf{0.70} { \pm \scriptstyle 0.02 }$& \cellcolor{pearDark!20} $\hphantom{-}\mathbf{0.31} { \pm \scriptstyle 0.02 }$& \cellcolor{pearDark!20} $\mathbf{-0.38} { \pm \scriptstyle 0.03 }$& \cellcolor{pearDark!20} $\hphantom{-}\mathbf{3.39} { \pm \scriptstyle 0.03 }$& \cellcolor{pearDark!20} $\hphantom{-}\mathbf{0.62} { \pm \scriptstyle 0.02 }$\\
&  \scshape NDP\textsuperscript{*} & *& *& *& *& *\\
& \scshape GNP& $\hphantom{-}\mathbf{0.69} { \pm \scriptstyle 0.01 }$& $\hphantom{-}\mathbf{0.30} { \pm \scriptstyle 0.01 }$& $-0.47 { \pm \scriptstyle 0.01 }$& $\hphantom{-}0.42 { \pm \scriptstyle 0.01 }$& $\hphantom{-}0.10 { \pm \scriptstyle 0.02 }$\\
& \scshape ConvNP& $-0.46 { \pm \scriptstyle 0.01 }$& $-0.67 { \pm \scriptstyle 0.01 }$& $-1.02 { \pm \scriptstyle 0.01 }$& $\hphantom{-}1.19 { \pm \scriptstyle 0.01 }$& $-0.53 { \pm \scriptstyle 0.02 }$\\

\bottomrule
\end{tabular}
\end{table}
%
%
%
 \textbf{Non white kernels for limiting process.} The NDP methods in the above experiment target the white kernel $\mathbbm{1}(x = x')$ in the limiting process.
 In \cref{fig:app:score-parametrization-ablation}, we explore different choices for the limiting kernel, such as SE and periodic kernels with short and long lengthscales, along with several score parameterisations, see \cref{sec:app_parameterisation} for a description of these.
 We observe that although choosing such kernels gives a head start to the training, it eventually yield slightly worse performance.
 We attribute this to the additional complexity of learning a non-diagonal covariance. 
 Finally, across all datasets and limiting kernels, we found the preconditioned score $\mathrm{K}\nabla \log p_t$ to result in the best performance.
 
 \textbf{Conditional sampling ablation.}
 We employ the SE dataset to investigate various configurations of the conditional sampler as we have access to the ground truth conditional distribution through the GP posterior.
 In \cref{fig:app:conditional-ablation} we compute the Kullback-Leibler divergence between the samples generated by {\scshape \method} and the actual conditional distribution across different conditional sampling settings.
 Our results demonstrate the importance of performing multiple Langevin dynamics steps during the conditional sampling process.
 Additionally, we observe that the choice of noising scheme for the context values $y_c$ has relatively less impact on the overall outcome.

\subsection{Regression over Gaussian process vector fields}
\label{sec:vec_gp}
We now focus our attention to modelling equivariant vector fields.
For this, we create datasets using samples from a two-dimensional zero-mean GP with one of the following $\Etwo$-equivariant kernels:
a diagonal Squared-Exponential ({\scshape SE}) kernel, a zero curl ({\scshape Curl-free}) kernel and a zero divergence ({\scshape Div-free}) kernel,  
as described in \cref{sec:en_equiv_kernels}.
%


We equip our model, \method\!\!\!, with a $\Etwo$-equivariant score architecture, based on steerable CNNs~\citep{thomas2018Tensor,weiler2021General}.
We compare to {\scshape NDP\textsuperscript{*}} with a non-equivariant attention-based network~\citep{dutordoir2022Neural}.
We also evaluate two neural processes, a translation-equivariant {\scshape ConvCNP} \citep{gordon2019Convolutional} and a $\mathrm{C}4 \ltimes \R^2 \subset \Etwo$-equivariant {\scshape SteerCNP} \citep{holderrieth2021equivariant}. We also report the performance of the data-generating {\scshape GP}, and the same GP but with diagonal posterior covariance {\scshape GP (Diag.)}. We measure the predictive log-likelihood of the data process samples under the model on a held-out test dataset.

We observe in \cref{fig:vec_gp_results} ({Left}), that the CNPs performance is limited by their diagonal predictive covariance assumption, and as such cannot do better than the {\scshape GP (Diag.)}.
We also see that although {\scshape NDP\textsuperscript{*}} is able to fit well GP posteriors, it does not reach the maximum log-likelihood value attained by the data {\scshape GP}, in contrast to its equivariant counterpart {\scshape \method}.
To further explore gains brought by the built-in equivariance, we explore the data-efficiency in \cref{fig:vec_gp_results} ({Right}), and 
notice that {\scshape E(2)-\method} requires few data samples to fit the data process, since effectively the dimension of the (quotiented) state space is dramatically reduced.

%
%
\begin{figure}
    \centering
     \setlength{\tabcolsep}{2pt}
     \small
\begin{tabular}{lccc}
\toprule
\textsc{Model} & \scshape SE & \scshape Curl-free & \scshape Div-free \\
\midrule
\scshape GP & ${0.56_{\pm 0.00}}$ & ${0.66_{\pm 0.00}}$ & ${0.66_{\pm 0.00}}$ \\
\scshape NDP\textsuperscript{*} & ${0.55_{\pm 0.00}}$ & ${0.62_{\pm 0.01}}$ & ${0.62_{\pm 0.01}}$ \\
\rowcolor{pearDark!20} $\mathrm{E}(2)$\scshape-\method & $\bm{0.56_{\pm 0.01}}$ & $\bm{0.65_{\pm 0.01}}$ & $\bm{0.66_{\pm 0.01}}$ \\
\midrule
\scshape GP (diag.) & ${-1.56_{\pm 0.00}}$ & ${-1.47_{\pm 0.00}}$ & ${-1.47_{\pm 0.00}}$ \\
$\mathrm{T}(2)$\scshape-ConvCNP & ${-1.71_{\pm 0.01}}$ & ${-1.77_{\pm 0.01}}$ & ${-1.76_{\pm 0.00}}$ \\
$\mathrm{E}(2)$\scshape-SteerCNP & ${-1.61_{\pm 0.00}}$ & ${-1.57_{\pm 0.00}}$ & ${-1.57_{\pm 0.01}}$ \\
\bottomrule
\end{tabular}
    \hfill
    \begin{subfigure}{0.44\textwidth}
        \includegraphics[trim={0 0 0 0},clip,width=1.\linewidth]{../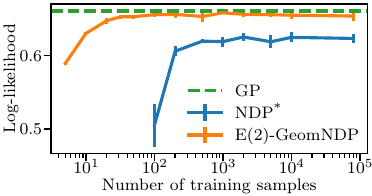}
    \label{fig:ablation_training_samples}
    \vspace{-6.em}
    \end{subfigure}
    \caption{
    Quantitative results for experiments on GP vector fields.
    Mean predictive log-likelihood ($\uparrow$) and confidence interval estimated over 5 random seeds.
    \emph{Left}: Comparison with neural processes.
    Statistically significant results are in \textbf{bold}.
    \emph{Right}: Ablation study when varying the number of training data samples. 
    }
\label{fig:vec_gp_results}
\end{figure}


\subsection{Global tropical cyclone trajectory prediction}
Finally, we assess our model on a task where the domain of the stochastic process is a non-Euclidean manifold.
We model the trajectories of cyclones over the earth, modelled as sample paths of the form \(\R \to \c{S}^2\) coming from a stochastic process.
The data is drawn from the International Best Track Archive for Climate Stewardship (IBTrACS) Project, Version 4 (\cite{IBTrACSv4paper,IBTrACSv4data}) 
and preprocessed as per \cref{sec:cyclonedetail}, where details on the implementation of the score function, the ODE/SDE solvers used for the sampling, and baseline methods can be found.

\cref{fig:cyclone_comparison} shows some cyclone trajectories samples from the data process and from a trained \textsc{\method} model.
We also demonstrate how such trajectories can be interpolated or extrapolated using the conditional sampling method detailed in \cref{sec:conditional_sampling}.
Such conditional sample paths are shown in \cref{fig:interp_extrap_cyclone}.
Additionally, we report in \cref{tab:cyclone_results} the likelihood
\footnote{We only report likelihoods of models defined with respect to the uniform measure on $\mathcal{S}^2$.
} 
and MSE for a series of methods.
The interpolation task involves conditioning on the first and last 20\% of the cyclone trajectory and predicting intermediary positions. 
The extrapolation task involves conditioning on the first 40\% of trajectories and predicting future positions. 
We see that the GPs (modelled as $f:\R\to\R^2$, one on latitude/longitude coordinates, the other via a stereographic projection, using a diagonal RBF kernel with hyperparameters fitted with maximum likelihood) fail drastically given the high non-Gaussianity of the data.
In the interpolation task, the \textsc{NDP} performs as well as the \textsc{\method}, but the additional geometric structure of modelling the outputs living on the sphere appears to significantly help for extrapolation.
See \cref{sec:cyclonedetail} for more fine-grained results.
\begin{figure}
    \centering
    \begin{subfigure}{0.49\textwidth}
        \includegraphics{../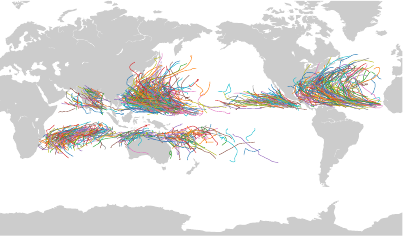}
    \end{subfigure}
    \hfill
    \begin{subfigure}{0.49\textwidth}
        \includegraphics{../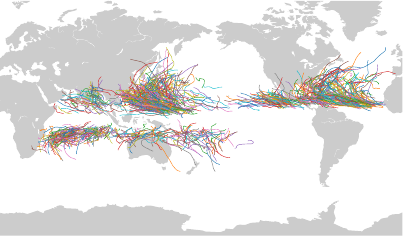}
    \end{subfigure}
    \caption{\emph{Left:} 1000 samples from the training data. \emph{Right:} 1000 samples from the trained model.}
    \label{fig:cyclone_comparison}
\end{figure}

\begin{figure}
    \centering
    \begin{subfigure}{0.49\textwidth}
        \includegraphics{../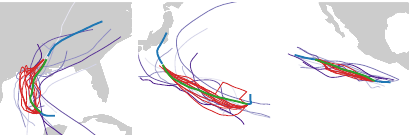}
        \caption{Interpolation}
    \end{subfigure}
    \hfill
    \begin{subfigure}{0.49\textwidth}
        \includegraphics{../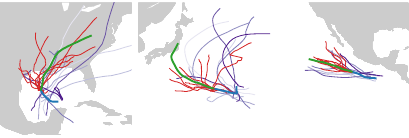}
        \caption{Extrapolation}
    \end{subfigure}
    \vspace{-.7em}
    \caption{\emph{Top:} Examples of conditional trajectories sampled from the GeomNDP model. \emph{Blue:} Conditioned sections of the trajectory. \emph{Green:} The actual trajectory of the cyclone. \emph{Red:} conditional samples from the model. \emph{Purple:} closest matching trajectories in the dataset to the conditioning data.
    }
    \label{fig:interp_extrap_cyclone}
\end{figure}

\begin{table}
    \centering
    \small
    \begin{tabular}{lrrrrr}
    \toprule
        \multirow{2}{*}{\textsc{\textbf{Model}}} & \multicolumn{1}{c}{\textsc{Test data}} & \multicolumn{2}{c}{\textsc{Interpolation}}  & \multicolumn{2}{c}{\textsc{Extrapolation}} \\
        & {Likelihood} & {Likelihood} & {MSE (km)} & {Likelihood} & {MSE (km)}\\
        \midrule
        \rowcolor{pearDark!20} \textsc{\method ($\R\to\c{S}^2$)} & $\mathbf{802_{\pm 5}}$ & $\mathbf{535_{\pm 4}}$ &  $\mathbf{162_{\pm 6}}$ & $\mathbf{536_{\pm 4}}$ & $\mathbf{496_{\pm 14}}$ \\
        \textsc{Stereographic GP ($\R \to \R^2/\{0\}$)} & $393_{\pm 3}$ & $266_{\pm 3}$ & $2619_{\pm 13}$ & $245_{\pm 2}$  & $6587_{\pm 55}$ \\
        \textsc{NDP ($\R\to\R^2$)} & - & - &  ${166_{\pm 22}}$ & - & $769_{\pm 48}$ \\
        \textsc{GP ($\R\to\R^2$)} & - & - & $6852_{\pm 41}$ & - & $8138_{\pm 87}$ \\
        \bottomrule
    \end{tabular}
    \caption{
    Comparative results of different models on the cyclone dataset, comparing test set likelihood, interpolation likelihood and mean squared error, and extrapolation likelihood and mean squared error.
    Mean and std are estimated over 5 data splits / random seeds.
    }
    \label{tab:cyclone_results}
\end{table}

\section{Discussion}
\label{sec:discussion}

In this work, we have extended diffusion models to model invariant stochastic processes over tensor fields.
We did so by 
\begin{enumerate*}[label=(\alph*)]
\item constructing a continuous noising process over function spaces which correlate input samples with an equivariant kernel, 
\item parameterising the score with an equivariant neural network.
\end{enumerate*}
We have empirically demonstrated the ability of our introduced model {\scshape \method} to fit complex stochastic processes, and by encoding the symmetry of the problem at hand, we show that it is more data efficient and better able to generalise.

We highlight below some current limitations and important research directions.
First, evaluating the model is slow as it relies on costly SDE or ODE solvers.
In that respect our model shares this limitation with existing diffusion models. 
Second, targeting a white noise process appears to over-perform other Gaussian processes.
In future work, we would like to investigate the practical influence of different kernels.
Third, we wish to apply our method for modelling higher order tensors, such as moment of inertia in classical mechanics~\citep{thomas2018Tensor} or curvature tensor in general relativity.
Fourth, strict  invariance may sometimes be too strong, we thus suggest softening it by amortising the score network over extra spatial information available from the problem at hand.


\section*{Acknowledgements}
We are grateful to Paul Rosa for helping with the proof,
and to José Miguel Hernández-Lobato for useful discussions.
We thank the $\texttt{hydra}$~\cite{Yadan2019Hydra}, $\texttt{jax}$~\cite{jax2018github} and $\texttt{geomstats}$~\cite{geomstats2020} teams, as our library is built on these great libraries.
Richard E. Turner and Emile Mathieu are supported by an EPSRC Prosperity Partnership EP/T005386/1 between Microsoft Research and the University of Cambridge.
Michael J. Hutchinson is supported by the EPSRC Centre for Doctoral Training in Modern Statistics and Statistical Machine Learning (EP/S023151/1).
\newpage

\printbibliography

\newpage
\appendix
\appendixhead
\section{Organisation of appendices}
\label{sec:organ_suppl}

In this supplementary, we first introduce in \cref{sec:continuous_process} an Ornstein Uhlenbeck process on function space (via finite marginals) along with several score approximations.
Then in \cref{sec:manifold_valued}, we show how this methodology extend to manifold-valued inputs or outputs.
Later in \cref{sec:inv_ndp}, we derive sufficient conditions for this introduced model to yield a group invariant process.
What's more in \cref{sec:app_conditional_sampling}, we study some conditional sampling schemes.
Eventually in \cref{app:experimental_details}, we give a thorough description of experimental settings along with additional empirical results.

\section{Ornstein Uhlenbeck on function space} \label{sec:continuous_process}

\subsection{Multivariate Ornstein-Uhlenbeck process}
\label{sec:multivariate_ou}

First, we aim to show that we can define a stochastic process on an infinite dimensional function space, by defining the joint finite marginals $\bfY(\x)$ as the solution of a multidimensional Ornstein-Uhlenbeck process.
In particular, for any set of input $\x = (x_1, \cdots, x_k) \in \mathcal{X}^k$, we define the joint marginal as the solution of the following SDE
\begin{equation}\label{eq:app_forward_SDE_sp}
  \rmd \tilde{\bfY}_t(\x) = (m(\x) -\tilde{\bfY}_t(\x))/2~\beta_t \rmd t + \sqrt{\beta_t {K}(\x,\x)}  \rmd \bfB_t  \eqsp .
\end{equation}

\begin{proposition}{\citep{phillips2022Spectral}}{}
\label{prop:mOU}
   We assume we are given a data process $(\bfY_0(x))_{x \in \mathcal{X}}$ and we denote by $\mathbf{G} \sim \mathrm{GP}(0, k)$ a Gaussian process with zero mean and covariance.
   Then let's define 
   $$\bfY_t \triangleq e^{-\frac{1}{2}\cdot \int_{s=0}^t \beta_s ds} ~\bfY_0 + \left(1 - e^{-\frac{1}{2}\cdot \int_{s=0}^t \beta_s ds} \right) m + \left(1 - e^{-\int_{s=0}^t \beta_s ds} \right)^{1/2} \mathbf{G}.$$
   Then $(\bfY_t(x))_{x \in \mathcal{X}}$ is a stochastic process (by virtue of being a linear combination of stochastic processes).
   We thus have that $\bfY_t \xrightarrow[t \rightarrow 0]{a.s.} \bfY_0$ and $\bfY_t \xrightarrow[t \rightarrow \infty]{a.s.} \bfY_\infty$ with $\bfY_\infty \sim \mathrm{GP}(m, k)$, so effectively $(\bfY_t(x))_{t \in \R_+, x \in \mathcal{X}}$ interpolates between the data process and this limiting Gaussian process.
   Additionally,  $\mathcal{L}({\bfY}_t|{\bfY}_0=y_0) = \mathrm{GP}(m_t, K_t)$ 
    with $m_t = e^{-\frac{1}{2}\cdot \int_{s=0}^t \beta_s ds} ~y_0 + \left(1 - e^{-\frac{1}{2}\cdot \int_{s=0}^t \beta_s ds} \right) m $ and $\Sigma_t = \left(1 - e^{-\int_{s=0}^t \beta_s ds} \right) {K}$.
    Furthermore, $(\bfY_t(x))_{t \in \R_+, x \in \mathcal{X}}$ is the solution of the SDE in \eqref{eq:app_forward_SDE_sp}.
\end{proposition}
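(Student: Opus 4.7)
The plan is to let $a_t = \exp(-\tfrac{1}{2} \int_0^t \beta_s \, ds)$ and $\alpha_t = \int_0^t \beta_s\, ds$, so that $a_t^2 = e^{-\alpha_t}$ and the proposed representation becomes
\[
\bfY_t = a_t\, \bfY_0 + (1-a_t)\, m + (1 - a_t^2)^{1/2}\, \mathbf{G},
\]
with $\mathbf{G}$ independent of $\bfY_0$. Well-posedness of $(\bfY_t(x))_{x \in \mathcal{X}}$ as a stochastic process for each fixed $t$ is immediate: for any finite $\x = (x_1,\dots,x_n) \in \mathcal{X}^n$, the right-hand side is an affine combination of the random vectors $\bfY_0(\x)$ and $\mathbf{G}(\x)$ shifted by the deterministic vector $m(\x)$, and Kolmogorov consistency passes through this affine map. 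For the limits, $t \mapsto a_t$ is continuous with $a_0 = 1$ and $a_t \to 0$ as $t \to \infty$ (assuming $\int_0^\infty \beta_s \, ds = \infty$), so evaluating the representation pointwise in $\omega$ yields $\bfY_t \to \bfY_0$ as $t \to 0$ and $\bfY_t \to m + \mathbf{G} \sim \mathrm{GP}(m, k)$ as $t \to \infty$ almost surely.

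Next I would pin down the conditional law by conditioning on $\bfY_0 = y_0$: by independence only $\mathbf{G}$ remains random, so $\bfY_t \mid \bfY_0 = y_0$ is an affine image of a centred Gaussian process, hence itself a Gaussian process with mean $a_t y_0 + (1 - a_t) m$ and covariance $(1 - a_t^2)\, k = (1 - e^{-\alpha_t})\, k$, matching the stated $m_t$ and $\Sigma_t$. The SDE verification is the core of the argument and would proceed by the integrating-factor method. Fix $\x$ and apply It\^o's formula to $Z_t = e^{\alpha_t/2}(\bfY_t(\x) - m(\x))$, where $\bfY_t(\x)$ solves \eqref{eq:app_forward_SDE_sp}; the drift terms cancel and one obtains $dZ_t = e^{\alpha_t/2} \sqrt{\beta_t}\, K(\x,\x)^{1/2}\, d\bfB_t$. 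Integrating gives
\[
\bfY_t(\x) = m(\x) + a_t\,(\bfY_0(\x) - m(\x)) + a_t \int_0^t a_s^{-1} \sqrt{\beta_s}\, K(\x,\x)^{1/2}\, d\bfB_s,
\]
and the It\^o isometry, together with the substitution $u = \alpha_s$, identifies the stochastic integral as a centred Gaussian vector with covariance $K(\x,\x)\, a_t^2 (e^{\alpha_t}-1) = K(\x,\x)(1 - a_t^2)$. Conditionally on $\bfY_0$, the SDE solution at time $t$ thus has exactly the Gaussian law produced by the constructive representation.

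The main obstacle is conceptual rather than technical. The constructive formula uses a \emph{single} fixed $\mathbf{G}$ across all $t$, which imposes a perfectly time-correlated coupling that cannot coincide with the joint-in-$t$ law of the SDE solution. I would therefore read the final assertion of the proposition as the matching of one-dimensional marginal laws at each fixed $t$, which is precisely what the It\^o-integral variance calculation above delivers. Minor care is also needed to note that $\bfY_0$ must be taken independent of $\mathbf{G}$ in the construction and of the driving Brownian motion in the SDE, and that the SDE assertion is understood for each finite evaluation set $\x$, with no further work required to glue these finite-dimensional diffusions into an infinite-dimensional object.
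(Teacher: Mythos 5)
Your proposal is correct, and it reaches the conclusion by a genuinely different route from the paper. The paper never solves the SDE: it writes down the ODEs satisfied by the mean and covariance of the multivariate OU process (the standard moment-evolution equations), solves those two linear ODEs to obtain $m_t$ and $\Sigma_t$, and then checks that the constructive representation $a_t\,\bfY_0 + (1-a_t)m + (1-a_t^2)^{1/2}\mathbf{G}$ has the same first two moments. You instead solve the SDE explicitly by the integrating factor $e^{\alpha_t/2}$ and identify the law of the resulting It\^o integral via the isometry. Your route buys two things the paper's argument leaves implicit: it \emph{establishes} that the conditional law of the SDE solution is Gaussian (moment matching alone only proves distributional equality once Gaussianity of both sides is known, which for a linear SDE is standard but unstated in the paper), and it makes explicit that the identification can only hold at the level of the time-$t$ marginals --- your observation that a single $\mathbf{G}$ shared across all $t$ induces a perfectly time-correlated coupling that cannot agree with the joint-in-$t$ law of the diffusion is exactly right, and the paper's moment computation is silently subject to the same caveat. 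The paper's approach is slightly shorter and avoids stochastic integration entirely, at the cost of these gaps. Your handling of the boundary limits (including the needed hypothesis $\int_0^\infty \beta_s\,\mathrm{d}s = \infty$ for the $t\to\infty$ limit, and independence of $\bfY_0$ from $\mathbf{G}$ and from the driving Brownian motion) is also more careful than the paper's, which asserts the convergences without comment.
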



\begin{proof}{}{}
We aim to compute the mean and covariance of the process $(\bfY_t)_{t \geq 0}$ described by the SDE \eqref{eq:forward_SDE_sp}.
First let's recall the time evolution of the mean and covariance of the solution from a multivariate Ornstein-Uhlenbeck process given by 
\begin{equation} 
  \rmd \bfY_t = f(\bfY_t, t) \rmd t + L(\bfY_t, t) \rmd \bfB_t.
\end{equation}
We know that the time evolution of the mean and the covariance are given respectively by \citet{sarkka2019Applied}
\begin{align}
    \frac{\rmd m_t}{\rmd t} &= \E[f(\bfY_t, t)] \\
    \frac{\rmd \Sigma_t}{\rmd t} &= \E[f(\bfY_t, t) (m_t - \bfY_t)^\top] + \E[(m_t - \bfY_t) f(\bfY_t, t)^\top ] + \E[L(\bfY_t, t) L(\bfY_t, t)^\top ].
\end{align}
Plugging in the drift $f(\bfY_t, t) = 1/2 \cdot (m - \bfY_t) \beta_t$ and diffusion term $L(\bfY_t, t) = \sqrt{\beta_t {K}}$ from \eqref{eq:forward_SDE_sp}, we get
\begin{align}
    \frac{\rmd m_t}{\rmd t} &= 1/2 \cdot (m - \bfY_t) \beta_t \\
    \frac{\rmd \Sigma_t}{\rmd t} &= \beta_t \left[ {K} -\Sigma_t \right].
\end{align}
Solving these two ODEs we get
\begin{align}
    m_t &= e^{-\frac{1}{2}\cdot \int_{s=0}^t \beta_s ds} m_0 + \left(1 - e^{-\frac{1}{2}\cdot \int_{s=0}^t \beta_s ds} \right) m \\
    \Sigma_t &= {K} + e^{-\int_{s=0}^t \beta_s ds} \left( \Sigma_0 -  {K} \right) 
\end{align}
with $m_0 \triangleq \E[\bfY_0]$ and  $\Sigma_0 \triangleq \mathrm{Cov}[\bfY_0]$.

Now let's compute the first two moments of $(\bfY_t(x))_{x \in \mathcal{X}}$. We have
\begin{align}
    \E[\bfY_t] &= \E \left[ e^{-\frac{1}{2}\cdot \int_{s=0}^t \beta_s ds} ~\bfY_0 + \left(1 - e^{-\frac{1}{2}\cdot \int_{s=0}^t \beta_s ds} \right) m + \left(1 - e^{-\frac{1}{2}\cdot \int_{s=0}^t \beta_s ds} \right) \mathbf{G} \right] \\
    &= e^{-\frac{1}{2}\cdot \int_{s=0}^t \beta_s ds} m_0 + \left(1 - e^{-\frac{1}{2}\cdot \int_{s=0}^t \beta_s ds} \right) m \\
    &= m_t
\end{align}
\begin{align}
    \mathrm{Cov}[\bfY_t] &= \mathrm{Cov}\left[e^{-\frac{1}{2}\cdot \int_{s=0}^t \beta_s ds} ~\bfY_0\right] + \mathrm{Cov}\left[\left(1 - e^{- \int_{s=0}^t \beta_s ds} \right)^{1/2} \mathbf{G}\right] \\
     &= e^{- \int_{s=0}^t \beta_s ds} ~\Sigma_0 + \left(1 - e^{- \int_{s=0}^t \beta_s ds} \right) {K} \\
     &= {K} + e^{-\int_{s=0}^t \beta_s ds} \left( \Sigma_0 -  {K} \right) \\
     &= \Sigma_t \eqsp .
\end{align}
%
\end{proof}


\subsection{Conditional score}
\label{sec:multivariate_ou_score}
Hence, condition on ${\bfY}_0$ the score is the gradient of the log Gaussian characterised by mean $m_{t|0} = e^{-\frac{1}{2} B(t)} \bfY_0$ and $\Sigma_{t|0} = (1 - e^{-B(t)}) \mathrm{K}$ with $B(t) = \int_{0}^t \beta(s) ds$ which can be derived from the above marginal mean and covariance with $m_0 = \bfY_0$ and $\Sigma_0 = 0$.
\begin{align}
     \nabla_{{\bfY}_t} \log p_{t}({\bfY}_t| \bfY_0)
     &= \nabla_{{\bfY}_t} \log \mathcal{N}\left({\bfY}_t | m_{t|0}, \Sigma_{t|0} \right) \\
     &= \nabla_{{\bfY}_t} - 1/2 (\bfY_t - m_{t|0})^\top \Sigma_{t|0}^{-1} (\bfY_t - m_{t|0}) + c \\
     &= - \Sigma_{t|0}^{-1} (\bfY_t - m_{t|0})  \\
     &= - \mathrm{L}_{t|0}^{-\top} \mathrm{L}_{t|0}^{-1} \mathrm{L}_{t|0} \epsilon \\
     &= - \mathrm{L}_{t|0}^{-\top} \epsilon
\end{align}
where $\mathrm{L}_{t|0}$ denotes the Cholesky decomposition of $\Sigma_{t|0}=\mathrm{L}_{t|0} \mathrm{L}_{t|0}^\top$, and $\bfY_t = m_{t|0} + \mathrm{L}_{t|0} \epsilon$.



Then we can plugin our learnt (preconditioned) score into the backward SDE \ref{eq:backward_SDE} which gives
\begin{align}\label{eq:backward_SDE_sp_approx}
  \rmd \bar{\bfY}_t|\x 
   &= \left[-(m(\x) - \bar{\bfY}_t)/2 +  \mathrm{K}(\x,\x) \nabla_{\bar{\bfY}_t} \log p_{T-t}(t, \x, \bar{\bfY}_t) \right] \rmd t + \sqrt{\beta_t \mathrm{K}(\x,\x)} \beta_t \rmd \bfB_t
\end{align}

\subsection{Several score parametrisations}
\label{sec:app_parameterisation}

In this section, we discuss several parametrisations of the neural network and the objective.

For the sake of versatility, we opt to employ the symbol $D_\theta$ for the network instead of $s_\theta$ as mentioned in the primary text, as it allows us to approximate not only the score but also other quantities from which the score can be derived. In full generality, we use a residual connection, weighted by
${c}_{\text{out}}, {c}_{\text{skip}}: \rset \rightarrow \rset$, to parameterise the network
\begin{align}
    D_\theta(t, \v{Y}_t) = c_{\text{skip}}(t) \v{Y}_t + c_{\text{out}}(t) F_\theta(t, \v{Y}_t).
\end{align}
We recall that the input to the network is time $t$, and the noised vector $\v{Y}_t = \v{\mu}_{t|0} + \v{n}$, where $\v{\mu}_{t|0} = \textrm{e}^{-B(t)/2} \v{Y}_0$ and $\v{n} \sim \mathcal{N}(0, \Sigma_{t|0})$ with $\Sigma_{t|0} = (1 - \textrm{e}^{-B(t)}) K$. The gram matrix $K$ corresponds to $k(X, X)$ with $k$ the limiting kernel.  We denote by $\mathrm{L}_{t|0}$ and $\mathrm{S}$ respectively the Cholesky decomposition of $\Sigma_{t|0}=\mathrm{L}_{t|0} \mathrm{L}_{t|0}^\top$ and $\mathrm{K} = \mathrm{S} \mathrm{S}^\top$.

The denoising score matching loss weighted by $\Lambda(t)$ is given by
\begin{align}
\label{eq:app:loss}
     \mathcal{L}(\theta)
     = \mathbb{E} \left[\| {D}_\theta(t, \v{Y}_t) - \nabla_{{\v{Y}}_t} \log p_{t}(\v{Y}_t| \v{Y}_0)\|_{\Lambda(t)}^2  \right]
\end{align}

\begin{table}[]
\centering
\small
\begin{tabular}{lllll}
\toprule
 & No precond. & Precond. $K$ & Precond. $S^\top$ & Predict $\v{Y}_0$ \\
\midrule
$c_\text{skip}$ & 0 & 0 & 0 & 1 \\
$c_\text{out}$ & $(\sigma_{t|0} + 10^{-3})^{-1}$ & $(\sigma_{t|0} + 10^{-3})^{-1}$ & $(\sigma_{t|0} + 10^{-3})^{-1}$ & 1 \\
Loss & $\|\sigma_{t|0} S^\top D_{\theta} + \v{z}\|_2^2$ & $\|\sigma_{t|0} D_{\theta} + S \v{z}\|_2^2$ & $\|\sigma_{t|0} D_{\theta} + \v{z}\|_2^2$ & $\|D_{\theta} - \v{Y}_0\|^2_2$ \\
$K \nabla \log p_t$ & $K D_\theta$ & $D_\theta$ & $ S D_\theta $ & $-\Sigma_{t|0}^{-1}(\v{Y}_t - \textrm{e}^{{-}\frac{B(t)}{2}} D_\theta)$ \\
\bottomrule
\end{tabular}
\caption{Summary of different score parametrisations as well as the values for $c_{\text{skip}}$ and $c_{\text{out}}$ that we found to be optimal, based on  the recommendation from \citet[Appendix B.6]{karras2022Elucidatinga}.\label{tab:app:score-param}}
\end{table}


\paragraph{No preconditioning}

By reparametrisation, let $\v{Y}_t = \v{\mu}_{t|0} + L_{t|0} \v{z}$, where $\v{z} \sim \mathcal{N}(\v{0}, \m{I})$, the loss from \cref{eq:app:loss} can be written as
\begin{align}
     \mathcal{L}(\theta) &= \mathbb{E}\left[ \| D_\theta(t, \v{Y}_t) + \Sigma_{t|0}^{-1} (\v{Y}_t - \v{\mu}_{t|0}) \|^2_{\Lambda(t)} \right] \\
      &= \mathbb{E}\left[ \| D_\theta(t, \v{Y}_t) + \Sigma_{t|0}^{-1} L_{t|0}\v{z} \|^2_{\Lambda(t)} \right] \\
      &= \mathbb{E}\left[ \| D_\theta(t, \v{Y}_t) + L_{t|0}^{-\top}\v{z} \|^2_{\Lambda(t)} \right] \\
\end{align}
Choosing $\Lambda(t) = \Sigma_{t|0} = L_{t|0}L_{t|0}^\top$ we obtain
\begin{align}
     \mathcal{L}(\theta) &= \mathbb{E}\left[ \| L_{t|0}^\top D_\theta(t, \v{Y}_t) + \v{z} \|^2_{2} \right]\\
     &= \mathbb{E}\left[ \| \sigma_{t|0} S^\top D_\theta(t, \v{Y}_t) + \v{z} \|^2_{2} \right].
\end{align}

\paragraph{Preconditioning by ${K}$}

Alternatively, one can train the neural network to approximate the preconditioned score $D_\theta \approx \m{K} \nabla_{\v{Y}_t} \log p_t(\v{Y}_t | \v{Y}_0)$. The loss, weighted by $\Lambda = \sigma_{t|0}^2\m{I}$, is then given by
\begin{align}
     \mathcal{L}(\theta)  &= \mathbb{E}\left[ \| D_\theta(t, \v{Y}_t) + {K}\,L_{t|0}^{-\top}\v{z} \|^2_{\Lambda(t)} \right] \\
     &= \mathbb{E}\left[ \| D_\theta(t, \v{Y}_t) + \sigma_{t|0}^{-1}{S}\v{z} \|^2_{\Lambda(t)} \right] \\
     &= \mathbb{E}\left[ \| \sigma_{t|0}D_\theta(t, \v{Y}_t) + {S}\v{z} \|^2_{2} \right].
\end{align}

\paragraph{Precondition by $S^\top$}
A variation of the previous one, is to precondition the score by the transpose Cholesky of the limiting kernel gram matrix, such that
$
D_\theta \approx {S}^\top \nabla_{\v{Y}_t} \log p_t(\v{Y}_t | \v{Y}_0)
$
.

The loss, weighted by $\Lambda = \sigma_{t|0}^2\m{I}$, becomes
\begin{align}
     \mathcal{L}(\theta)  &= \mathbb{E}\left[ \| D_\theta(t, \v{Y}_t) + {S}^\top\,L_{t|0}^{-\top}\v{z} \|^2_{\Lambda(t)} \right] \\
     &= \mathbb{E}\left[ \| D_\theta(t, \v{Y}_t) + \sigma_{t|0}^{-1}\v{z} \|^2_{\Lambda(t)} \right] \\
     &= \mathbb{E}\left[ \| \sigma_{t|0}D_\theta(t, \v{Y}_t) + \v{z} \|^2_{2} \right].
\end{align}

\paragraph{Predicting $\v{Y}_0$}

Finally, an alternative strategy is to predict $\v{Y}_0$ from a noised version $\v{Y}_t$. In this case, the loss takes the simple form
$$
\mathcal{L}(\theta) = \mathbb{E}\left[\| D_\theta(t, \v{Y}_t) - \v{Y}_0 \|^2_2\right].
$$
The score can be computed from the network's prediction following
\begin{align}
\nabla \log p_t(\v{Y}_t | \v{Y}_0) &= -\Sigma_{t|0}^{-1}(\v{Y}_t - \v{\mu}_{t|0})\\
    &= -\Sigma_{t|0}^{-1}(\v{Y}_t - \textrm{e}^{-B(t)/2} \v{Y}_0) \\
    &\approx -\Sigma_{t|0}^{-1}\left(\v{Y}_t - \textrm{e}^{-B(t)/2} D_\theta(t, \v{Y}_t)\right) \\
\end{align}

\Cref{tab:app:score-param} summarises the different options for parametrising the score as well as the values for $c_{\text{skip}}$ and $c_{\text{out}}$ that we found to be optimal, based on  the recommendation from \citet[Appendix B.6]{karras2022Elucidatinga}. In practice, we found the precondition by $K$ parametrisation to produce the best results, but we refer to \cref{fig:app:score-parametrization-ablation} for a more in-depth ablation study.

\subsection{Exact (marginal) score in Gaussian setting}
Interpolating between Gaussian processes $GP(m_0, \Sigma_0)$ and $GP(m, \mathrm{K})$
\begin{align}
\mathrm{K} \nabla_{\bar{\bfY}_t} \log p_t(\bfY_t)
&= - \mathrm{K} \Sigma_{t}^{-1} (\bfY_t - m_{t}) \\
&= - \mathrm{K} [\mathrm{K} + e^{-\int_{s=0}^t \beta_s ds} \left( \Sigma_0 -  \mathrm{K} \right)]^{-1} (\bfY_t - m_{t}) \\
&= - \mathrm{K} ({\mathrm{L}_t}{L_t}^\top)^{-1} (\bfY_t - m_{t}) \\
&= - \mathrm{K} {{\mathrm{L}_t}^\top}^{-1} {\mathrm{L}_t}^{-1}\textbf{} (\bfY_t - m_{t}) \\
\end{align}
with $\Sigma_t = \mathrm{K} + e^{-\int_{s=0}^t \beta_s ds} \left( \Sigma_0 -  \mathrm{K} \right) = {\mathrm{L}_t}{\mathrm{L}_t}^\top$ obtained via Cholesky decomposition.

\subsection{Langevin dynamics}
Under mild assumptions on $\nabla \log p_{T-t}$ \citep{durmus:moulines:2016} the following SDE
%
\begin{equation}\label{eq:app_langevin}
  \rmd \bfY_s = \tfrac{1}{2} \mathrm{K} \nabla \log p_{T-t}(\bfY_s) \rmd s +  \sqrt{\mathrm{K}} \rmd \bfB_s
\end{equation}
admits a solution $(\bfY_s)_{s\ge0}$ whose law $\mathcal{L}(\bfY_s)$ converges with geometric rate to $p_{T-t}$ for any invertible matrix $\mathrm{K}$.

\subsection{Likelihood evaluation}

%
Similarly to \citet{song2020score},
we can derive a deterministic process which has the same marginal density as the SDE \eqref{eq:forward_SDE_sp}, which is given by the following Ordinary Differential Equation (ODE)---referred as the probability flow ODE
\begin{align} \label{eq:probability_flow_ode}
\rmd 
\begin{pmatrix}
    \hspace{2.2em}  \fwd_t(\x)  \\
    \log p_t(\fwd_t(\x))
  \end{pmatrix}
  = 
  \begin{pmatrix}
    \hspace{2em} \tfrac{1}{2}  \left\{ m(\x) - \fwd_t(\x) -  \mathrm{K}(\x,\x) \nabla \log p_t( \fwd_t(\x)) \right\} \beta_t \\
     - \tfrac{1}{2}  \dive \left\{ m(\x) - \fwd_t(\x) -  \mathrm{K}(\x,\x) \nabla \log p_t( \fwd_t(\x)) \right\} \beta_t 
  \end{pmatrix}
  \rmd t     .
\end{align}

Once the score network is learnt, we can thus use it in conjunction with an ODE solver to compute the likelihood of the model.

\subsection{Discussion consistency}
\label{sec:consistency}
%
So far we have defined a generative model over functions via its finite marginals $\bwd^\theta_T(\x)$.
These finite marginals were to arise from a stochastic process if, 
as per the Kolmogorov extension theorem \citep{oksendal2003stochastic},
they satisfy \emph{exchangeability} and \emph{consistency} conditions.
Exchangeability can be satisfied by parametrising the score network such that the score network
is equivariant w.r.t\ permutation, i.e.\
$\mathbf{s}_\theta(t, \sigma \circ \x, \sigma \circ \y) = \sigma \circ
\mathbf{s}_\theta(t, \x, \y)$ for any $\sigma \in \Sigma_n$.
Additionally, we have that the noising process $({\fwd}_t(x))_{x \in \X}$ is trivially
consistent for any $t \in \R_+$ since it is a stochastic process as per
\cref{prop:mOU}, and consequently so is the (true) time-reversal
$(\bwd_t(x))_{x \in \X}$.  Yet, when approximating the score
$\mathbf{s}_\theta \approx \nabla \log p_{t}$, we lose the consistency over the generative process $\bwd^\theta_t(\x)$ as the
constraint on the score network is non trivial to satisfy.
This is actually a really strong constraint on the model class, and as soon as one goes beyond linearity (of the posterior w.r.t.\ the context set), it is non trivial to enforce without directly parameterising a stochastic process, e.g.\ as \citet{phillips2022Spectral}.
There thus seems to be a strong trade-off between satisfying consistency, and the model's ability to fit complex process and scale to large datasets.

\section{Manifold-valued diffusion process}
\label{sec:manifold_valued}

\subsection{Manifold-valued inputs}
\label{sec:manifold_valued_input}

In the main text we dealt with a simplified case of tensor fields where the tensor fields are over Euclidean space. Nevertheless, it is certainly possible to apply our methods to these settings. Significant work has been done on performing convolutions on feature fields on generic manifolds (a superset of tensor fields on generic manifolds), core references being \cite{cohen2021equivariant} for the case of homogeneous spaces and \cite{weiler2021coordinate} for more general Riemannian manifolds. We recommend these as excellent mathematical introductions to the topic and build on them to describe how to formulate diffusion models over these spaces.

\textbf{Tensor fields as sections of bundles.} Formally the fields we are interested in modelling are sections $\sigma$ of associated tensor bundles of the principle $G$-bundle on a manifold $M$. We shall denote such a bundle $BM$ and the space of sections $\Gamma(BM)$. The goal, therefore, is to model \textit{random elements} from this space of sections. For a clear understanding of this definition, please see \citet[pages 73-95]{weiler2021coordinate} for an introduction suitable to ML audiences. Prior work looking at this setting is \cite{hutchinson2021vector} where they construct Gaussian Processes over tensor fields on manifolds.

\textbf{Stochastic processes on spaces of sections.} Given we can see sections as maps $\sigma: M \to BM$, where an element in $BM$ is a tuple $(m, b)$, $m$ in the base manifold and $b$ in the typical fibre, alongside the condition that the composition of the projection $\operatorname{proj}_i: (m, b)\mapsto m$ with the section is the identity, $\operatorname{proj}_i \circ \sigma = \operatorname{Id}$ it is clear we can see distribution over sections as stochastic processes with index set the manifold $M$, and output space a point in the bundle $BM$, with the projection condition satisfied. The projection onto finite marginals, i.e.\ a finite set of points in the manifold, is defined as $\pi_{m_1, ..., m_n}(\sigma) = (\sigma(m_1), ..., \sigma(m_n))$. 

\textbf{Noising process.} To define a noising process over these marginals, we can use Gaussian Processes defined in \cite{hutchinson2021vector} over the tensor fields. The convergence results of \citet{phillips2022Spectral} hold still, and so using these Gaussian Processes as noising processes on the marginals also defines a noising process on the whole section.

\textbf{Reverse process.} The results of \cite{cattiaux2021time} are extremely general and continue to hold in this case of SDEs on the space of sections. Note we don't actually need this to be the case, we can just work with the reverse process on the marginals themselves, which are much simpler objects. It is good to know that it is a valid process  on full sections though should one want to try and parameterise a score function on the whole section akin to some other infinite-dimension diffusion models.

\textbf{Score function.} The last thing to do therefore is parameterise the score function on the marginals. If we were trying to parameterise the score function over the \textit{whole} section at once (akin to a number of other works on infinite dimension diffusions), this could present some problems in enforcing the smoothness of the score function. As we only deal with the score function on a finite set of marginals, however, we need not deal with this issue and this presents a distinct advantage in simplicity for our approach. All we need to do is pick a way of numerically representing points on the manifold and b) pick a basis for the tangent space of each point on the manifold. This lets us represent elements from the tangent space numerically, and therefore also elements from tensor space at each point numerically as well. This done, we can feed these to a neural network to learn to output a numerical representation of the score on the same basis at each point.

\subsection{Manifold-valued outputs}
\label{sec:manifold_valued_output}
In the setting, where one aim to model a stochastic process with manifold codomain ${\fwd}_t(\x) = ({\fwd}_t(x_1), \cdots, {\fwd}_t(x_n)) \in \M^n$, things are less trivial as manifolds do not have a vector space structure which is necessary to define Gaussian processes.
Fortunately, We can still target a know distribution marginally independently on each marginal, since this is well defined, and as such revert to the Riemannian diffusion models introduced in \citet{debortoli2021neurips} with $n$ independent Langevin noising processes
\begin{equation}
\label{eq:forward_SDE_manifold}
  \rmd {\bfY}_t(\x_k) = - \tfrac{1}{2} \nabla U({\bfY}_t(\x_k))~\beta_t \rmd t + \sqrt{\beta_t }  \rmd \bfB_t^\M \eqsp .
\end{equation}
are applied to each marginal. Hence in the limit $t \rightarrow \infty$, ${\bfY}_t(\x)$ has density (assuming it exists) which factors as $dp / d \mathrm{Vol}_\M((y(x_1), \cdots, y(x_n))) \propto \prod_k e^{-U(y(x_n)))}$. For compact manifolds, we can target the uniform distribution by setting $U(x)=0$. The reverse time process will have correlation between different marginals, and so the score function still needs to be a function of all the points in the marginal of interest.

\section{Invariant neural diffusion processes}
\label{sec:inv_ndp}

\subsection{$\mathrm{E}(n)$-equivariant kernels}
\label{sec:en_equiv_kernels}

A kernel $k: \rset^d \times \rset^d \rightarrow \rset^{d\times d}$ is equivariant if it satisfies the following constraints:
(a) $k$ is \emph{stationnary}, that is if for all $x, x' \in \R^n$ 

\begin{align}
   k(x, x') = k(x - x') \triangleq \tilde{k}(x - x')
\end{align}
and if (b) it satisfies the \emph{angular constraint} for any $h \in H$
\begin{align}
   k(h x, h x') = \rho(h) k(x, x') \rho(h)^\top.
\end{align}

A trivial example of such an equivariant kernel is the diagonal kernel $k(x,x') = k_0(x, x') \mathrm{I}$ \citep{holderrieth2021equivariant}, with $k_0$ stationnary.
This kernel can be understood has having $d$ independent Gaussian process uni-dimensional output, that is, there is no inter-dimensional correlation.

Less trivial examples, are the $\mathrm{E}(n)$ equivariant kernels proposed in \citet{macedo2010learning}.
Namely curl-free and divergence-free kernels, allowing for instance to model electric or magnetic ﬁelds.
Formally we have
$k_\mathrm{curl} = k_0 A$ and $k_\mathrm{div} = k_0 B$
with $k_0$ stationary, e.g.\ squared exponential kernel $k_0(x, x') = \sigma^2 \exp\left(\frac{\| x - x' \|^2}{2 l^2} \right)$, and $A$ and $B$ given by
\begin{align}
    A(x, x') = \mathrm{I} - \frac{(x - x')(x - x')^\top}{l^2}
\end{align}
\begin{align}
    B(x, x') = \frac{(x - x')(x - x')^\top}{l^2} + \left( n -1 - \frac{\|x - x' \|^2}{l^2} \right) \mathrm{I}.
\end{align}
See \citet[Appendix C, ][]{holderrieth2021equivariant} for a proof.

\subsection{Proof of \cref{prop:inv_prior}}
\label{sec:proof_inv_prior}
Below we give two proofs for the group invariance of the generative process, one via the probability flow ODE and one directly via Fokker-Planck.

\begin{proof}{Reverse ODE.}{}
The reverse probability flow associated with the forward SDE \eqref{eq:forward_SDE_sp} with approximate score $\mathbf{s}_\theta(t, \cdot) \approx \nabla \log p_t$ is given by
%

\begin{align} \label{eq:backward_ODE_sp}
  \rmd \bar{\bfY}_t|\x &= \tfrac{1}{2} \left[-m(\x) + \bar{\bfY}_t + \mathrm{K}(\x,\x) \mathbf{s}_\theta(T - t, \x, \bar{\bfY}_t) \right] \rmd t \\
  & \triangleq b_{\text{ODE}}(t, \x, \bar{\bfY}_t) \rmd t
\end{align}
This ODE induces a flow $\phi^b_t: X^n \times Y^n \rightarrow \mathrm{T}Y^n$ for a given integration time $t$, which is said to be $G$-equivariant if the vector field is $G-$equivariant itself, i.e.\ $b(t, g\cdot \x, \rho(g) \bar{\bfY}_t) = \rho(g) b(t, \x, \bar{\bfY}_t)$.
We have that for any $g \in G$
\begin{align}
b_{\text{ODE}}(t, g\cdot \x, \rho(g) \bar{\bfY}_t) 
&= \tfrac{1}{2} \left[ -m(g \cdot \x) + \rho(g) \bfY_t + \mathrm{K}(g \cdot \x,g \cdot \x) ~\mathbf{s}_\theta(t, g \cdot \x, \rho(g) \bar{\bfY}_t) \right]  \\
   &\overset{(1)}{=} \tfrac{1}{2} \left[-\rho(g)m(\x) + \rho(g) \bfY_t + \rho(g) \mathrm{K}(\x,\x)\rho(g)^\top ~\mathbf{s}_\theta(t, g \cdot \x, \rho(g) \bar{\bfY}_t) \right] \\
   &\overset{(2)}{=} \tfrac{1}{2} \left[-\rho(g)m(\x) + \rho(g) \bfY_t + \rho(g) \mathrm{K}(\x,\x)\rho(g)^\top \rho(g) ~\mathbf{s}_\theta(t, \x, \bar{\bfY}_t) \right] \\
   &\overset{(3)}{=} \tfrac{1}{2} \rho(g) \left[-m(\x) + \bfY_t + \mathrm{K}(\x,\x) ~\mathbf{s}_\theta(t, \x, \bar{\bfY}_t) \right]  \\
   &\overset{}{=} \rho(g) b_{\text{ODE}}(t, \x, \bar{\bfY}_t)
\end{align}
with (1) from the $G$-invariant prior GP conditions on $m$ and $k$, (2) assuming that the score network is $G$-equivariant and (3) assuming that $\rho(g) \in O(n)$.
To prove the opposite direction, we can simply follow these computations backwards.
Finally, we know that with a $G$-invariant probability measure $\piinv$ and $G$-equivariant map $\phi$, the pushforward probability measure $\piinv^{-1} \circ \phi$ is also $G$-invariant \citep{kohler2020Equivariant,papamakarios2019normalizing}.
Assuming a $G$-invariant prior GP, and a $G$-equivariant score network, we thus have that the generative model from \cref{sec:diffusion_sp} defines marginals that are $G$-invariant.
\end{proof}

\begin{proof}{Reverse SDE.}{}
The reverse SDE associated of the forward SDE \eqref{eq:forward_SDE_sp} with approximate score $\mathbf{s}_\theta(t, \cdot) \approx \nabla \log p_t$ is given by
\begin{align}\label{eq:backward_SDE_sp_approx2}
  \rmd \bar{\bfY}_t|\x 
   &= \left[-(m(\x) - \bar{\bfY}_t)/2 +  \mathrm{K}(\x,\x) \mathbf{s}_\theta(T - t, \x, \bar{\bfY}_t) \right] \rmd t + \sqrt{\beta_t \mathrm{K}(\x,\x)} \rmd \bfB_t \\
    &\triangleq b_{\text{SDE}}(t, \x, \bar{\bfY}_t) \rmd t + \Sigma^{1/2}(t, \x) ~\rmd \bfB_t.
\end{align}

As for the probability flow drift $b_{\text{ODE}}$, we have that $b_{\text{SDE}}$ is similarly $G$-equivariant, that is $b_{\text{SDE}}(t, g\cdot \x, \rho(g) \bar{\bfY}_t) = \rho(g)  b_{\text{SDE}}(t, \x, \bar{\bfY}_t)$ for any $g \in G$.
Additionally, we have that diffusion matrix is also $G$-equivariant as for any $g \in G$ we have $\Sigma(t, g \cdot \x) = \beta_t \mathrm{K}(g \cdot \x, g \cdot \x) = \beta_t \rho(g) \mathrm{K}(\x, \x) \rho(g)^\top = \rho(g) \Sigma(t, \x) \rho(g)^\top$ since $\mathrm{K}$ is the gram matrix of an $G$-equivariant kernel $k$.

Additionally assuming that $b_{\text{SDE}}$ and $\Sigma$ are bounded, \citet[Proposition 3.6,][]{yim2023se} says that the distribution of $\bar{\bfY}_t$ is $G$-invariant, and in in particular $\mathcal{L}(\bar{\bfY}_0)$.

\end{proof}

\subsection{Equivariant posterior maps}
\label{sec:equiv_posterior}
\label{sec:proof_equiv_posterior}

\def\phiA{\phi_{*, \c{A}}}
\def\iso{\operatorname{Isom}}

\begin{theorem}[Invariant prior stochastic process implies an equivariant posterior map]
    Using the language of \citet{weiler2021coordinate} our tensor fields are sections of an associated vector bundle $\c{A}$ of a manifold $M$ with a $G$ structure. Let $\iso_{GM}$ be the group of $G$-structure preserving isometries on $M$. The action of this group on a section of the bundle $f \in \Gamma(\c{A})$ is given by
    \[ \phi \triangleright f := \phiA \circ f \circ \phi^{-1} \] \cite{weiler2021coordinate}.
    Let $f \sim P$, $P$ a distribution over the space of section. Let $\phi \triangleright P$ be the law of of $\phi \triangleright f$. Let $\mu_x = \c{L}(f(x)) = {\pi_x}_\# P$, the law of $f$ evaluated at a point, where $\pi_x$ is the canonical projection operator onto the marginal at $x$, $\#$ the pushforward operator in the measure theory sense, $x\in  M$ and $y$ is in the fibre of the associated bundle. Let $\mu_x^{x', y} = \c{L}(f(x) | f(x') = y') = {\pi_x} \mu^{x', y'} = {\pi_x}_{\#} \c{L}(f | f(x') = y')$, the conditional law of the process when given $f(x') = y'$.

    Assume that the prior is invariant under the action of $\iso_{GM}$, i.e. that
    \[\phi \triangleright \mu_x = \del{\phiA}_\# \mu_{\phi^{-1}(x)} = \mu_x\]

    Then the conditional measures are equivariant, in the sense that
    \[\phi \triangleright \mu_x^{x', y'} = \del{\phiA}_\#\mu_{\phi^{-1}(x)}^{x',y'} = \mu_x^{\phi^{-1}(x), \phiA(y)} = \mu_x^{\phi \triangleright (x', y')}\]

\end{theorem}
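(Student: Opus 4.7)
The plan is to lift the given marginal invariance of the prior to a joint invariance over pairs of points, and then extract the equivariance of the conditional via disintegration of measure.

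First, I would extend the hypothesis from single-point to two-point marginals. For any pair $x, x' \in M$, let $\pi_{x,x'}: \Gamma(\c{A}) \to \c{A}_x \oplus \c{A}_{x'}$ be the evaluation map $f \mapsto (f(x), f(x'))$, and set $\mu_{x,x'} = (\pi_{x,x'})_\# P$. Because $\pi_{x,x'}(\phi \triangleright f) = (\phiA(f(\phi^{-1}(x))), \phiA(f(\phi^{-1}(x'))))$ and $P$ is $\iso_{GM}$-invariant, an identical argument to the one used for the marginals yields the joint identity
\begin{equation*}
  (\phiA \oplus \phiA)_\# \mu_{\phi^{-1}(x), \phi^{-1}(x')} = \mu_{x, x'}.
\end{equation*}

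Second, I would disintegrate both sides of this identity along the second coordinate. The left-hand side admits the canonical disintegration $\mu_{x,x'}(dy, dy') = \mu_x^{x', y'}(dy)\, \mu_{x'}(dy')$. For the right-hand side, I would first disintegrate the inner joint as $\mu_{\phi^{-1}(x)}^{\phi^{-1}(x'), \tilde y'}(d\tilde y)\, \mu_{\phi^{-1}(x')}(d\tilde y')$ and then push forward by $\phiA \oplus \phiA$. Performing the change of variable $y' = \phiA(\tilde y')$ and invoking the given marginal invariance $(\phiA)_\# \mu_{\phi^{-1}(x')} = \mu_{x'}$ gives
\begin{equation*}
  \mu_{x,x'}(dy, dy') = \bigl[(\phiA)_\# \mu_{\phi^{-1}(x)}^{\phi^{-1}(x'), \phiA^{-1}(y')}\bigr](dy)\, \mu_{x'}(dy').
\end{equation*}

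Third, by essential uniqueness of disintegration, the two expressions for the conditional factor must agree $\mu_{x'}$-almost-everywhere in $y'$, whence
\begin{equation*}
  \mu_x^{x', y'} = (\phiA)_\# \mu_{\phi^{-1}(x)}^{\phi^{-1}(x'), \phiA^{-1}(y')}.
\end{equation*}
Recognising the right-hand side as $\phi \triangleright \mu_{\phi^{-1}(x)}^{\phi^{-1}(x'), \phiA^{-1}(y')}$ and reparameterising $\phi \leftrightarrow \phi^{-1}$ yields exactly the chain of equalities in the theorem.

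The principal obstacle is the careful bookkeeping of base points: the fibres $\c{A}_x$ and $\c{A}_{\phi^{-1}(x)}$ live over different points of $M$, and $\phiA$ is only a fibrewise isomorphism between them, so one must verify that the pushforwards and disintegrations are well-posed on the correct product of fibres. A secondary subtlety is that disintegration equalities only hold almost-everywhere with respect to the marginalised variable, so the pointwise-in-$y'$ statement in the theorem tacitly requires selecting a regular version of the conditional probability; this is standard under mild measurability and Polish-space hypotheses on the total space of $\c{A}$.
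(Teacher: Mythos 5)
Your proposal is correct and is essentially the paper's own argument in different notation: the paper's two-way computation of $\E[B(f(x'))\,A((\phi\triangleright f)(x))]$ over test functions $A,B$ is exactly your two-point joint invariance followed by disintegration along the $x'$-coordinate, and its concluding "a.e.\ $y'$" step is your appeal to essential uniqueness of the disintegration together with the marginal invariance at $x'$. The subtleties you flag (fibre bookkeeping and the need for a regular version of the conditional) are real but are glossed over in the paper as well.
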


\begin{proof}
    \(\forall A, B\) test functions, $\phi \in \iso_{GM}$,
    \begin{align*}
        \E\sbr{B\del{f(x')}A\del{(\phi \triangleright f)(x)} } & = \E\sbr{ B\del{f(x')} A\del{{\phiA \circ  f \circ \phi^{-1}}(x)}}                                               \\
                                                               & = \E\sbr{ B\del{f(x')} \E\sbrvert{A\del{\phiA\del{F(\phi^{-1}(x))}}}{F(x')}}                                     \\
                                                               & = \E\sbr{ B\del{f(x')} \int{ A(y) \del{\phiA}_\# \mu_{\phi^{-1}(x)}^{x', f(x')}}(\mathrm{d}y)}                   \\
                                                               & = \int{ B\del{y'} \int{ A(y) \del{\phiA}_\# \mu_{\phi^{-1}(x)}^{x', f(x')}}(\mathrm{d}y) \mu_{x'}(\mathrm{d}y')} \\
                                                               & = \int{ B\del{y'} \int{ A(y) \del{\phi \triangleright \mu_{x}^{x', f(x')}}}(\mathrm{d}y) \mu_{x'}(\mathrm{d}y')} \\
    \end{align*}
    By invariance this quantity is also equal to
    \begin{align*}
        \E\sbr{B\del{(\phi^{-1}\triangleright f)(x')} A((\phi^{-1}\triangleright \phi \triangleright f)(x))} & = \E\sbr{B\del{(\phi^{-1}\triangleright f)(x')} \E\sbrvert{ A(f(x))}{B\del{(\phi^{-1}\triangleright f)(x')}}}                   \\
                                                                                                             & = \E\sbr{B\del{\phiA (f(\phi^{-1}(x')))} \sbrvert{ A(F(x))}{\phiA (f(\phi^{-1}(x')))}}                                          \\
                                                                                                             & = \E\sbr{B\del{\tau_{x', g}^{-1} F(gx')} \int{ A(y) \mu_x^{\phi(x'), \phiA^{-1}(y)}}}(\mathrm{d}y)                              \\
                                                                                                             & = \int{B(y')\int{ A(y) \mu_x^{\phi\triangleright (x', y)}}}(\mathrm{d}y)  \del{\phiA^{-1}}_\# \mu_{\phi(x')}(\mathrm{d}y')      \\
                                                                                                             & = \int{B(y')\int{ A(y) \mu_x^{\phi\triangleright (x', y)}}}(\mathrm{d}y)  \del{\phi^{-1} \triangleright \mu_{x'}}(\mathrm{d}y') \\
    \end{align*}
    Hence
    \begin{align*}
        \del{\phi \triangleright \mu_{x}^{x', f(x')}}(\mathrm{d}y) \mu_{x'}(\mathrm{d}y') = \mu_x^{\phi\triangleright (x', y)}(\mathrm{d}y)  \del{\phi^{-1} \triangleright \mu_{x'}}(\mathrm{d}y')
    \end{align*}
    By the stated invariance \( \phi^{-1} \triangleright \mu_{x'} = \mu_{x'}\), hence
    \begin{align*}
        \del{\phi \triangleright \mu_{x}^{x', f(x')}}(\mathrm{d}y) = \mu_x^{\phi\triangleright (x', y)}(\mathrm{d}y) \;\text{a.e.}\;y'
    \end{align*}
    So
    \begin{align}
        \phi \triangleright \mu_{x}^{x', f(x')} = \mu_x^{\phi\triangleright (x', y)}
    \end{align}
    as desired.
\end{proof}
\section{Langevin corrector and the iterative procedure of \textsc{RePaint}~\citep{lugmayr2022RePaint}}
\label{sec:app_conditional_sampling}

\subsection{Langevin sampling scheme}

Several previous schemes exist for conditional sampling from Diffusion models. Two different types of conditional sampling exist. Those that try to sample conditional on some part of the state space over which the diffusion model has been trained, such as in-painting or extrapolation tasks, and those that post-hoc attempt to condition on something outside the state space that the model has been trained on.

This first category is the one we are interested in, and in it we have:
\begin{itemize}
    \item Replacement sampling \cite{song2020score}, where the reverse ODE or SDE is evolved but by fixing the conditioning data during the rollout. This method does produce visually coherent sampling in some cases, but is not an exact conditional sampling method.
    \item SMC-based methods \cite{trippe2022Diffusion}, which are an exact method up to the particle filter assumption. These can produce good results but can suffer from the usual SMC methods downsides on highly multi-model data such as particle diversity collapse. 
    \item The {\textsc RePaint} scheme of \cite{lugmayr2022RePaint}. While not originally proposed as an exact sampling scheme, we will show later that it can in fact be shown that this method is doing a specific instantiation of our newly proposed method, and is therefore exact.
    \item Amortisation methods, e.g. \citet{phillips2022Spectral}. While they can be effective, these methods can never perform exact conditional sampling, by definition.
\end{itemize}

Our goal is to produce an exact sampling scheme that does not rely on SMC-based methods. Instead, we base our method on Langevin dynamics. If we have a score function trained over the state space $\v{x} = [\v{x}^c, \v{x}^*]$, where $\v{x}^c$ are the points we wish to condition on and $\v{x}_s$ points we wish to sample, we exploit the following score breakdown:
\[\nabla_{\v{x}^*} \log p(\v{x}^*|\v{x}^c) = \nabla_{\v{x}^*} \log p([\v{x}^*, \v{x}^c]) - \nabla_{\v{x}^*} \log p(\v{x}^c) = \nabla_{\v{x}^*} \log p(\v{x})\]
If we have access to the score on the joint variables, we, therefore, have access to the conditional score by simply only taking the gradient of the joint score for the variable we are not conditioning on.

Given we have learnt $s_\theta(t, \v{x}) \approx \nabla_{\v{x}}\log p_t(\v{x})$, we could use this to perform Langevin dynamics at $t=\epsilon$, some time very close to $0$. Similar to \cite{song2019generative} however, this produces the twin issues of how to initialise the dynamics, given a random initialisation will start the sampler in a place where the score has been badly learnt, producing slow and inaccurate sampling.

Instead, we follow a scheme of tempered Langevin sampling detailed in \cref{alg:conditioning_langevin}. Starting at $t=T$ we sample an initialisation of $\v{y}^*$ based on the reference distribution. Progressing from $t=T$ towards $t=\epsilon$ we alternate between running a series of Lavgevin corrector steps to sample from the distribution $p_{t, \v{x}^*}(\v{y}^* | \v{y}^c)$, and a single backwards SDE step to sample from $p_{\v{x}}(\v{y}_{t-\gamma} | \v{y}_t)$ with a step size $\gamma$. At each inner and outer step, we sample a noised version of the conditioning points $\v{y}^c$ based forward SDE applying noise to these context points, $p_{t, \v{x}^c}(\v{y}_t^c | \v{y}^c)$. For the exactness of this scheme, all that matters is that at the end of the sampling scheme, we are sampling from $p_{\v{x}^*}(\v{y}^* | \v{y}^c)$ (up to the $\epsilon$ away from zero clipping of the SDE). The rest of the scheme is designed to map from the initial sample at $t=T$ of $\v{y}^*$ to a viable sample through \textit{regions where the score has been learnt well}.

Given the noising scheme applied to the context points does not actually play into the theoretical exactness of the scheme, only the practical difficulty of staying near regions of well-learnt score, we could make a series of different choices for how to noise the context set at each step. 

The choices that present themselves are
\begin{enumerate}
    \item The initial scheme of sampling context noise from the SDE every inner and outer step.
    \item Only re-sampling the context noise every outer step, and keeping it fixed to this for each inner step associated with the outer step.
    \item Instead of sampling independent marginal noise at each outer step, sampling a single noising trajectory of the context set from the forward SDE and use this as the noise at each time.
    \item Perform no noising at all. Effectively the replacement method with added Langevin sampling.
\end{enumerate}
These are illustrated in \cref{fig:noise_schemes}.
\begin{figure}
    \centering
    \includegraphics{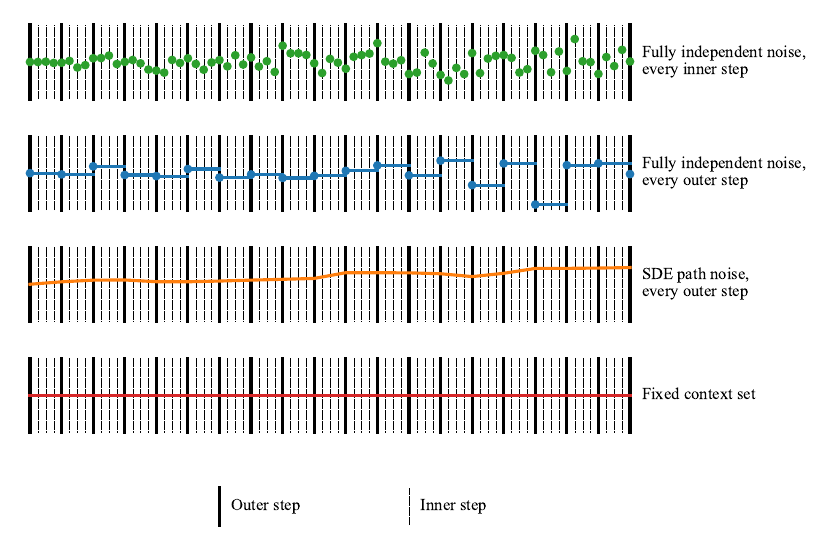}
    \caption{Comparison of different context noising schemes for the conditional sampling.}
    \label{fig:noise_schemes}
\end{figure}
The main trade-off of different schemes is the speed at which the noise can be sampled vs sample diversity. In the Euclidean case, we have a closed form for the evolution of the marginal density of the context point under the forward SDE. In this case sampling the noise at a given time is $\c{O}(1)$ cost. On the other hand, in some instances such as nosing SDEs on general manifolds, we have to simulate this noise by discretising the forward SDE. In this case, it is $\c{O}(n)$ cost, where $n$ is the number of discretisation steps in the SDE. For $N$ outer steps and $I$ inner steps, the complexity of the different noising schemes is compared in \cref{tab:noise_comp}. Note the conditional sampling scheme other than the noise sampling is $\c{O}(NI)$ complexity. 

\begin{table}[]
    \centering
    \begin{tabular}{lrr}
    \toprule
        {\textsc Scheme} & {\textsc Closed-form noise} & {\textsc Simulated noise} \\
        \midrule
        {\textsc Re-sample noise at every inner step} & $\c{O}(NI)$ & $\c{O}(N^2I^2)$ \\
        {\textsc Re-sample noise at every outer step} & $\c{O}(N)$ & $\c{O}(N^2)$ \\
        {\textsc Sampling an SDE path on the context} & $\c{O}(N)$ & $\c{O}(N)$ \\
        {\textsc No noise applied} & - & - \\
        \bottomrule
    \end{tabular}
    \caption{Comparison of complexity of different noise sampling schemes for the context set.}
    \label{tab:noise_comp}
\end{table}

\label{app:conditional_sampling}
\begin{algorithm}
    \caption{Conditional sampling with Langevin dynamics.
    }
    \label{alg:conditioning_langevin}
    \small
    \begin{algorithmic}
        \State {\bfseries Require:} Score network \(\mathbf{s}_\theta(t, \v{x}, \v{y}) \), \red{conditioning points \((\v{x}^c, \v{y}^c)\)}, query locations \(\v{x}^*\)
        \State $\bar{\v{x}} = \sbr{\red{\v{x}^c}, \v{x}^*}$ \Comment{Augmented inputs set}
        \State \(\tilde{\v{y}}_T^* \sim \mathrm{N}(m(\v{x}^*), k(\v{x}^*,\v{x}^*))\) \Comment{Sample initial noise}
        \For{$t \in \cbr{T, T- \gamma, ..., \epsilon}$}
        \State $\red{\v{y}^c_t \sim p_{t, \v{x}^c}(\v{y}^c_t | \v{y}^c_0)}$ \Comment{Noise context outputs} 
        \State $Z \sim \c{N}(0, \Id)$ \Comment{Sample tangent noise}
        \State \( \sbr{\textunderscore, \tilde{\v{y}}_{t-\gamma}^*} = \sbr{\red{\v{y}^c_t}, {\v{y}}_t^*} + \gamma \left\{ -\frac{1}{2} \left( m(\bar{\v{x}}) - \sbr{\red{\v{y}^c_t}, {\v{y}}_t^*} \right) + \mathrm{K}(\bar{\v{x}}, \bar{\v{x}}) \mathbf{s}_\theta(t, \bar{\v{x}}, \sbr{\red{\v{y}^c_t}, \tilde{\v{y}}_t^*}) \right\} + \sqrt{\gamma} \mathrm{K}(\bar{\v{x}}, \bar{\v{x}})^{1/2} Z \)  \Comment{Euler-Maruyama step}
        \For{$l \in \{1, \dots, L \}$}
            \State $\red{\v{y}^c_{t-\gamma} \sim p_{t-\gamma, \v{x}^c}(\v{y}^c_{t-\gamma} | \v{y}^c_0)}$ \Comment{Noise context outputs}
            \State $Z \sim \c{N}(0, \Id)$ \Comment{Sample tangent noise}
            \State \( \sbr{\textunderscore, \tilde{\v{y}}_{t-\gamma}^*} = \sbr{\textunderscore, \tilde{\v{y}}_{t-\gamma}^*} + \frac{\gamma}{2} \mathrm{K}(\bar{\v{x}}, \bar{\v{x}}) \mathbf{s}_\theta(t-\gamma, \bar{\v{x}}, \sbr{\red{\v{y}^c_{t-\gamma}}, \tilde{\v{y}}_{t-\gamma}^*}) + \sqrt{\gamma} \mathrm{K}(\bar{\v{x}}, \bar{\v{x}})^{1/2} Z \)  \Comment{Langevin step}
        \EndFor
        \State \( \v{y}_{t-\gamma}^* = \tilde{\v{y}}_{t-\gamma}^*\)
        \EndFor
        \State {\bfseries return} \({\v{y}}_\epsilon^*\)
    \end{algorithmic}
\end{algorithm}

\subsection{\textsc{RePaint}~\citep{lugmayr2022RePaint} correspondance}

In this section, we show that:
\begin{enumerate}[label=(\alph*)]  
\item \Cref{alg:conditioning_langevin} and \cref{alg:conditioning_renoising} \texttt{Repaint} from \cite{lugmayr2022RePaint} are equivalent in a specific setting.
\item There exists a continuous limit (SDE) for both procedures. This SDE targets a probability density which \emph{does not} correspond to $p(x_{t_0} | x_0^c)$.
\item When $t_0 \to 0$ this probability measure converges to $p(x_{0} | x_0^c)$
  which ensures the correctness of the proposed sampling scheme.
\end{enumerate}
We begin by recalling the conditional sampling algorithm we study in
\Cref{alg:conditioning_langevin} and \Cref{alg:conditioning_renoising}.

\begin{algorithm}
    \caption{
      \textsc{RePaint} \citep{lugmayr2022RePaint}.
      }
    \label{alg:conditioning_renoising}
    \small
    \begin{algorithmic}
        \State {\bfseries Require:} Score network \(\mathbf{s}_\theta(t, \v{x}, \v{y}) \), \red{conditioning points \((\v{x}^c, \v{y}^c)\)}, query locations \(\v{x}^*\)
        \State $\bar{\v{x}} = \sbr{\red{\v{x}^c}, \v{x}^*}$ \Comment{Augmented inputs set}
        \State \( \sbr{\red{\v{y}_T^c}, \v{y}_T^*} \sim \mathrm{N}(m(\bar{\v{x}}), k(\bar{\v{x}} ,\bar{\v{x}}))\) \Comment{Sample initial noise}
        \For{$t \in \cbr{T, T- \gamma, ..., \epsilon}$}
        \State $\tilde{\v{y}}_t^* = {\v{y}}_t^*$
        \For{$l \in \{1, \dots, L \}$}
            \State $\red{\v{y}^c_t \sim \mathrm{N}(m_t(\v{x}^c; \v{y}^c), k_t(\v{x}^c,\v{x}^c; \v{y}^c))}$ \Comment{Noise context outputs}
            \State $Z \sim \mathrm{N}(0, \Id)$ \Comment{Sample tangent noise}
            \State \( \sbr{\textunderscore, \tilde{\v{y}}_{t-\gamma}^*} = \sbr{\red{\v{y}^c_t}, \tilde{\v{y}}_t^*} + \gamma \left\{ -\frac{1}{2} \left( m(\bar{\v{x}}) - \sbr{\red{\v{y}^c_t}, \tilde{\v{y}}_t^*} \right) + \mathrm{K}(\bar{\v{x}}, \bar{\v{x}}) \mathbf{s}_\theta(t, \bar{\v{x}}, \sbr{\red{\v{y}^c_t}, \tilde{\v{y}}_t^*}) \right\} + \sqrt{\gamma} \mathrm{K}(\bar{\v{x}}, \bar{\v{x}})^{1/2} Z \)  \Comment{Reverse step}
            \State $Z \sim \mathrm{N}(0, \Id)$ \Comment{Sample tangent noise}
            \State \( \tilde{\v{y}}_{t}^* = {\tilde{\v{y}}}_{t-\gamma}^* +   \gamma \left\{ \frac{1}{2} \left( m({\v{x}^*}) - {\tilde{\v{y}}}_{t-\gamma}^* \right) \right\} + \sqrt{\gamma} \mathrm{K}({\v{x}^*}, {\v{x}^*})^{1/2} Z \)  \Comment{Forward step}
        \EndFor
        \State \( \v{y}_{t-\gamma}^* = \tilde{\v{y}}_{t-\gamma}^*\)
        \EndFor
        \State {\bfseries return} \({\v{y}}_\epsilon^*\)
    \end{algorithmic}
\end{algorithm}

\def \Ltt {\mathtt{L}}
\def \mtt {\mathtt{m}}
\newcommand{\CPELigne}[2]{\mathbb{E}[#1 \ | #2 ]}
\def \bfX {\mathbf{X}}
\def \vareps {\varepsilon}
\newcommand{\ensembleLigne}[2]{\{#1 \ | #2 \}}
\newcommand{\probaLigne}[1]{\mathbb{P}(#1)}

First, we start by describing the RePaint algorithm
\citep{lugmayr2022RePaint}. We consider $(Z_k^0, Z_k^1, Z_k^2)_{k \in \nset}$ a
sequence of independent Gaussian random variable such that for any
$k \in \nset$, $Z_k^1$ and $Z_k^2$ are $d$-dimensional Gaussian random variables
with zero mean and identity covariance matrix and $Z_k^0$ is a $p$-dimensional
Gaussian random variable with zero mean and identity covariance matrix. We
assume that the whole sequence to be inferred is of size $d$ while the context
is of size $p$. For simplicity, we only consider the Euclidean setting with
$\mathrm{K}=\Id$. The proofs can be adapted to cover the case
$\mathrm{K} \neq \Id$ without loss of generality.

Let us fix a time $t_0 \in \ccint{0,T}$. We
consider the chain $(X_k)_{k \in \nset}$ given by $X_0 \in \rset^d$ and for any
$k \in \nset$, we define
\begin{equation}
  \label{eq:sde_back}
  X_{k+1/2} = \rme^{\gamma} X_k + 2 (\rme^\gamma - 1) \nabla_{x_k} \log p_{t_0}([X_k, X_k^c]) + (\rme^{2\gamma}-1)^{1/2} Z^1_k , 
\end{equation}
where $X_k^c = \rme^{-t_0} X_0^c + (1-\rme^{-2t_0})^{1/2} Z_k^0$. Finally, we consider
\begin{equation}
  \label{eq:sde_forward}
  X_{k+1} = \rme^{-\gamma} X_{k+1/2} + (1 - \rme^{-2\gamma})^{1/2} Z_k^2.
\end{equation}
Note that \eqref{eq:sde_back} corresponds to one step of \emph{backward SDE}
integration and \eqref{eq:sde_forward} corresponds to one step of \emph{forward
  SDE} integration. In both cases we have used the exponential integrator, see
\cite{de2022convergence} for instance. While we use the exponential integrator
in the proofs for simplicity other integrators such as the classical
Euler-Maruyama integration could have been used.  Combining \eqref{eq:sde_back}
and \eqref{eq:sde_forward}, we get that for any $k \in \nset$ we have
\begin{equation}
  X_{k+1} = X_k + 2 (1 -\rme^{-\gamma}) \nabla_{x_k} \log p_{t_0}([X_k, X_k^c]) + (1 - \rme^{-2\gamma})^{1/2} (Z_k^1 + Z_k^2) . 
\end{equation}
Remarking that $(Z_k)_{k \in \nset} = ((Z_k^1 + Z_k^3)/\sqrt{2})_{k \in \nset}$ is a family of
$d$-dimensional Gaussian random variables with zero mean and identity covariance
matrix, we get that for any $k \in \nset$
\begin{equation}
  \label{eq:iterates}
  X_{k+1} = X_k + 2 (1 -\rme^{-\gamma}) \nabla_{x_k} \log p_{t_0}([X_k, X_k^c]) + \sqrt{2} (1 - \rme^{-2\gamma})^{1/2} Z_k ,
\end{equation}
where we recall that $X_k^c = \rme^{-t_0} X_0^c + (1-\rme^{-2t_0})^{1/2} Z_k^0$.
Note that the process \eqref{eq:iterates} is another version of the
\texttt{Repaint} algorithm \cite{lugmayr2022RePaint}, where we have concatenated
the denoising and noising procedure. With this formulation, it is clear that
\texttt{Repaint} is equivalent to \Cref{alg:conditioning_langevin}.  In what
follows, we identify the limitating SDE of this process.

In
what follows, we describe the limiting behavior of \eqref{eq:iterates} under
mild assumptions on the target distribution. In what follows, for any $x_{t_0} \in \rset^d$, we denote
\begin{equation}
  \textstyle b(x_{t_0}) = 2\int_{\rset^p} \nabla_{x_{t_0}} \log p_{t_0}([x_{t_0}, x_{t_0}^c]) p_{t_0|0}(x_{t_0}^c|x_0^c) \rmd x_{t_0}^c .
\end{equation}
We emphasize that $b/2 \neq \nabla_{x_{t_0}} \log p(\cdot | x_0^c)$. In particular, using
Tweedie's identity, we have that for any $x_{t_0} \in \rset^d$
\begin{equation}
  \textstyle \nabla \log p_{t_0}(x_{t_0}|x_0^c) = \int_{\rset^p} \nabla_{x_{t_0}} \log p([x_{t_0}, x_{t_0}^c] | x_0^c) p(x_{t_0}^c|x_{t_0},x_0^c) \rmd x_{t_0}^c . 
\end{equation}

We introduce the following assumption.

\begin{assumption}
  \label{assum:sde}
  There exist $\Ltt, C \geq 0$, $\mtt >0$ such that for any $x_{t_0}^c, y_t^c \in \rset^p$
  and $x_{t_0}, y_t \in \rset^d$
  \begin{align}
    &\normLigne{\nabla \log p_{t_0}([x_{t_0}, x_{t_0}^c]) - \nabla \log p_{t_0}([y_t, y_t^c])} \leq \Ltt (\normLigne{x_{t_0} - y_t} + \normLigne{x_{t_0}^c - y_t^c})  .
  \end{align}
\end{assumption}

\Cref{assum:sde} ensures that there exists a unique strong
solution to the SDE associated with \eqref{eq:iterates}. 
Note that conditions under which $\log p_{t_0}$ is Lipschitz 
are studied in
\citet{de2022convergence}. In the theoretical literature on diffusion models the
Lipschitzness assumption is classical, see
\citet{lee2023convergence,chen2022sampling}. 

We denote $((\bfX_t^\gamma)_{t \geq 0})_{\gamma > 0}$ the family of processes  such
that for any $k \in \nset$ and $\gamma >0$, we have for any $t \in \coint{k \gamma, (k+1)\gamma}$,
$\bfX_t^\gamma = (1 - (t - k\gamma)/\gamma) \bfX_{k\gamma}^\gamma + (t - k\gamma)/\gamma \bfX_{(k+1)\gamma}^\gamma$ and
\begin{equation}
\bfX_{(k+1)\gamma}^\gamma = \bfX_{k\gamma}^\gamma + 2(1-\rme^{-\gamma}) \nabla_{\bfX_{k\gamma}^\gamma} \log p_{t_0}([\bfX_{k\gamma}^\gamma, \bfX_{k\gamma}^{c,n}]) + \sqrt{2}(1-\rme^{-2\gamma})^{1/2} \bfZ_{k\gamma}^\gamma , 
\end{equation}
where $(\bfZ_{k\gamma}^\gamma)_{k \in \nset, \gamma > 0}$ is a family of
independent $d$-dimensional Gaussian random variables with zero mean and
identity covariance matrix and for any $k\in \nset$, $\gamma > 0$,
$\bfX_{k\gamma}^{c,\gamma} = \rme^{-t_0} x_0^c + (1 - \rme^{-2t_0})^{1/2}
\bfZ_{k\gamma}^{0,\gamma}$, where
$(\bfZ_{k\gamma}^{0,\gamma})_{k \in \nset, \gamma > 0}$ is a family of
independent $p$-dimensional Gaussian random variables with zero mean and
identity covariance matrix. This is a \emph{linear interpolation} of the
\texttt{Repaint} algorithm in the form of \eqref{eq:iterates}.

Finally, we denote $(\bfX_t)_{t \geq 0}$ such that 
\begin{equation}
  \label{eq:target_sde}
  \rmd \bfX_t = b(\bfX_t) \rmd t + 2 \bfB_t , \qquad \bfX_0 = x_0 . 
\end{equation}
We recall that $b$ depends on $t_0$ but $t_0$ is \emph{fixed} here. This means
that we are at time $t_0$ in the diffusion and consider a \emph{corrector} at
this stage. The variable $t$ does not corresponds to the backward evolution but
to the forward evolution \emph{in the corrector stage}.  Under \Cref{assum:sde},
\eqref{eq:target_sde} admits a unique strong solution. The rest of the section
is dedicated to the proof of the following result.

\begin{theorem}
  \label{thm:convergence_repaint}
  Assume \textup{\Cref{assum:sde}}. Then $\lim_{n \to + \infty} (\bfX_t^{1/n})_{t \geq 0} = (\bfX_t)_{t \geq 0}$. 
\end{theorem}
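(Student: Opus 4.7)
The plan is to view the Markov chain $(\bfX_{k\gamma}^\gamma)_{k \in \nset}$ as a noisy Euler-type scheme for the SDE \eqref{eq:target_sde} and to show that the additional randomness introduced by the fresh context samples $\bfX_{k\gamma}^{c,\gamma}$ averages out as $\gamma \to 0$. First I would replace the exponential integrator coefficients by their leading order expansions, namely $2(1-\rme^{-\gamma}) = 2\gamma + O(\gamma^2)$ and $\sqrt{2}(1-\rme^{-2\gamma})^{1/2} = 2\sqrt{\gamma} + O(\gamma^{3/2})$, which recasts the recursion as
\[
\bfX_{(k+1)\gamma}^\gamma = \bfX_{k\gamma}^\gamma + 2\gamma\, s(\bfX_{k\gamma}^\gamma, \bfX_{k\gamma}^{c,\gamma}) + 2\sqrt{\gamma}\, \bfZ_{k\gamma}^\gamma + R_k^\gamma,
\]
where $s(x,x^c) = \nabla_x \log p_{t_0}([x,x^c])$ and $R_k^\gamma$ is of order $\gamma^{3/2}(1+\normLigne{s})$ uniformly under \Cref{assum:sde}.

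Next, I would split the drift into its conditional mean and a zero mean fluctuation,
\[
2\gamma\, s(\bfX_{k\gamma}^\gamma, \bfX_{k\gamma}^{c,\gamma}) = \gamma\, b(\bfX_{k\gamma}^\gamma) + 2\gamma\, M_k^\gamma,
\]
where $M_k^\gamma$ is a martingale difference sequence with respect to the natural filtration $(\mathcal{F}_k)$ for which $\bfX_{k\gamma}^\gamma$ is $\mathcal{F}_k$-measurable while $\bfZ_{k\gamma}^\gamma$ and $\bfZ_{k\gamma}^{0,\gamma}$ are independent of $\mathcal{F}_k$. The first piece is exactly one Euler step of \eqref{eq:target_sde} by the definition of $b$ (which carries the factor $2$). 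The cumulative fluctuation $\sum_{k \leq t/\gamma} 2\gamma M_k^\gamma$ is then a martingale whose predictable quadratic variation at time $t$ is $O(\gamma)$ times a Riemann sum bounded via the linear growth of $s$ that follows from \Cref{assum:sde}; hence this quadratic variation vanishes as $\gamma \to 0$, and Doob's maximal inequality yields uniform convergence to zero on $\ccint{0,T}$ in probability.

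Once the drift is cleanly split, the remaining piece fits the standard framework for weak convergence of an Euler scheme for an SDE with Lipschitz drift and constant diffusion $2\Id$. I would establish tightness of $(\bfX^{1/n}_\cdot)_{n \geq 1}$ in $\rmc(\ccint{0,T}, \rset^d)$ via uniform moment bounds (a Gronwall argument combined with \Cref{assum:sde}) and identify any subsequential limit as a solution of \eqref{eq:target_sde} through the associated martingale problem: for every test function $\varphi \in \rmc_c^2(\rset^d)$, Taylor expanding $\varphi$ across one time step and plugging in the decomposition above shows that $\varphi(\bfX_t) - \varphi(\bfX_0) - \int_0^t [b(\bfX_s) \cdot \nabla \varphi(\bfX_s) + 2 \Delta \varphi(\bfX_s)] \rmd s$ is a limit of martingales, hence itself a martingale. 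Weak uniqueness of \eqref{eq:target_sde} under \Cref{assum:sde} then upgrades subsequential convergence to convergence of the full family.

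The main obstacle will be controlling the martingale increments $M_k^\gamma$: since $x^c \mapsto s(x,x^c)$ is only Lipschitz and not bounded, the conditional second moments $\PE[\normLigne{M_k^\gamma}^2 \mid \mathcal{F}_k]$ may grow polynomially with $\normLigne{\bfX_{k\gamma}^\gamma}$ and with moments of $\bfX_{k\gamma}^{c,\gamma}$. A standard localisation via the stopping time $\tau_R = \inf \{t \geq 0 : \normLigne{\bfX_t^\gamma} \geq R\}$ should allow us to carry out the analysis on $\ccint{0, \tau_R \wedge T}$, and then send $R \to \infty$ using the non-explosion of the limiting SDE, which in turn is ensured by the linear growth of $b$ implied by \Cref{assum:sde}.
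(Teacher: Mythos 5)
Your proposal is correct, and its computational core coincides with the paper's: in both cases the essential point is that the conditional mean of one increment is $\gamma\, b(x) + o(\gamma)$ with $b$ the context-averaged drift of \eqref{eq:b_definition}, that the fluctuation coming from resampling the context at every step enters at order $\gamma$ per step (so its cumulative quadratic variation is $O(\gamma)$ and washes out), and that the limiting covariance $4\Id$ is contributed solely by the Gaussian term of variance $2(1-\rme^{-2\gamma})\approx 4\gamma$ per step. Where you diverge is in how the convergence itself is concluded. The paper packages the increment moments into the truncated quantities $b_\gamma$ and $\Sigma_\gamma$, proves their locally uniform convergence to $b$ and $\Sigma=4\Id$ together with the small-jump condition (\cref{lemma:limit_coeff}), and then invokes \citet[Theorem 11.2.3]{stroock2007multidimensional} as a black box. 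You instead re-derive that diffusion-approximation theorem in this special case: drift decomposition into conditional mean plus a martingale-difference fluctuation controlled by Doob's inequality, tightness via Gronwall-type moment bounds, identification of subsequential limits through the martingale problem with generator $b\cdot\nabla + 2\Delta$ (which is indeed the generator of $\rmd\bfX_t = b(\bfX_t)\,\rmd t + 2\,\rmd\bfB_t$), and weak uniqueness from the Lipschitz continuity of $b$ inherited from \cref{assum:sde}. Your route is longer but self-contained and makes explicit \emph{why} the resampled-context noise does not show up in the limiting diffusion coefficient, which the citation-based proof leaves implicit inside the moment computations; the paper's route is shorter and defers the tightness and identification machinery to the reference. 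The two caveats you flag, namely the need to localise because $\nabla\log p_{t_0}$ has only linear growth and the removal of the stopping time via non-explosion of the limit, are genuine and correctly handled; they are the same growth estimates the paper uses to obtain the moment bounds in the proof of \cref{lemma:limit_coeff}.
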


This result is an application of \citet[Theorem
11.2.3]{stroock2007multidimensional}. It explicits what is the \emph{continuous}
limit of the \texttt{Repaint} algorithm \cite{lugmayr2022RePaint}.

In what follows, we verify that the
assumptions of this result hold in our setting. For any $\gamma > 0$ and $x \in \rset^d$, we define 
\begin{align}
  &\textstyle b_\gamma(x) = (2/\gamma)[ (1- \rme^{-\gamma})  \int_{\rset^d} \nabla_{x_{t_0}} \log p_{t_0}([x_{t_0}, x_{t_0}^c]) p_{t_0|0}(x_{t_0}^c|x_0^c) \rmd x_{t_0}^c\\
  &\qquad \qquad \qquad \textstyle - (1/\gamma) \CPELigne{(\bfX_{(k+1)\gamma}^\gamma - \bfX_{k \gamma}^\gamma)\mathbf{1}_{\normLigne{\bfX_{(k+1)\gamma}^\gamma - \bfX_{k \gamma}^\gamma}
  \geq 1}}{\bfX_{k \gamma} = x} , \\
  &\textstyle \Sigma_\gamma(x) = (4/\gamma)(1- \rme^{-\gamma})^2 \int_{\rset^d} \nabla_{x_{t_0}} \log p_{t_0}([x_{t_0}, x_{t_0}^c])^{\otimes 2}  p_{t_0|0}(x_{t_0}^c|x_0^c) \rmd x_{t_0}^c + (2/\gamma)(1- \rme^{-2\gamma}) \Id \\
  &\qquad \qquad \qquad \textstyle - (1/\gamma) \CPELigne{(\bfX_{(k+1)\gamma}^\gamma - \bfX_{k \gamma}^\gamma)^{\otimes 2}\mathbf{1}_{\normLigne{\bfX_{(k+1)\gamma}^\gamma - \bfX_{k \gamma}^\gamma}
  \geq 1}}{\bfX_{k \gamma} = x} .
\end{align}
Note that for any $\gamma > 0$ and $x \in \rset^d$, we have
\begin{align}
  &b_\gamma(x) = \CPELigne{\mathbf{1}_{\normLigne{\bfX_{(k+1)\gamma}^\gamma - \bfX_{k \gamma}^\gamma}\leq 1} (\bfX_{(k+1)\gamma}^\gamma - \bfX_{k \gamma}^\gamma)}{\bfX_{k \gamma}^\gamma = x} \\
  &\Sigma_\gamma(x) =  \CPELigne{\mathbf{1}_{\normLigne{\bfX_{(k+1)\gamma}^\gamma - \bfX_{k \gamma}^\gamma}\leq 1} (\bfX_{(k+1)\gamma}^\gamma - \bfX_{k \gamma}^\gamma)^{\otimes 2}}{\bfX_{k \gamma}^\gamma = x} 
  \\  
\end{align}

\begin{lemma}
  \label{lemma:limit_coeff}
Assume \textup{\Cref{assum:sde}}. Then, we have that for any $R, \vareps > 0$ and $\gamma \in \ooint{0,1}$
\begin{align}
  \label{eq:zero_limit}
  &\lim_{\gamma \to 0} \sup \ensembleLigne{\normLigne{\Sigma_\gamma(x) - \Sigma(x)}}{x \in \rset^d, \ \normLigne{x} \leq R} = 0 , \\
  &\lim_{\gamma \to 0} \sup \ensembleLigne{\normLigne{b_\gamma(x) - b(x)}}{x \in \rset^d, \ \normLigne{x} \leq R} = 0 , \\
&\lim_{\gamma \to 0} (1/\gamma) \sup \ensembleLigne{\probaLigne{\normLigne{\bfX_{(k+1)\gamma}^\gamma - \bfX_{k \gamma}^\gamma}\geq \vareps \ | \bfX_{k \gamma} = x} }{x \in \rset^d, \ \normLigne{x} \leq R} = 0  .
\end{align}
Where we recall that for any $x \in \rset^d$,
\begin{equation}
  \label{eq:b_definition}
  \textstyle b(x) = 2\int_{\rset^p} \nabla_{x_{t_0}} \log p_{t_0}([x_{t_0}, x_{t_0}^c]) p_{t|0}^x(x_{t_0}^c|x_0^c) \rmd x_{t_0}^c , \qquad \Sigma(x) = 4 \Id . 
\end{equation}
\end{lemma}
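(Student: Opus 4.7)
The plan is to compute the one-step conditional moments of the chain $(\bfX_{k\gamma}^\gamma)_{k\in\nset}$ explicitly, pass to the limit $\gamma \to 0$ on the \emph{untruncated} moments, and then show that truncation corrections and tail probabilities are negligible by Gaussian concentration. Setting $\Delta \triangleq \bfX_{(k+1)\gamma}^\gamma - \bfX_{k\gamma}^\gamma$ and conditioning on $\bfX_{k\gamma}^\gamma = x$, the update rule reads
\[
\Delta = 2(1-\rme^{-\gamma})\,\nabla\log p_{t_0}([x, \bfX_{k\gamma}^{c,\gamma}]) + \sqrt{2}(1-\rme^{-2\gamma})^{1/2}\,\bfZ_{k\gamma}^\gamma.
\]
Using independence between $\bfZ_{k\gamma}^\gamma$ (centred standard Gaussian) and $\bfX_{k\gamma}^{c,\gamma}$ (whose marginal density is exactly $p_{t_0|0}(\cdot \mid x_0^c)$ by construction), the first two untruncated conditional moments factor into the integrals appearing in the definitions of $b_\gamma$ and $\Sigma_\gamma$, with explicit prefactors $2(1-\rme^{-\gamma})$, $4(1-\rme^{-\gamma})^2$, and $2(1-\rme^{-2\gamma})$.

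Dividing by $\gamma$ and using the elementary asymptotics $(1-\rme^{-\gamma})/\gamma \to 1$, $(1-\rme^{-2\gamma})/\gamma \to 2$, and $(1-\rme^{-\gamma})^2/\gamma \to 0$ immediately yields the target limits $b(x)$ and $4\Id = \Sigma(x)$ for these untruncated expressions. The Lipschitz bound in \Cref{assum:sde} gives $\normLigne{\nabla \log p_{t_0}([x,x^c])} \leq \normLigne{\nabla \log p_{t_0}(0)} + \Ltt(\normLigne{x} + \normLigne{x^c})$, which on $\{\normLigne{x} \leq R\}$ furnishes a dominating function independent of $x$ against the Gaussian density $p_{t_0|0}(\cdot \mid x_0^c)$. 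Dominated convergence then upgrades the pointwise convergence to uniform convergence on the ball of radius $R$.

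For the third statement I would use the same affine envelope together with $\bfX_{k\gamma}^{c,\gamma} = \rme^{-t_0}x_0^c + (1-\rme^{-2t_0})^{1/2}\bfZ_{k\gamma}^{0,\gamma}$ to obtain, uniformly on $\{\normLigne{x} \leq R\}$ and for $\gamma$ small enough, the envelope
\[
\normLigne{\Delta} \leq C_R\,\gamma\,(1+\normLigne{\bfZ_{k\gamma}^{0,\gamma}}) + C\sqrt{\gamma}\,\normLigne{\bfZ_{k\gamma}^\gamma}.
\]
Standard sub-Gaussian tail inequalities then give $\mathbb{P}(\normLigne{\Delta} \geq \vareps \mid \bfX_{k\gamma}^\gamma = x) \leq \exp(-c\vareps^2/\gamma)$ with $c = c(R) > 0$ independent of $x$; this is superpolynomially small in $\gamma$, hence $o(\gamma)$ uniformly on $\{\normLigne{x}\leq R\}$, which is the third conclusion. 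Combining this tail bound with the second-moment estimate $\mathbb{E}[\normLigne{\Delta}^2 \mid x] = O(\gamma)$ derived in the first step, Cauchy--Schwarz implies that $\mathbb{E}[\Delta\, \mathbf{1}_{\normLigne{\Delta}>1} \mid x]$ and $\mathbb{E}[\Delta^{\otimes 2}\, \mathbf{1}_{\normLigne{\Delta}>1} \mid x]$ are both $o(\gamma)$, so after dividing by $\gamma$ the truncation corrections in the definitions of $b_\gamma$ and $\Sigma_\gamma$ vanish uniformly on the ball of radius $R$, and the first two limits follow from the untruncated computation.

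The main obstacle is securing uniformity in $x$ throughout all three estimates. This is precisely where \Cref{assum:sde} is used: the global Lipschitz bound on $\nabla \log p_{t_0}$ simultaneously provides the $x$-uniform dominating function needed for dominated convergence and the $x$-uniform affine envelope in $\bfZ_{k\gamma}^{0,\gamma}$ underlying the sub-Gaussian tail estimate. Without such a global control, an additional localisation step would be required to rule out pathological growth of the drift field in $x^c$.
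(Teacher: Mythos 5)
Your proof is correct and follows essentially the same route as the paper's: compute the untruncated one-step conditional moments, use the elementary asymptotics of $(1-\rme^{-\gamma})/\gamma$, $(1-\rme^{-2\gamma})/\gamma$ and $(1-\rme^{-\gamma})^2/\gamma$, and kill the tail probability and truncation corrections via the affine-in-$x_{t_0}^c$ envelope on the score supplied by \Cref{assum:sde} together with Gaussian integrability of the resampled context. The only cosmetic difference is that you control the tail by sub-Gaussian concentration (giving $\exp(-c\vareps^2/\gamma)$), whereas the paper uses Markov's inequality with a sufficiently high moment to get a $\gamma^{p/2}$ bound; both comfortably deliver the required $o(\gamma)$ rate.
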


\begin{proof}
  Let $R,\vareps > 0$ and $\gamma \in \ooint{0,1}$. Using \Cref{assum:sde}, there exists
  $C > 0$ such that for any $x_{t_0} \in \rset^d$ with
  $\normLigne{x_{t_0}} \leq R$, we have
  $\normLigne{\nabla_{x_{t_0}} \log p_{t_0}([x_{t_0}, x_{t_0}^c])} \leq C (1 +
  \normLigne{x_{t_0}^c})$. Since $p_{t_0|0}^c$ is Gaussian with zero mean and
  covariance matrix $(1- \rme^{-2t_0}) \Id$, we get that for any $p \in \nset$,
  there exists $A_k \geq 0$ such that for any $x_{t_0} \in \rset^d$ with
  $\normLigne{x_{t_0}} \leq R$
  \begin{equation}
    \textstyle \int_{\rset^d} \normLigne{\nabla_{x_{t_0}} \log p_{t_0}([x_{t_0}, x_{t_0}^c])}^p p_{t_0|0}^c(x_{t_0}^c | x_0^c) \rmd x_{t_0}^c \leq A_k(1 + \normLigne{x_{0}^c}^p)  .
  \end{equation}
  Therefore, using this result and the fact that for any $s \geq 0$,
  $\rme^{-s} \geq 1 -s$, we get that there exists $B_k \geq 0$ such that for any
  $k, p \in \nset$ and for any $x_{t_0} \in \rset^d$ with
  $\normLigne{x_{t_0}} \leq R$
  \begin{equation}
    \CPELigne{\normLigne{\bfX_{(k+1)\gamma} - \bfX_{k \gamma}}^p}{\bfX_{k \gamma}=x} \leq B_k \gamma^{p/2} (1 + \normLigne{x_{0}^c}^p).
  \end{equation}
  Therefore, combining this result and the Markov inequality, we get that for
  any $x_{t_0} \in \rset^d$ with $\normLigne{x_{t_0}} \leq R$ we have
  \begin{align}
    \label{eq:zero_limit_tot}
    &\textstyle \lim_{\gamma \to 0} (1/\gamma) \sup \ensembleLigne{\probaLigne{\normLigne{\bfX_{(k+1)\gamma}^\gamma - \bfX_{k \gamma}^\gamma}\geq \vareps \ | \bfX_{k \gamma} = x} }{x \in \rset^d, \ \normLigne{x} \leq R} = 0  , \\
    &\textstyle \lim_{\gamma \to 0} (1/\gamma) \normLigne{\CPELigne{(\bfX_{(k+1)\gamma}^\gamma - \bfX_{k \gamma}^\gamma)\mathbf{1}_{\normLigne{\bfX_{(k+1)\gamma}^\gamma - \bfX_{k \gamma}^\gamma} \geq 1}}{\bfX_{k \gamma} = x}} = 0 , \\
    &\textstyle \lim_{\gamma \to 0} (1/\gamma) \normLigne{\CPELigne{(\bfX_{(k+1)\gamma}^\gamma - \bfX_{k \gamma}^\gamma)\mathbf{1}_{\normLigne{\bfX_{(k+1)\gamma}^\gamma - \bfX_{k \gamma}^\gamma} \geq 1}}{\bfX_{k \gamma} = x}} = 0
  \end{align}
  In addition, we have that for any $x_{t_0} \in \rset^d$ with $R > 0$
  \begin{align}
    \label{eq:inter_b}
    &\textstyle \abs{(2/\gamma)(1- \rme^{-\gamma}) - 2} \normLigne{  \int_{\rset^d} \nabla_{x_{t_0}} \log p_{t_0}([x_{t_0}, x_{t_0}^c]) p_{t_0|0}(x_{t_0}^c|x_0^c) \rmd x_{t_0}^c} \\
    &\qquad  \qquad \qquad \qquad \qquad \leq A_1(1+\normLigne{x_0^c}) (2/\gamma)\abs{\rme^{-\gamma}-1+\gamma} . 
  \end{align}
  We also have that for any $x_{t_0} \in \rset^d$ with $R > 0$
  \begin{align}
     &\textstyle (4/\gamma)\abs{1 - \rme^{-\gamma}}^2 \normLigne{  \int_{\rset^d} \nabla_{x_{t_0}} \log p_{t_0}([x_{t_0}, x_{t_0}^c])^{\otimes 2} p_{t_0|0}(x_{t_0}^c|x_0^c) \rmd x_{t_0}^c} \\
    &\qquad  \qquad \qquad \qquad \qquad \leq A_2(1+\normLigne{x_0^c}^2) (4/\gamma)\abs{1 - \rme^{-\gamma}}^2 . 
  \end{align}
  Combining this result, \eqref{eq:zero_limit_tot}, the fact that
  $\lim_{\gamma \to 0} (4/\gamma)\abs{1 - \rme^{-\gamma}}^2 = 0$ and
  $\lim_{\gamma \to 0} (2/\gamma)\abs{\rme^{-\gamma}-1+\gamma} = 0$, we get that
  $\lim_{\gamma \to 0} \sup \ensembleLigne{\normLigne{\Sigma_\gamma(x) -
      \Sigma(x)}}{x \in \rset^d, \ \normLigne{x} \leq R} = 0$. Similarly, using
  \eqref{eq:zero_limit_tot}, \eqref{eq:inter_b} and the fact that
  $\lim_{\gamma \to 0} (4/\gamma)\abs{1 - \rme^{-\gamma}}^2 = 0$, we get that
  $\lim_{\gamma \to 0} \sup \ensembleLigne{\normLigne{b_\gamma(x) -
      b(x)}}{x \in \rset^d, \ \normLigne{x} \leq R} = 0$.
\end{proof}

We can now conclude the proof of \Cref{thm:convergence_repaint}.

\begin{proof}
  We have that $x\mapsto b(x)$ and $x \mapsto \Sigma(x)$ are
  continuous. Combining this result and \Cref{lemma:limit_coeff}, we conclude
  the proof upon applying \citet[Theorem 11.2.3]{stroock2007multidimensional}.
\end{proof}

\Cref{thm:convergence_repaint} is a non-quantitative result which states what is
the limit chain for the \textsc{RePaint} procedure. Note that if we do not resample, we
get that
\begin{equation}
  \label{eq:b_definition_cond}
  \textstyle b^{\mathrm{cond}}(x) = 2 \nabla_{x_{t_0}} \log p_{t_0}([x_{t_0}, x_{t_0}^c]) , \qquad \Sigma(x) = 4 \Id . 
\end{equation}
Recalling \eqref{eq:b_definition}, we get that \eqref{eq:b_definition_cond} is
an \emph{amortised version} of $b^{\mathrm{cond}}$.
Similar convergence results
can be derived in this case. Note that it is also possible to obtain
quantitative discretization bounds between $(\bfX_t)_{t \geq 0}$ and
$(\bfX_t^{1/n})_{t \geq 0}$ under the $\ell^2$ distance. These bounds are
usually leveraged using the Girsanov theorem
\cite{durmus2017nonasymp,dalalyan2017theoretical}. We leave the study of such
bounds for future work.

We also remark that $b(x_{t_0})$ is \emph{not} given by
$\nabla \log p_{t_0}(x_{t_0}|x_0^c)$. Denoting $U_{t_0}$ such that for any $x_{t_0} \in \rset^d$
\begin{equation}
 \textstyle U_{t_0}(x_{t_0}) = -\int_{\rset^p} (\log p_{t_0}(x_{t_0} | x_{t_0}^c)) p_{t|0}(x_{t_0}^c|x_0^c) \rmd x_{t_0}^c ,
\end{equation}
we have that $\nabla U_{t_0}(x_{t_0}) = -b(x_{t_0})$, under mild integration
assumptions. In addition, using Jensen's inequality, we have
\begin{align}
  \textstyle \int_{\rset^d} \exp[-U_{t_0}(x_{t_0})] \rmd x_{t_0} \leq \int_{\rset^d} \int_{\rset^p} p_{t_0}(x_{t_0}| x_{t_0}^c) p_{t|0}(x_{t_0}^c|x_0^c) \rmd x_{t_0} \rmd x_{t_0}^c \leq 1 .
\end{align}
Hence, $\pi_{t_0}$ with density proportional to $x \mapsto \exp[-U_{t_0}(x)]$ defines a
valid probability measure. 

We make the following assumption which allows us to control the ergodicity of
the process $(\bfX_t)_{t \geq 0}$.

\begin{assumption}
  \label{assum:ergodicity}
  There exist $\mtt > 0$ and $C \geq 0$ such that for any $x_{t_0} \in \rset^d$ and $x_{t_0}^c \in \rset^p$
  \begin{equation}
  \langle \nabla_{x_t} \log p_{t_0}([x_t, x_t^c]), x_t \rangle \leq -\mtt \normLigne{x_t}^2 + C(1 + \normLigne{x_t^c}^2) .   
  \end{equation}
\end{assumption}

The following proposition ensures the ergodicity of the chain
$(\bfX_t)_{t \geq 0}$. It is a direct application of
\citet[Theorem 2.1]{roberts1996exponential}.

\begin{proposition}
  Assume \textup{\Cref{assum:sde}} and \textup{\Cref{assum:ergodicity}}. Then, $\pi_{t_0}$ is the unique
  invariant probability measure of $(\bfX_t)_{t \geq 0}$ and
  $\lim_{t \to 0} \tvnorm{\mathcal{L}(\bfX_t) - \pi_{t_0}} = 0$, where
  $\mathcal{L}(\bfX_t)$ is the distribution of $\bfX_t$. 
\end{proposition}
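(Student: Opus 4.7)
The plan is to apply Theorem 2.1 of \citet{roberts1996exponential}, which establishes exponential ergodicity of overdamped Langevin-type diffusions under a Foster--Lyapunov drift condition combined with minimal smoothness of the coefficients. I proceed in three steps: verify that $\pi_{t_0}$ is indeed stationary for \eqref{eq:target_sde}, establish the Foster--Lyapunov condition, and check the irreducibility/regularity needed to invoke the theorem.

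First, I would verify that $\pi_{t_0}$ is an invariant measure. Differentiating under the integral sign in the definition of $U_{t_0}$ is justified under \Cref{assum:sde} together with the Gaussian tail decay of $p_{t_0|0}(\cdot | x_0^c)$, via dominated convergence. This yields $\nabla U_{t_0}(x) = -\int \nabla_{x} \log p_{t_0}([x, x^c]) \, p_{t_0|0}(x^c | x_0^c) \, \rmd x^c$, which up to the appropriate multiplicative constant is exactly $-b(x)$. A direct Fokker--Planck calculation then shows that the density proportional to $\exp[-U_{t_0}]$ is stationary for the SDE $\rmd \bfX_t = b(\bfX_t) \rmd t + 2 \rmd \bfB_t$, identifying it with $\pi_{t_0}$.

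Next, I would establish the Foster--Lyapunov drift condition with test function $V(x) = 1 + \normLigne{x}^2$. Applying the infinitesimal generator of \eqref{eq:target_sde} gives $\mathcal{L}V(x) = 2 \langle b(x), x \rangle + 4 d$. Substituting the definition of $b(x)$ and invoking \Cref{assum:ergodicity} pointwise under the integral, one obtains
\[
\langle b(x), x \rangle \leq 2 \int_{\rset^p} \bigl[ -\mtt \normLigne{x}^2 + C(1 + \normLigne{x^c}^2) \bigr] \, p_{t_0|0}(x^c \mid x_0^c) \, \rmd x^c .
\]
Since $p_{t_0|0}(\cdot \mid x_0^c)$ is Gaussian with second moment controlled in terms of $t_0$ and $x_0^c$, the integral over $x^c$ contributes only a finite additive constant, yielding $\mathcal{L}V(x) \leq -c_1 V(x) + c_2$ for constants $c_1, c_2 > 0$ depending on $t_0$ and $x_0^c$.

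Finally, \Cref{assum:sde} provides Lipschitzness of the marginalised drift $b$ (again by dominated convergence in the integral defining $b$), which ensures existence and uniqueness of a non-explosive strong solution to \eqref{eq:target_sde}. The non-degenerate constant diffusion coefficient implies that the transition kernel admits a positive continuous density (a Gaussian heat kernel perturbed by a Lipschitz drift), giving Lebesgue-irreducibility and aperiodicity. All hypotheses of \citet[Theorem 2.1]{roberts1996exponential} are then satisfied, yielding the uniqueness of $\pi_{t_0}$ as invariant measure together with (exponential) total variation convergence. The main technical obstacle is the verification of Lipschitzness and the drift condition for the \emph{marginalised} drift $b$, since both properties are hypothesised only for the joint log-density $\nabla \log p_{t_0}([x, x^c])$; in both cases the reduction goes through dominated convergence using the Gaussian tails of $p_{t_0|0}(\cdot \mid x_0^c)$.
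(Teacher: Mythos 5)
Your proposal is correct and follows essentially the same route as the paper, which simply states that the result is a direct application of \citet[Theorem 2.1]{roberts1996exponential}; you additionally spell out the verification of the hypotheses (stationarity of $\pi_{t_0}$ via Fokker--Planck, the Foster--Lyapunov condition with $V(x)=1+\normLigne{x}^2$ obtained by integrating \Cref{assum:ergodicity} against the Gaussian $p_{t_0|0}(\cdot\mid x_0^c)$, and irreducibility from the non-degenerate diffusion), which the paper leaves implicit. These details are sound, so the two arguments coincide in substance.
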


Finally, for any $t_0 > 0$, denoting $\pi_{t_0}$ the probability measure with
density $U_{t_0}$ given for any $x_{t_0} \in \rset^d$ by
\begin{equation}
  \textstyle U_{t_0}(x_{t_0}) = -\int_{\rset^p} (\log p_{t_0}(x_{t_0} | x_{t_0}^c)) p_{t|0}(x_{t_0}^c|x_0^c) \rmd x_{t_0}^c .
\end{equation}
We show that the family of measures $(\pi_{t_0})_{t_0 > 0}$ approximates the
posterior with density $x_0 \mapsto p(x_0|x_0^c)$ when $t_0$ is small enough.

\begin{proposition}
  Assume \textup{\Cref{assum:sde}}. 
  We have that $\lim_{t_0 \to 0} \pi_{t_0} = \pi_0$ where $\pi_0$ admits a
  density w.r.t. the Lebesgue measure given by $x_0 \mapsto p(x_0|x_0^c)$.
\end{proposition}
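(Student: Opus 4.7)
The plan is to show pointwise convergence $U_{t_0}(x_0) \to -\log p(x_0|x_0^c)$ as $t_0\to 0$, and then upgrade this to convergence of the associated probability measures $\pi_{t_0}$ via a Scheffé-type argument, properly handling normalisation.

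First, I would observe that under the forward noising SDE, $p_{t|0}(\cdot \mid x_0^c)$ is the Gaussian density with mean $\rme^{-t_0}x_0^c$ and covariance $(1-\rme^{-2t_0})\Id$, which weakly concentrates at the Dirac $\delta_{x_0^c}$ as $t_0 \to 0$. Simultaneously, the joint density $p_{t_0}(x_0,x_0^c)$ converges to $p_0(x_0,x_0^c)$ (the data density) on points of continuity, and hence so does the conditional density $p_{t_0}(x_0 \mid x_0^c)$, by continuity of the OU semigroup on suitable test functions. Combining these two convergences — the concentration of the Gaussian $p_{t|0}(\cdot \mid x_0^c)$ at $x_0^c$ and the continuity of $x_0^c \mapsto \log p_{t_0}(x_0\mid x_0^c)$ inherited from \Cref{assum:sde} (which gives Lipschitz control of the score, hence local Lipschitzness of $\log p_{t_0}$) — I would apply dominated convergence inside the integral defining $U_{t_0}$ to obtain, for each fixed $x_0$,
\begin{equation}
    \lim_{t_0 \to 0} U_{t_0}(x_0) = -\log p(x_0 \mid x_0^c).
\end{equation}

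Next, I would promote this pointwise convergence of potentials to convergence of the normalised probability measures $\pi_{t_0}$. Let $Z_{t_0} = \int_{\rset^d} \exp[-U_{t_0}(x_0)]\rmd x_0$, which is finite and in fact at most $1$ by the Jensen argument already given in the excerpt. By Fatou's lemma together with the pointwise limit $\exp[-U_{t_0}(x_0)] \to p(x_0 \mid x_0^c)$, one obtains $\liminf Z_{t_0} \geq 1$, and combined with $Z_{t_0} \leq 1$ this forces $Z_{t_0} \to 1$. Scheffé's lemma then yields $L^1$ convergence of the densities, i.e.\ $\tvnorm{\pi_{t_0} - \pi_0} \to 0$, which in particular implies $\pi_{t_0} \to \pi_0$ weakly.

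The main obstacle will be the dominated convergence step: although \Cref{assum:sde} provides Lipschitzness of $\nabla \log p_{t_0}$, ensuring existence of an integrable dominating function for $\log p_{t_0}(x_0 \mid x_0^c)$ uniformly in $t_0$ and in the Gaussian sample $x_0^c \sim p_{t|0}(\cdot|x_0^c)$ requires either a mild tail bound on $p_0$ or an exploitation of the Lipschitz growth to produce a polynomial majorant that is integrable against a Gaussian with vanishing variance. A convenient way around this is to truncate the integral to a ball of radius $R$ around $x_0^c$ (which contains all but exponentially small Gaussian mass for small $t_0$), handle the bulk by uniform continuity, and bound the tail using the Gaussian tail together with at-most-quadratic growth of $-\log p_{t_0}$ implied by \Cref{assum:sde}. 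Once pointwise convergence of $U_{t_0}$ is secured, the remaining steps (Fatou, Scheffé) are standard.
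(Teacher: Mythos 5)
Your proposal is correct and rests on the same core observation as the paper's own proof, which consists of the single sentence that the result is ``a direct consequence of the fact that $p_{t_0|0}(\cdot|x_0^c) \to \delta_{x_0^c}$''. Your additional steps --- the truncation/domination argument for the pointwise limit of $U_{t_0}$, and the Fatou--Scheff\'e argument showing $Z_{t_0}\to 1$ and hence total-variation convergence of the normalised measures --- simply supply the rigour that the paper leaves implicit, and they are sound.
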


\begin{proof}
  This is a direct consequence of the fact that $p_{t|0}(\cdot|x_0^c) \to \updelta_{x_0^c}$.
\end{proof}

This last results shows that even though we do not target
$x_{t_0} \mapsto p_{t_0|0}(x_{t_0} |x_0^c)$ using this corrector term, we still
target $p(x_0 | x_0^c)$ as $t_0 \to 0$ which corresponds to the desired output
of the algorithm. 


\section{Experimental details}
\label{app:experimental_details}

Models, training and evaluation have been implemented in \texttt{Jax} \citep{jax2018github}.
We used Python \citep{van1995python} for all programming, Hydra \citep{Yadan2019Hydra}, Numpy \citep{harris2020}, Scipy \citep{2020SciPy-NMeth}, Matplotlib \citep{Hunter2007}, and Pandas \citep{mckinney2010data}.

\subsection{Regression 1d}
\label{app:experimental_details_1d_regression}

\subsubsection{Data generation}

We follow the same experimental setup as \citet{bruinsma2020Gaussian} to generate the 1d synthetic data. It consists of Gaussian (Squared Exponential ({\scshape SE}), {\scshape Mat\'ern$(\tfrac52)$}, {\scshape Weakly Periodic}) and non-Gaussian ({\scshape Sawtooth} and {\scshape Mixture}) sample paths, where {\scshape Mixture} is a combination of the other four datasets with equal weight. \Cref{fig:app:regression} shows samples for each of these dataset. The Gaussian datasets are corrupted with observation noise with variance $\sigma^2=0.05^2$. The left column of \cref{fig:app:regression} shows example sample paths for each of the 5 datasets.

The training data consists of $2^{14}$ sample paths while the test dataset has $2^12$ paths. For each test path we sample the number of context points between $1$ and $10$, the number of target points are fixed to $50$ for the GP datasets and $100$ for the non-Gaussian datasets. The input range for the training and interpolation datasets is $[-2, 2]$ for both the context and target sets, while for the extrapolation task the context and target input points are drawn from $[2, 6]$.

\paragraph{Architecture.} For all datasets, except {\scshape Sawtooth}, we use $5$ bi-dimensional attention layers \citep{dutordoir2022Neural} with $64$ hidden dimensions and $8$ output heads. For {\scshape Sawtooth}, we obtained better performance with a wider and shallower model consisting of $2$ bi-dimensional attention layers with a hidden dimensionality of $128$. In all experiment, we train the NDP-based models over $300$ epochs using a batch size of $256$. Furthermore, we use the Adam optimiser for training with the following learning rate schedule: linear warm-up for 10 epochs followed by a cosine decay until the end of training.

\subsubsection{Ablation Limiting Kernels}

The test log-likelihoods (TLLs) reported in \cref{tab:app:1d_regression} for the NDP models target a white limiting kernel and train to approximate the preconditioned score $K \nabla \log p_t$. Overall, we found this to be the best performing setting. \Cref{fig:app:score-parametrization-ablation} shows an ablation study for different choices of limiting kernel and score parametrisation. We refer to \cref{tab:app:score-param} for a detailed derivation of the score parametrisations.

The dataset in the top row of the figure originates from a Squared Exponential (SE) GP with lengthscale $\ell = 0.25$. We compare the performance of three different limiting kernels: white (blue), a SE with a longer lengthscale $\ell=1$ (orange), and a SE with a shorter lengthscale $\ell=0.1$ (green). As the dataset is Gaussian, we have access to the true score. We observe that, across the different parameterisations, the white limiting kernel performance best. However, note that for the White kernel $K = I$ and thus the different parameterisations become identical. For non-white limiting kernels we see a reduction in performance for both the approximate and exact score. We attribute this to the additional complexity of learning a non-diagonal covariance. 

In the bottom row of \cref{fig:app:score-parametrization-ablation} we repeat the experiment for a dataset consisting of samples from the Periodic GP with lengthscale $0.5$. We draw similar conclusions: the best performing limiting kernel, across the different parametrisations, is the White noise kernel.

\subsubsection{Ablation Conditional Sampling}

Next, we focus on the empirical performance of the different noising schemes in the conditional sampling, as discussed in \cref{fig:noise_schemes}. For this, we measure the the Kullback-Leibler (KL) divergence between two Gaussian distributions: the true GP-based conditional distribution, and an distribution created by drawing conditional sampling from the model and fitting a Gaussian to it using the empirical mean and covariance. We perform this test on the 1D squared exponential dataset (described above) as this gives us access to the true posterior. We use $2^{12}$ samples to estimate the empirical mean and covariance, and fix the number of context points to 3. 

In \cref{fig:app:conditional-ablation} we keep the total number of score evaluations fixed to $5000$ and vary the number of steps in the inner ($L$) loop such that the number of outer steps is given by the ratio $5000/L$. From the figure, we observe that the particular choice of noising scheme is of less importance as long at least a couple ($\pm 5$) inner steps are taken. We further note that in this experiment we used the true score (available because of the Gaussianity of the dataset), which means that these results may differ if an approximate score network is used.

\begin{table}[h]
\label{tab:app:1d_regression}
    \small
    \centering
    \caption{
        Mean test log-likelihood (TLL) $\pm$ 1 standard error estimated over 4096 test samples are reported. Statistically significant best non-GP model is in bold. The NP baselines (GNP, ConvCNP, ConvNP and ANP) are quoted from \citet{bruinsma2020Gaussian}. `*' stands for a TLL below -10.}

\begin{tabular}{lccccc}
\toprule
 & \multicolumn{1}{c}{\scshape SE} & \multicolumn{1}{c}{\scshape Mat\'ern--$\frac52$} & \multicolumn{1}{c}{\scshape Weakly Per.} & \multicolumn{1}{c}{\scshape Sawtooth} & \multicolumn{1}{c}{\scshape Mixture}\\

\midrule\multicolumn{6}{l}{\textsc{Interpolation}} \\[0.5em]
\scshape GP (optimum)& $\hphantom{-}0.70 { \pm \scriptstyle 0.00 }$& $\hphantom{-}0.31 { \pm \scriptstyle 0.00 }$& $-0.32 { \pm \scriptstyle 0.00 }$& n/a& n/a\\
\scshape $T(1)-$GeomNDP& $\hphantom{-}\mathbf{0.72} { \pm \scriptstyle 0.03 }$& $\hphantom{-}\mathbf{0.32} { \pm \scriptstyle 0.03 }$& $\mathbf{-0.38} { \pm \scriptstyle 0.03 }$& $\hphantom{-}\mathbf{3.39} { \pm \scriptstyle 0.04 }$& $\hphantom{-}\mathbf{0.64} { \pm \scriptstyle 0.08 }$\\
\scshape NDP\textsuperscript{*}& $\hphantom{-}\mathbf{0.71} { \pm \scriptstyle 0.03 }$& $\hphantom{-}\mathbf{0.30} { \pm \scriptstyle 0.03 }$& $\mathbf{-0.37} { \pm \scriptstyle 0.03 }$& $\hphantom{-}\mathbf{3.39} { \pm \scriptstyle 0.04 }$& $\hphantom{-}\mathbf{0.64} { \pm \scriptstyle 0.08 }$\\
\scshape GNP& $\hphantom{-}\mathbf{0.70} { \pm \scriptstyle 0.01 }$& $\hphantom{-}\mathbf{0.30} { \pm \scriptstyle 0.01 }$& $-0.47 { \pm \scriptstyle 0.01 }$& $\hphantom{-}0.42 { \pm \scriptstyle 0.01 }$& $\hphantom{-}0.10 { \pm \scriptstyle 0.02 }$\\
\scshape ConvCNP& $-0.80 { \pm \scriptstyle 0.01 }$& $-0.95 { \pm \scriptstyle 0.01 }$& $-1.20 { \pm \scriptstyle 0.01 }$& $\hphantom{-}0.55 { \pm \scriptstyle 0.02 }$& $-0.93 { \pm \scriptstyle 0.02 }$\\
\scshape ConvNP& $-0.46 { \pm \scriptstyle 0.01 }$& $-0.67 { \pm \scriptstyle 0.01 }$& $-1.02 { \pm \scriptstyle 0.01 }$& $\hphantom{-}1.20 { \pm \scriptstyle 0.01 }$& $-0.50 { \pm \scriptstyle 0.02 }$\\
\scshape ANP& $-0.61 { \pm \scriptstyle 0.01 }$& $-0.75 { \pm \scriptstyle 0.01 }$& $-1.19 { \pm \scriptstyle 0.01 }$& $\hphantom{-}0.34 { \pm \scriptstyle 0.01 }$& $-0.69 { \pm \scriptstyle 0.02 }$\\
\midrule\multicolumn{6}{l}{\textsc{Generalisation}} \\[0.5em]
\scshape GP (optimum)& $\hphantom{-}0.70 { \pm \scriptstyle 0.00 }$& $\hphantom{-}0.31 { \pm \scriptstyle 0.00 }$& $-0.32 { \pm \scriptstyle 0.00 }$& n/a& n/a\\
\scshape $T(1)-$GeomNDP & $\hphantom{-}\mathbf{0.70} { \pm \scriptstyle 0.02 }$& $\hphantom{-}\mathbf{0.31} { \pm \scriptstyle 0.02 }$& $\mathbf{-0.38} { \pm \scriptstyle 0.03 }$& $\hphantom{-}\mathbf{3.39} { \pm \scriptstyle 0.03 }$& $\hphantom{-}\mathbf{0.62} { \pm \scriptstyle 0.02 }$\\
\scshape NDP\textsuperscript{*} & *& *& *& *& *\\
\scshape GNP& $\hphantom{-}\mathbf{0.69} { \pm \scriptstyle 0.01 }$& $\hphantom{-}\mathbf{0.30} { \pm \scriptstyle 0.01 }$& $-0.47 { \pm \scriptstyle 0.01 }$& $\hphantom{-}0.42 { \pm \scriptstyle 0.01 }$& $\hphantom{-}0.10 { \pm \scriptstyle 0.02 }$\\
\scshape ConvCNP& $-0.81 { \pm \scriptstyle 0.01 }$& $-0.95 { \pm \scriptstyle 0.01 }$& $-1.20 { \pm \scriptstyle 0.01 }$& $\hphantom{-}0.53 { \pm \scriptstyle 0.02 }$& $-0.96 { \pm \scriptstyle 0.02 }$\\
\scshape ConvNP& $-0.46 { \pm \scriptstyle 0.01 }$& $-0.67 { \pm \scriptstyle 0.01 }$& $-1.02 { \pm \scriptstyle 0.01 }$& $\hphantom{-}1.19 { \pm \scriptstyle 0.01 }$& $-0.53 { \pm \scriptstyle 0.02 }$\\
\scshape ANP& $-1.42 { \pm \scriptstyle 0.01 }$& $-1.34 { \pm \scriptstyle 0.01 }$& $-1.33 { \pm \scriptstyle 0.00 }$& $-0.17 { \pm \scriptstyle 0.00 }$& $-1.24 { \pm \scriptstyle 0.01 }$\\

\bottomrule
\end{tabular}
\end{table}

\begin{figure}
    \centering
    \includegraphics{../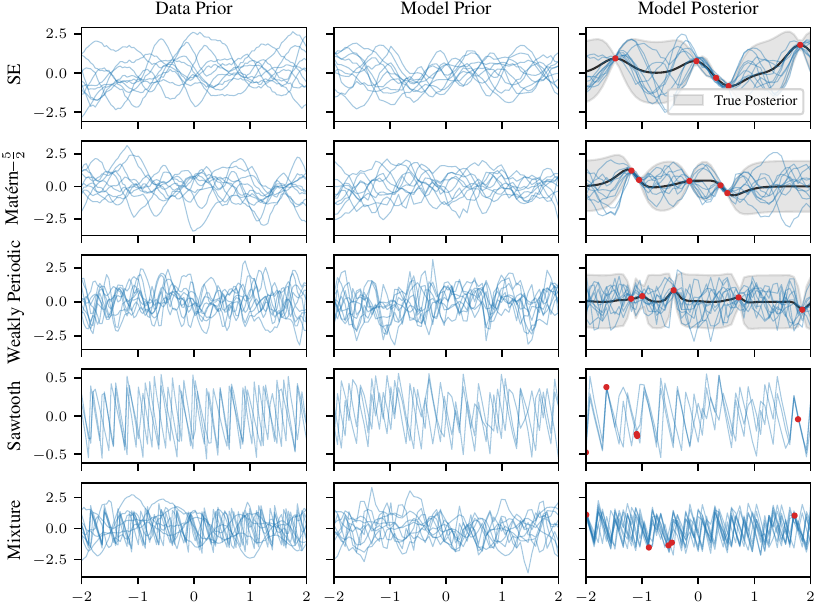}
    \caption{Visualisation of 1D regression experiment.}
    \label{fig:app:regression}
\end{figure}

\begin{figure}
    \begin{subfigure}{\linewidth}
    \centering
    \includegraphics{../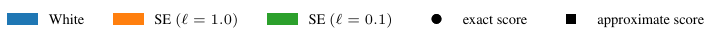}\\\vspace{1mm}
    \includegraphics{../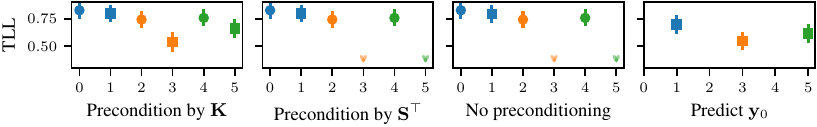}
    \caption{Squared Exponential dataset with lengthscale $\ell=0.25$}
    \end{subfigure}
    \begin{subfigure}{\linewidth}
    \centering
    \includegraphics{../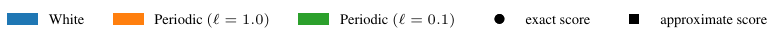}\\\vspace{1mm}
    \includegraphics{../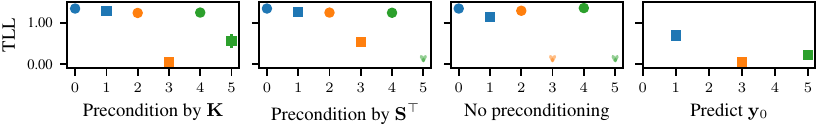}
    \caption{Periodic dataset with lengthscale $\ell=0.25$}
    \end{subfigure}
    \label{fig:app:score-parametrization-ablation}
    \caption{\emph{Ablation study} targeting different limiting kernels and score parametrisations.}
\end{figure}

\begin{figure}
    \centering
    \includegraphics[width=\textwidth]{../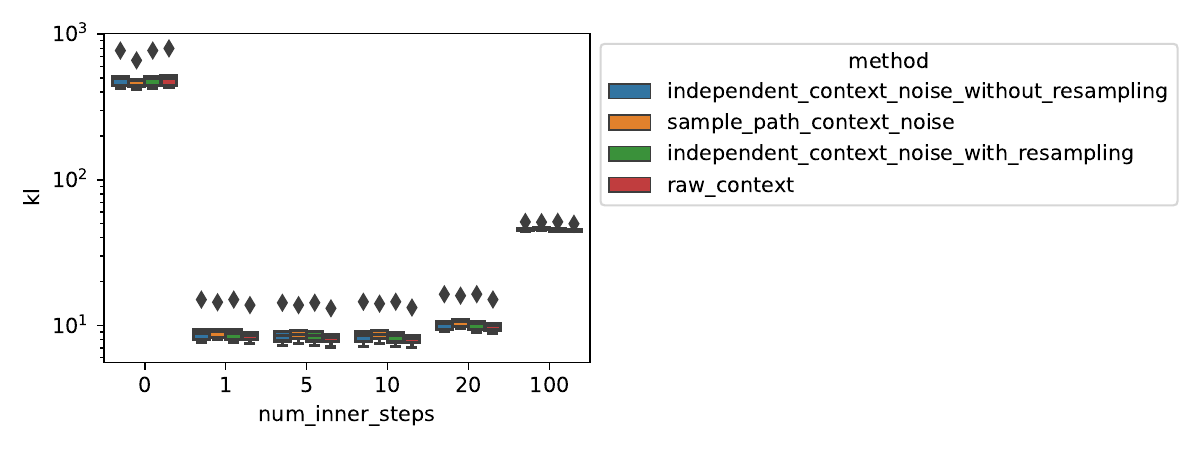}
    \caption{Ablation noising schemes for conditional sampling.}
    \label{fig:app:conditional-ablation}
\end{figure}

\subsection{Gaussian process vector fields}
\label{sec:app_vec_gp}

\paragraph{Data}
We create synthetic datasets using samples from two-dimensional zero-mean GPs with the following $\Etwo$-equivariant kernels:
a diagonal Squared-Exponential ({\scshape SE}) kernel, a zero curl ({\scshape Curl-free}) kernel and a zero divergence ({\scshape Div-free}) kernel, 
as described in \cref{sec:en_equiv_kernels}.
We set the variance to $\sigma^2=1$ and the lengthscale to $\ell=\sqrt{5}$.
We evaluate these GPs on a disk grid, created via a 2D grid with $30 \times 30$ points regularly space on $[-10, 10]^2$ and keeping only the points inside the disk of radius $10$.
We create a training dataset of size $80\times10^3$.
and a test dataset of size $10\times10^3$.

%

%

\paragraph{Models}
We compare two flavours of our model \method\!\!. 
One with a non-equivariant attention-based score network \citep[Figure C.1,][]{dutordoir2022Neural}, referred as {\scshape NDP\textsuperscript{*}}.
Another one with a $\Etwo$-equivariant score architecture, based on steerable CNNs~\citep{thomas2018Tensor,weiler20183Da}.
We rely on the \texttt{e3nn} library \citep{e3nn_paper} for implementation.
A knn graph $\mathcal{E}$ is built with $k=20$.
The pairwise distances are first embed into $\mu(r_{ab})$ with a `smooth\_finite' basis of $50$ elements via \texttt{e3nn.soft\_one\_hot\_linspace}, and with a maximum radius of $2$.
The time is mapped via a sinusoidal embedding $\phi(t)$ \citep{vaswani2017attention}.
Then edge features are obtained as $e_{ab} = \Psi^{(e)}(\mu(r_{ab})||\phi(t)) \ \forall (a, b) \in \mathcal{E}_k$ with $\Psi^{(e)}$ an MLP with $2$ hidden layers of width $64$.
We use $5$ \texttt{e3nn.FullyConnectedTensorProduct} layers with update given by
$V_a^{k+1} = \sum_{b \in \mathcal{N}(a, \mathcal{E}_k)} V_a^{k} \otimes \left(\Psi^{v}(e_{ab}||V_a^k||V_b^k)\right) Y(\hat{r}_{ab})$ 
with $Y$ spherical harmonics up to order $2$m $\Psi^{v}$ an MLP with $2$ hidden layers of width $64$ acting on invariant features,
and node features $V^k$ having irreps \texttt{12x0e + 12x0o + 4x1e + 4x1o}.
Each layer has a gate non-linearity \citep{weiler20183Da}.

We also evaluate two neural processes, a translation-equivariant {\scshape ConvCNP} \citep{gordon2019Convolutional} with decoder architecture based on 2D convolutional layers \citep{lecungradient1998} and a $\mathrm{C}4 \ltimes \R^2 \subset \Etwo$-equivariant {\scshape SteerCNP} \citep{holderrieth2021equivariant} with decoder architecture based on 2D steerable convolutions \citep{weiler2021General}.
Specific details can be found in the accompanying codebase \url{https://github.com/PeterHolderrieth/Steerable_CNPs} of \citet{holderrieth2021equivariant}.

\paragraph{Optimisation.}
Models are trained for $80k$ iterations, via \citep{kingma2015Adam} with a learning rate of $5e-4$ and a batch size of $32$.
The neural diffusion processes are trained unconditionally, that is we feed GP samples evaluated on the full disk grid.
Their weights are updated via with exponential moving average, with coefficient $0.99$.
The diffusion coefficient is weighted by 
$\beta: \ t \mapsto \beta_{\min} + (\beta_{\max} - \beta_{\min}) \cdot t$, and $\beta_{\min}= 1e-4$, $\beta_{\max} = 15$.

As standard, the neural processes are trained by splitting the training batches into a context and evaluation set, similar to when evaluating the models.
Models have been trained on A100-SXM-80GB GPUs.

\paragraph{Evaluation.}
 We measure the predictive log-likelihood of the data process samples under the model on a held-out test dataset.
The context sets are of size 25 and uniformly sampled from a disk grid of size $648$, and the models are evaluated on the complementary of the grid.
For neural diffusion processes, we estimate the likelihood by solving the associated probability flow ODE \eqref{eq:probability_flow_ode}.
The divergence is estimated with the Hutchinson estimator, with Rademacher noise, and 8 samples, whilst the ODE is solved with the 2nd order Heun solver, with $100$ discretisation steps.

We also report the performance of the data-generating {\scshape GP}, and the same GP but with diagonal posterior covariance {\scshape GP (Diag.)}.

\subsection{Tropical cyclone trajectory prediction}
\label{sec:cyclonedetail}

\textbf{Data. } The data is drawn from he International Best330 Track Archive for Climate Stewardship (IBTrACS) Project, Version 4 \cite{IBTrACSv4paper,IBTrACSv4data}. The tracks are taken from the 'all` dataset covering the tracks from all cyclone basins across the globe. The tracks are logged at intervals of every 3 hours. From the dataset, we selected tracks of at least 50 time points long and clipped any longer to this length, resulting in 5224 cyclones. 90\% was used fro training and 10\% held out for evaluation. This split was changed across seeds. More interesting schemes of variable-length tracks or of interest, but not pursued here in this demonstrative experiment. Natively the track locations live in latitude-longitude coordinates, although it is processed into different forms for different models. The time stamps are processed into the number of days into the cyclone forming and this format is used commonly between all models.

\textbf{Models. }

Four models were evaluated. 

The \textsc{GP ($\R\to\R^2$)} took the raw latitude-longitude data and normalised it. Using a 2-output RBF kernel with no covariance between the latitude and longitude and taking the cyclone time as input, placed a GP over the data. The hyperparameters of this kernel were optimised using a maximum likelihood grid search over the data. Note that this model places density outside the bounding box of $[-90, 90] \times [-180, 180]$ that defines the range of latitude and longitude, and so does not place a proper distribution on the space of paths on the sphere.

The \textsc{Stereographic GP ($\R \to \R^2/\{0\}$)} instead transformed the data under a sterographicc projection centred at the north pole, and used the same GP and optimisation as above. Since this model only places density on a set of measure zero that does not correspond to the sphere, it does induce a proper distribution on the space of paths on the sphere.

The \textsc{NDP ($\R\to\R^2$)} uses the same preprocessing as \textsc{GP ($\R\to\R^2$)} but uses a Neural Diffusion Process from \cite{dutordoir2022Neural} to model the data. This has the same shortcomings as the \textsc{GP ($\R\to\R^2$)} in not placing a proper density on the space of paths on the sphere. The network used for the score function and the optimisation procedure is detailed below. A linear beta schedule was used with $\beta_0=1e-4$ and $\beta_1=10$. The reverse model was integrated back to $\epsilon=5e-4$ for numerical stability. The reference measure was a white noise kernel with a variance $0.05$. ODEs and SDEs were discretised with 1000 steps.

The \textsc{\method ($\R\to\c{S}^2$)} works with the data projected into 3d space on the surface of the sphere. This projection makes no difference to the results of the model, but makes the computation of the manifold functions such as the exp map easier, and makes it easier to define a smooth score function on the sphere. This is done by outputting a vector for the score from the neural network in 3d space, and projecting it onto the tangent space of the sphere at the given point. For the necessity of this, see \cite{debortoli2021neurips}. The network used for the score function and the optimisation procedure is detailed below. A linear beta schedule was used with $\beta_0=1e-4$ and $\beta_1=15$. The reverse model was integrated back to $\epsilon=5e-4$ for numerical stability. The reference measure was a white noise kernel with a variance $0.05$. ODEs and SDEs were discretised with 1000 steps.

\textbf{Neural network.} The network used to learn the score function for both \textsc{NDP ($\R\to\R^2$)} and \textsc{\method ($\R\to\c{S}^2$)} is a bi-attention network from \citet{dutordoir2022Neural} with 5 layers, hidden size of 128 and 4 heads per layer. This results in 924k parameters.

\textbf{Optimisation. } \textsc{NDP ($\R\to\R^2$)} and \textsc{\method ($\R\to\c{S}^2$)} were both optimised using (correctly implemented) Adam for 250k steps using a batch size of 1024 and global norm clipping of 1. Batches were drawn from the shuffled data and refreshed each time the dataset was exhausted. A learning rate schedule was used with 1000 warmup steps linearly from 1e-5 to 1e-3, and from there a cosine schedule decaying from 1e-3 to 1e-5. With even probability either the whole cyclone track was used in the batch, or 20 random points were sub-sampled to train the model better for the conditional sampling task.

\textbf{Conditional sampling. } The GP models used closed-form conditional sampling as described. Both diffusion-based models used the Langevin sampling scheme described in this work. 1000 outer steps were used with 25 inner steps. We use a $\psi=1.0$ and $\lambda_0=2.5$. In addition at the end of the Langevin sampling, we run an additional 150 Langevin steps with $t=\epsilon$ as this visually improved performance. 

\textbf{Evaluation. } For the model (conditional) log probabilities the GP models were computed in closed form. For the diffusion-based models, they were computed using the auxiliary likelihood ODE discretised over 1000 steps. The conditional probabilities were computed via the difference between the log-likelihood of the whole trajectory and the log-likelihood of the context set only. The mean squared errors were computed using the geodesic distance between 10 conditionally sampled trajectories, described above.


\end{document}
